\DeclarePairedDelimiterX{\expectarg}[1]{[}{]}{%
  \ifnum\currentgrouptype=16 \else\begingroup\fi
  \activatebar#1
  \ifnum\currentgrouptype=16 \else\endgroup\fi
}
\newcommand{\LinesNumbered}{
  \setboolean{algocf@linesnumbered}{true}%
  \renewcommand{\algocf@linesnumbered}{\everypar={\nl}}}%
\let\oldnl\nl
\newcommand{\nonl}{\renewcommand{\nl}{\let\nl\oldnl}}
\newcommand\tab[1][1cm]{\hspace*{#1}}
\newcommand{\quotes}[1]{``#1''}
\algnewcommand{\Inputs}[1]{%
  \State \textbf{Inputs:}
  \Statex \hspace*{\algorithmicindent}\parbox[t]{.8\linewidth}{\raggedright #1}
}
\algnewcommand{\Initialize}[1]{%
  \State \textbf{Initialize:}
  \Statex \hspace*{\algorithmicindent}\parbox[t]{.8\linewidth}{\raggedright #1}
}
\algnewcommand{\Data}[1]{%
  \State \textbf{Data:}
  \Statex \hspace*{\algorithmicindent}\parbox[t]{.8\linewidth}{\raggedright #1}
}
\DeclareMathOperator*{\argmin}{\arg\!\min}
\newtheorem{theorem}{Theorem}
\newtheorem{lem}{Lemma}
\title{Convergence Analyses of Online ADAM Algorithm in Convex Setting and Two-Layer ReLU  Neural Network}
\author{
  Biyi Fang\\
  Department of Engineering Science and Applied Mathematics\\
  Northwestern University\\
  Evanston, IL 60208 \\
  \texttt{biyifang2021@u.northwestern.edu} \\
  \and
  Diego Klabjan \\
  Department of Industrial Engineering and Management Sciences \\
  Northwestern University\\
  Evanston, IL 60208 \\
  \texttt{d-klabjan@northwestern.edu} \\
}
\begin{document}
\maketitle
\begin{abstract}
Nowadays, online learning is an appealing learning paradigm, which is of great interest in practice due to the recent emergence of large scale applications such as online advertising placement and online web ranking. Standard online learning assumes a finite number of samples while in practice data is streamed infinitely. In such a setting gradient descent with a diminishing learning rate does not work. We first introduce regret with rolling window, a performance metric for online streaming learning, which measures the performance of an algorithm on every fixed number of contiguous samples. At the same time, we propose a family of algorithms based on gradient descent with a constant or adaptive learning rate and provide very technical analyses establishing regret bound properties of the algorithms. We cover the convex setting showing the regret of the order of the square root of the size of the window in the constant and dynamic learning rate scenarios. Our proof is applicable also to the standard online setting where we provide analyses of the same regret order (the previous proofs have flaws). We also study a two layer neural network setting with reLU activation. In this case we establish that if initial weights are close to a stationary point, the same square root regret bound is attainable. We conduct computational experiments demonstrating a superior performance of the proposed algorithms. 

\end{abstract}

\section{Introduction}
Online learning is the process of dynamically incorporating knowledge of the geometry of the data observed in earlier iterations to perform more informative learning in later iterations, as opposed to standard machine learning techniques which provide an optimal predictor after training over the entire dataset.  Online learning is a preferred paradigm in situations where the algorithm has to dynamically adapt to new patterns in the dataset, or when the dataset itself is generated as a function of time, i.e. stock price prediction. Online learning is also used when the dataset itself is computationally infeasible to be trained over the entire dataset.

In standard online learning it is assumed that a finite number of samples is encountered however in real world streaming setting an infinite number of samples is observed (e.g., Twitter is streaming since inception and will continue to do so for foreseeable future). 
The performance of an online learning algorithm on early examples is negligible when measuring the performance or making predictions and decisions on the later portion of a dataset (the performance of an algorithm on tweets from ten years ago has very little bearing on its performance on recent tweets). The problem can be tackled by restarting, however, it is challenging to determine when to restart. For this reason we propose a metric which forgets about samples encountered a long time ago. Consequently, we introduce a performance metric, regret with rolling window, which measures the performance of an online learning algorithm over a possible infinite size dataset. 
This metric also requires an adaptation of prior algorithms, because, for example, a diminishing learning rate has poor performance on an infinite data stream.

Stochastic gradient descent ({\scshape Sgd}) \cite{zinkevich2003online} is a widely used approach in areas of online machine learning, where the weights are updated each time a new sample is received. Furthermore, it requires a diminishing learning rate in order to achieve a high-quality performance. It has been empirically observed that, in order to reduce the impact of the choice of the learning rate and conduct stochastic optimization more efficiently, the adaptive moment estimation algorithm ({\scshape Adam}) \cite{kingma2014adam} and its extensions (\cite{luo2019adaptive},\cite{reddi2018convergence}) are another type of popular methods, which store an exponentially decaying average of past gradients and squared gradients and applies adaptive learning rate. (In standard gradient descent algorithms we use the term learning rate, while in adaptive learning rate algorithms we call stepsize the hyperparameter that governs the scale between the weights and the adjusted gradient.) In spite of this, no contribution has been made to the case where the regret is computed in a rolling window. Moreover, applying a diminishing learning rate or stepsize to regret with rolling window is not a good strategy, otherwise, the performance is heavily dependent on the learning rate or stepsize and the rank of a sample. Namely, regret with rolling window requires a constant learning rate or stepsize.

Standard online setting has been studied in the convex setting. 
With improvements in computational power resulting from GPUs, deep neural networks have been very popular in a variety of AI problems recently. A core application of online learning is online web search and recommender systems \cite{zoghi2017online} where deep learning solutions have recently emerged. At the same time, online learning based on deep neural networks has become an integral role in many stages in finance, from portfolio management, algorithmic trading, to fraud detection, to loan and insurance underwriting. To this end we focus not only on convex loss functions, but also on deep neural networks.  

In this work, we propose a new family of efficient online  subgradient methods for both general convex functions and a two-layer ReLU neural network based on regret with rolling window metric. More precisely, we first present an algorithm, namely convergent  Adam ({\scshape convgAdam}), designed for general strictly convex functions based on gradient descent using adaptive learning rate and inspired by the work of \cite{reddi2018convergence}. {\scshape convgAdam} is a more general algorithm that can dynamically adapt to an arbitrary sequence of strictly convex functions. In the meanwhile, we experimentally show that {\scshape convgAdam} outperforms state-of-the-art, yet non-adaptive, online gradient descent ({\scshape OGD}) \cite{zinkevich2003online}. Then, we propose an algorithm, called deep neural network gradient descent ({\scshape dnnGd}), for a two-layer ReLU neural network. {\scshape dnnGd} takes standard gradient first, then it rescales the weights upon receiving each new sample. Lastly, we introduce a new algorithm, deep neural network Adam ({\scshape dnnAdam}), which uses an adaptive learning rate for the two-layer ReLU neural network. {\scshape dnnAdam} is first endowed with long-term memory by using gradient updates scaled by square roots of exponential decaying moving averages of squared past gradients and then it rescales weights with every new sample.

In this paper, we not only propose a new family of gradient-based online learning algorithms for both convex and non-convex loss functions, but also present a complete technical proof of regret with rolling window for each of them. For strongly convex functions, given a constant learning rate, we show that {\scshape convgAdam} attains regret with rolling window which is proportional to the square root of the size of the rolling window, compared to the true regret $\mathcal{O}(\log(T)\sqrt{T})$ of {\scshape AMSGrad} \cite{reddi2018convergence}. Besides, we not only point out but also fix the problem in the proof of regret for {\scshape AMSGrad} later in this paper. Table \ref{resultsummary} in Appendix \hyperref[correction]{A.2} summarizes all regret bounds in various settings, including the previous flawed analyses. Furthermore, we prove that both {\scshape dnnGd} and {\scshape dnnAdam} attain the same regret with rolling window under reasonable assumptions for the two-layer ReLU neural network. The strongest assumption requires that the angle between the current sample and weight error is bounded away from $\pi/2$. Although {\scshape dnnGd} and {\scshape dnnAdam} require some assumptions, these two algorithms have a higher probability to converge than other flavors of {\scshape Adam} due to the convergence analyses provided in Section 5.
In summary, we make the following five contributions.
\begin{itemize}
\item We introduce regret with rolling window that is applicable in data streaming, i.e., infinite stream of data. 
\item We provide a proof of regret with rolling window which is proportional to the square root of the size of the rolling window when applying {\scshape OGD} to an arbitrary sequence of convex loss functions.
\item We provide a convergent first-order gradient-based algorithm, i.e. {\scshape convgAdam}, employing adaptive learning rate to dynamically adapt to the new patterns in the dataset. Furthermore, given strictly convex functions and a constant stepsize, we provide a complete technical proof of regret with rolling window. Besides, we point out a problem with the proof of convergence of {\scshape AMSGrad} \cite{reddi2018convergence}, which eventually leads to $\mathcal{O}(\log(T)\sqrt{T})$ regret in the standard online setting, and we provide a different analysis for {\scshape AMSGrad} which obtains $\mathcal{O}(\sqrt{T})$ regret in standard online setting by using our proof technique. 
\item We propose a first-order gradient-based algorithm, called {\scshape dnnGd}, for the two-layer ReLU neural network. Moreover, we show that {\scshape dnnGd} shares the same regret with rolling window with {\scshape OGD} when employing a constant learning rate. 
\item We further develop an algorithm, i.e. {\scshape dnnAdam}, based on adaptive estimation of lower-order moments for the two-layer ReLU neural network. At the same time, we argue that {\scshape dnnAdam} shares the same regret with rolling window with {\scshape convgAdam} when employing a constant stepsize.
\item We present numerical results showing that {\scshape convgAdam} outperforms state-of-art, yet not adaptive, {\scshape OGD}.
\end{itemize}
 
The paper is organized as follow. In the next section, we review several works related to {\scshape Adam}, analyses of two-layer neural networks and regret in online convex learning. In Section 3, we state the formal optimization problem in streaming, i.e., we introduce regret with rolling window. In the subsequent section we propose the two algorithms in presence of convex loss functions and we provide the underlying regret analyses. In Section 5 we study the case of deep neural networks as the loss function.  In Section 6 we present experimental results comparing {\scshape convgAdam} with {\scshape OGD}. 

\section{Related Work}
\textit{\textbf{{\scshape Adam} and its variants:}} {\scshape Adam} \cite{kingma2014adam} is one of the most popular stochastic optimization methods that has been applied to convex loss functions and deep networks which is based on using gradient updates scaled by square roots of exponential moving averages of squared past gradients. In many applications, e.g. learning with large output spaces, it has been empirically observed that it fails to converge to an optimal solution or a critical point in nonconvex settings. A cause for such failures is the exponential moving average, which leads {\scshape Adam} to forget about the influence of large and informative gradients quickly \cite{chen2018convergence}. To tackle this issue, {\scshape AMSGrad} \cite{reddi2018convergence} is introduced which has long-term memory of past gradients. {\scshape AdaBound} \cite{luo2019adaptive} is another extension of {\scshape Adam}, which employs dynamic bounds on learning rates to achieve a gradual and smooth transition from adaptive methods to stochastic gradient. Though both {\scshape AMSGrad} \cite{reddi2018convergence} and {\scshape AdaBound} \cite{luo2019adaptive} provide theoretical proofs of convergence in a convex case,  very limited further research related to {\scshape Adam} has be done in a non-convex case while {\scshape Adam} in particular has become the default algorithm leveraged across many deep learning frameworks due to its rapid training loss progress. Unfortunately, there are flaws in the proof of {\scshape AMSGrad}, which is explained in a later section and articulated in Appendix A.\vspace{0.2cm}\\
\textit{\textbf{Two-layer neural network:}} Deep learning achieves state-of-art performance on a wide variety of problems in machine learning and AI. Despite its empirical success, there is little theoretical evidence to support it. Inspired by the idea that gradient descent converges to minimizers and avoids any poor local minima or saddle points (\cite{lee2016gradient}, \cite{lee2017first},  \cite{baldi1989neural}, \cite{goodfellow2016deep}, \cite{kawaguchi2016deep}), Luo \& Wu \cite{wu2018no} prove that there is no spurious local minima in a two-hidden-unit ReLU network. However, Luo \& Wu make an assumption that the 2$^{nd}$ layer is fixed, which does not hold in applications. Li \& Yuan \cite{li2017convergence} also make progress on understanding algorithms by providing a convergence analysis for {\scshape Sgd} on special two-layer feedforward networks with ReLU activations, yet, they specify the 1$^{st}$ layer as begin offset by \quotes{identity mapping} (mimicking residual connections) and the 2$^{nd}$ layer as the $\ell_1$-norm function. Additionally, based on their work \cite{du2017convolutional}, Du et al \cite{du2017gradient} give the 2$^{nd}$ layer more freedom in the problem of learning a two-layer neural network with a non-overlapping convolutional layer and ReLU activation. They prove that although there is a spurious local minimizer, gradient descent with weight normalization can still recover the true parameters with constant probability when given Gaussian inputs. Nevertheless, the convergence is guaranteed when the 1$^{st}$ layer is a convolutional layer. \vspace{0.2cm}\\
\textit{\textbf{Online convex learning:}} Many successful algorithms and associated proofs have been studied and provided over the past few years to minimize regret in online learning setting. Zinkevich \cite{zinkevich2003online} shows that the online gradient descent algorithm achieves regret $\mathcal{O}(\sqrt{T})$, for an arbitrary sequence of $T$ convex loss functions (of bounded gradients) and given a diminishing learning rate. Then, Hazan et al \cite{Hazan:2007:LRA:1296038.1296051} improve regret to $\mathcal{O}(
\log(T))$ when given strictly convex functions and a diminishing learning rate. The idea of adapting first order optimization methods is by no means new and is also popular in online convex learning. Duchi, Hazan \& Singer \cite{duchi2011adaptive} present {\scshape AdaGrad}, which employs very low learning rates for frequently occurring features and high learning rates for infrequent features, and obtain a comparable bound by assuming 1-strongly convex proximal functions.  In a similar framework, Zhu \& Xu \cite{Zhu2015OnlineGD} extend the celebrated online gradient descent algorithm to Hilbert spaces (function spaces) and analyzed the convergence guarantee of the algorithm. The online functional gradient algorithm they propose also achieves regret $\mathcal{O}(\sqrt{T})$ when given convex loss functions. In all these algorithms, the loss function is required to be convex or strongly convex and the learning rate or step size must diminish. However, no work about regret analyses of online learning applied on deep neural networks (non-convex loss functions) has been done.\\
\textit{\textbf{Adaptive regret:}} Recently, adaptive regret has been studied in the setting of prediction with expert advice (PEA) in online learning. Adaptive regret measures the maximum difference of the performances of an online algorithm and the offline optimum for any consecutive $\tau$ samples in total $T$ rounds, while our regret measures the maximum difference of the performances in the whole history. Existing online algorithms are closely related in the sense that adaptive algorithms designed are usually built upon the PEA algorithms. The concept of adaptive regret is formally introduced by Hazan and Seshadhri \cite{hazan2007adaptive}. They also propose a new algorithm named follow the leading history (FLH), which contains an expert-algorithm, a set of intervals and a meta-algorithm. Then Daniely in \cite{daniely2015strongly} extends this idea by introducing strongly adaptive algorithms, which provide a regret bound $\mathcal{O}(\log(s+1)\sqrt{\left|I\right|}$ where s is the end point of the rolling window and $|I|$ is the size of the window. Later on, Zhang in \cite{pmlr-v97-zhang19j} applies the concept of the adaptive learning rate to adaptive regret and proposes adaptive algorithms for convex and smooth functions, and finally obtains a regret bound $\mathcal{O}(\sqrt{\left(\sum_{t=r}^sf_t(\omega) \right)\log(s)\log(s-r)})$, where $f_t(\omega)$ is the loss function for the $t\textit{'th}$ sample given any $\omega$ in the corresponding domain, and $r$ and $s$ are the starting and ending data points of the interested sequence. Notice that in conjunction with infinite streaming, $s$ blows up and eventually dominates the regret bound. Although the concept of the adaptive regret is similar to our regret with rolling window, adaptive regret relies on other existing online algorithms which not only use diminishing learning rate but also bring extra error. In regret with rolling window, we consider these two aspects together (infinite time stream and the issue of learning rates) and propose a new family of online algorithms which use a constant learning rate and achieve a more robust regret. Specifically, our regret bound is $\mathcal{O}(\sqrt{T})$, which does not depend of the position of the window.

\section{Regret with Rolling Window}

We consider the problem of optimizing regret with rolling window, inspired by the standard regret (\cite{zinkevich2003online}, \cite{abernethy2012interior}, \cite{rakhlin2009lecture}). The problem with the traditional regret is that it captures the performance of an algorithm only over a fixed number of samples or loss functions. In most applications data is continuously streamed with an infinite number of future loss functions. The performance over any finite number of consecutive loss functions $T$ is of interest. The concept of regret is to compare the optimal offline algorithm with access to $T$ contiguous loss functions with the performance of the underlying online algorithm. Regret with rolling window is to find the maximum of all differences between the online loss and the loss of the offline algorithm for any $T$ contiguous samples. More precisely, for an infinite sequence $\left\{z^t,y^t\right\}_{t=1}^{\infty}$, where each feature vector $z^t\in\mathbb{R}^d$ is associated with the corresponding label $y^t$, given fixed $T$ and any $p$, we first define $\omega_p^*\in \argmin_{\omega}\sum_{t=p}^{p+T}f_t(\omega)$, which corresponds to the optimal solution of the offline algorithm. In general, $f_t(\omega)=\mathrm{loss}(x^t,y^t;\omega)$, e.g. $f_t(\omega)=\left\|\omega^Tx^t-y^t\right\|^2$ if the linear regression model is applied and the mean square error is used.
Then, we consider
\begin{align}
    \max_{p\in\mathbb{N}} R_p(T):=\sum_{t=p}^{T+p} l_t(\omega_t)
\label{goal}
\end{align}
with $l_t(\omega_t) =f_t(\omega_t)-f_t(\omega_p^*) $, where $f_t$ is a function of sample $z^t$. The regret with rolling window metric captures regret over every $T$ consecutive loss functions and it is aiming to assess the worst possible regret over every such sequence. Note that if we have only $T$ loss functions corresponding only to $p=1$, then this is the standard regret definition in online learning. The goal is to develop algorithms with low regret with rolling window. We prove that regret with rolling window can be bounded by $\mathcal{O}(\sqrt{T})$. In other words, the average regret with rolling window approaches zero.

\section{Convex Setting}

In the convex setting, we propose two algorithms with a different learning rate or stepsize strategy and analyze them with respect to (\ref{goal}) in the streaming setting.

\subsection{Algorithms}

Algorithms in standard online setting are almost all based on gradient descent where the parameters are updated after each new loss function is received based on the gradient of the current loss function. A challenge is the strategy to select an appropriate learning rate. In order to guarantee good regret the learning rate is usually decaying.  
In the streaming setting, we point out that a decaying learning rate is improper since far away samples (very large $p$) would get a very small learning rate implying low consideration to such samples. In conclusion, the learning rate has to be a constant or it should follow a dynamically adaptive learning algorithm, i.e. {\scshape ADAM}. The algorithms we provide for solving (\ref{goal}) in the streaming setting are based on gradient descent and one of the just mentioned learning rate strategies. 

In order to present our algorithms, we first need to specify notation and parameters. In each algorithm, we denote by $\eta$ and $g_t$ the learning rate or stepsize and a subgradient of loss function $f_t$ associated with sample $(z^t,y^t)$, respectively. Additionally, we employ $\odot$ to represent the element-wise multiplication between two vectors or matrices (Hadamard product). However, for other operations we do not introduce new notation, e.g., for element-wise division ($/$) and square root ($\sqrt{\mathrm{\,\,}}$), since these two operations are written differently when representing standard matrix or vector operations. 


We start with {\scshape OGD} which mimics gradient descent in online setting and achieves $\mathcal{O}(\sqrt{T})$ regret with rolling window when given constant learning rate. Algorithm \ref{SGD} is a twist on Zinkevich's {\scshape Online Gradient Descent} \cite{zinkevich2003online}. {\scshape OGD} updates its weight when a new sample is received in step~\ref{SGDupdate}. In addition, {\scshape OGD} uses constant learning rate in the streaming setting so as to efficiently and dynamically learn the geometry of the dataset. Otherwise, if a diminishing learning rate is applied, {\scshape OGD} misses informative samples which arrive late due to the extremely small learning rate and leads to $\mathcal{O}(T)$ regret with rolling window (this is trivial to observe if the loss functions are bounded). Regret of $\mathcal{O}(\sqrt{T})$ is achieved in the streaming setting if learning rate $\eta=1/\sqrt{T}$.

Constant learning rates have a drawback by treating all features equally. Consequently, we adapt {\scshape ADAM} to online setting and further extend it to streaming. Algorithm \ref{convgADAM}, inspired by {\scshape ADAM} \cite{kingma2014adam} and {\scshape AMSGrad} \cite{reddi2018convergence}, has regret with rolling window also of the order $\mathcal{O}(\sqrt{T})$ given constant stepsize $\eta$ as shown in the next section. The key difference of {\scshape convgADAM} with {\scshape AMSGrad} is that it maintains the same ratio of the past gradients and the current gradient instead of putting more and more weight on the current gradient and losing the memory of the past gradients fast. In Algorithm \ref{convgADAM}, {\scshape convgAdam} records exponential moving average of gradients and moments in step~\ref{convg:m} and~\ref{ADAM:vupdate}. Step~\ref{ADAm:max} guarantees that $\hat{v}_t$ is the maximum of all $v_t$ until the present time step. Then, step~\ref{ADAM:update} gives the adaptive update rule by using the maximum value of $v_t$ to normalize the running average of the gradient at time $t$. Besides, constant stepsize $\eta$ is crucial to make {\scshape convgAdam} well-performed due to the aforementioned reason with a potential decaying learning rate or stepsize. 

In step~\ref{ADAM:update}, we observe that $\hat{v}_{ti} = 0$ for a feature $i$ implies $m_{ti} = 0$, therefore, we retain the foregoing weight $\omega_{ti}$ as the succeeding weight $\omega_{t+1,i}$. In other words, in this case we define $\frac{\eta}{\sqrt{\hat{v}_{ti}}}\cdot m_{ti} = \frac{\eta}{\sqrt{0}}\cdot 0 = 0$.

\begin{minipage}[t]{0.46\textwidth}
\vspace{0pt} 
\begin{algorithm}[H]
\caption{{\scshape Online Gradient Descent}}
\label{SGD}
\begin{algorithmic}[1]
\State $\textit{Positive\,parameter\,\,} \eta$
\For{$t=0,1,2,\cdots$}
\State $g_t=\bigtriangledown f_t(w_t)$
\State $w_{t+1} = w_t-\eta g_t$\label{SGDupdate}
\EndFor
\State \textbf{end for}
\end{algorithmic}
\end{algorithm}
\end{minipage}
\hfill
\begin{minipage}[t]{0.46\textwidth}
\vspace{0pt} 
\begin{algorithm}[H]
\caption{{\scshape convergent Adam}}
\label{convgADAM}
\begin{algorithmic}[1]
\State $\textit{Positive parameters\,\,} \eta,\beta_1<1,\beta_2<1$
\State $\textit{Set\,\,} m_0=0, v_0=0, \text{\,\,and\,\,}\hat{v}_0=0$
\For{$t=0,1,2,\cdots,$}
\State  $g_t=\bigtriangledown f_t(w_t)$\label{adam:s}
\State  $m_t=\beta_1m_{t-1}+(1-\beta_1)g_t$\label{convg:m}
\State  $v_t=\beta_2v_{t-1}+(1-\beta_2)g_t\odot g_t$\label{ADAM:vupdate}
\State  $\hat{v}_t=\mathrm{max}(\hat{v}_{t-1},v_t)$
\label{ADAm:max}
\State $w_{t+1} = w_t-\frac{\eta}{\sqrt{\hat{v}_t}}\odot m_t$\label{ADAM:update}
\EndFor
\State \textbf{end for}
\end{algorithmic}
\end{algorithm}
\end{minipage}

\subsection{Analyses}

In this section, we provide regret analyses of {\scshape OGD} and {\scshape convgAdam} showing that both of them attain regret with rolling window which is proportional to the square root of the size of the rolling window given a constant learning rate or stepsize in the streaming setting. For inner (scalar) products, given the fact that $\left\langle a,b \right\rangle = a^Tb=b^Ta$ for two vectors $a$ and $b$, in the rest of the paper, for short expressions we use $a^Tb$, but for longer we use $\left\langle a,b \right\rangle$.

We require the following standard assumptions. 
\subsubsection*{Assumption 1:}
\label{as:1}
\begin{enumerate}
\item \label{as1:1} 
There exists a constant $D_{\infty}$, such that
$\left\|\omega_t\right\|_{\infty}\leq \frac{D_{\infty}}{2}$, 
for any $t\in\mathbb{N}$. 
\item \label{as1:2} The loss gradients $\triangledown f_t(\omega_t)$ are bounded, i.e., for all $\omega_t$ such that $\left\|\omega_t\right\|_{\infty}\leq \frac{D_{\infty}}{2}$, we have
$\left\| \triangledown f_t(\omega_t) \right\|_{\infty}\leq G_{\infty}$.
\item \label{as1:3} Functions $f_t(\omega)$ are convex and differentiable with respect to $\omega$ for every $t\in\mathbb{N}$.
\item \label{as1:4} Functions $f_t(\omega)$ are strongly convex with parameter $H$, i.e., for all $\bar{\omega}$ and $\tilde{\omega}$, and for $t\in\mathbb{N}$, it holds
$f_t(\bar{\omega})+\bigtriangledown f_t(\bar{\omega})(\tilde{\omega}-\bar{\omega})+\frac{H}{2}\left\|\tilde{\omega}-\bar{\omega} \right\|^2\leq f_t(\tilde{\omega})$.
\end{enumerate}

The first condition in Assumption \hyperref[as:1]{1} assumes that $\omega_t$ are bounded. This assumption can be removed by further complicating certain aspects of the upcoming proofs. This extension is discussed in Appendix \hyperref[projection]{A.1} for the sake of clarity of the algorithm. In \ref{as1:2} from Assumption \hyperref[as:1]{1}, the gradient of the loss function is requested to be upper bounded. Notice that each loss function $f_t$ is enforced to be differentiable and convex in \ref{as1:3}, whereas $f_t$ is required to be strictly convex with parameter $H$ in \ref{as1:4}. All these are standard assumptions. 

We first provide the regret analysis of {\scshape OGD}. 

\begin{theorem}
\label{thm1}
If \ref{as1:1}-\ref{as1:3} in Assumption \hyperref[as:1]{1} hold, and $\eta=\frac{\eta_1}{\sqrt{T}}$ for any positive constant $\eta_1$, the sequence $\omega_t$ generated by {\scshape OGD} achieves $\max_{p\in\mathbb{N}}R_p(T)\leq \mathcal{O}(\sqrt{T})$.
\end{theorem}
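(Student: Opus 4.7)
My approach is the classical Zinkevich-style potential argument, executed with care so that every constant in the final bound is independent of the window index $p$. Fix an arbitrary $p \in \mathbb{N}$ and let $\omega_p^* \in \argmin_{\omega}\sum_{t=p}^{p+T} f_t(\omega)$ be the offline minimizer on the window $[p,p+T]$.

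First, I would expand $\|\omega_{t+1} - \omega_p^*\|_2^2$ using the update $\omega_{t+1} = \omega_t - \eta g_t$ and rearrange to the standard identity
\begin{align*}
\langle g_t, \omega_t - \omega_p^* \rangle \;=\; \tfrac{1}{2\eta}\bigl(\|\omega_t - \omega_p^*\|_2^2 - \|\omega_{t+1} - \omega_p^*\|_2^2\bigr) + \tfrac{\eta}{2}\|g_t\|_2^2.
\end{align*}
Convexity of $f_t$ (\ref{as1:3}) then yields $l_t(\omega_t) = f_t(\omega_t) - f_t(\omega_p^*) \le \langle g_t, \omega_t - \omega_p^*\rangle$. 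Summing over $t=p,\ldots,p+T$, the distance terms telescope and, after discarding the nonnegative $\|\omega_{p+T+1}-\omega_p^*\|_2^2$ term, I obtain
\begin{align*}
R_p(T) \;\le\; \tfrac{1}{2\eta}\|\omega_p - \omega_p^*\|_2^2 + \tfrac{\eta}{2}\sum_{t=p}^{p+T}\|g_t\|_2^2.
\end{align*}

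Second, I would bound each surviving quantity by a $p$-independent constant. The $\ell_\infty$ weight bound \ref{as1:1} applied to both $\omega_p$ and $\omega_p^*$ controls $\|\omega_p-\omega_p^*\|_2^2$ by a constant multiple of the ambient dimension, and \ref{as1:2} gives the same sort of bound on each $\|g_t\|_2^2$. Plugging in $\eta = \eta_1/\sqrt{T}$ and combining the two terms produces a bound of the form $C_1\sqrt{T}/\eta_1 + C_2\eta_1(T{+}1)/\sqrt{T} = \mathcal{O}(\sqrt{T})$, with constants independent of $p$. Taking the supremum over $p$ then preserves the rate and delivers the claim.

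\textbf{Where the difficulty lies.} The calculations themselves are essentially routine SGD regret arithmetic; the conceptual subtlety introduced by the rolling-window setting is that both the comparator $\omega_p^*$ and the starting iterate $\omega_p$ of the window depend on $p$. Consequently, one must be sure that the initial potential $\|\omega_p-\omega_p^*\|_2^2$ does not grow with $p$ (as it could, say, if the iterates drifted). Assumption \ref{as1:1} is exactly what makes this uniform control immediate. Absent \ref{as1:1}, one would instead need to introduce a projection onto a bounded set and verify that the projected iterates stay uniformly bounded across all windows, which is presumably the complication treated in Appendix A.1.
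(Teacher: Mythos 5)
Your proposal is correct and follows essentially the same route as the paper's proof: the standard expansion of $\|\omega_{t+1}-\omega_p^*\|^2$, the convexity inequality, telescoping, and bounding the surviving terms via conditions \ref{as1:1} and \ref{as1:2} with $\eta=\eta_1/\sqrt{T}$. Your added remark that the constants must be uniform in $p$ (so that the supremum over windows preserves the rate) is exactly the point the paper relies on implicitly, so there is nothing to correct.
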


The proof is provided in Appendix \hyperref[proof:thm1]{B}.
Under the assumptions in Assumption \hyperref[as:1]{1}, by finding a relationship for the sequence of the weight error $\omega_t-\omega_p^*$ and employing the property of convexity from condition \ref{as1:3} from Assumption \hyperref[as:1]{1}, we prove that {\scshape OGD} obtains the regret with rolling window which is proportional to the square root of the size of the rolling window when given the constant learning rate. This is consistent with the regret of {\scshape OGD} in the standard online setting. 

The analysis of {\scshape OGD} is not totally new but still has some important differences. Also, when using a diminishing learning rate, given strongly convex $f_t$, it has been proven in \cite{Hazan:2007:LRA:1296038.1296051} by Hazan that OGD obtains logarithmic regret. However, this is not possible even given strongly convex $f_t$ when using a constant learning rate, which should be clear after reading our regret analysis of {\scshape OGD} and comparing it with the regret analysis in \cite{Hazan:2007:LRA:1296038.1296051}.

If {\scshape OGD} is an analogue to the Gradient Descent optimization method for the online setting, then {\scshape convgAdam} is an online analogue of {\scshape Adam}, which dynamically incorporates knowledge of the characteristics of the dataset received in earlier iterations to perform more informative gradient-based learning. Next, we show that {\scshape convgAdam} achieves the same regret with rolling window given a constant stepsize.

\begin{theorem}
\label{thm2}
If Assumption \hyperref[as:1]{1} holds, and $\beta_1$ and $\beta_2$ are two constants between 0 and 1 such that $\lambda:=\frac{\beta_1}{\sqrt{\beta_2}}<1$ and $\beta_1\leq \frac{H\eta}{1+H\eta}$, then for $\eta=\frac{\eta_1}{\sqrt{T}}$ for any positive constant $\eta_1$, the sequence $\omega_t$ generated by {\scshape convgAdam} achieves $\max_{p\in\mathbb{N}}R_p(T)\leq \mathcal{O}(\sqrt{T})$.
\end{theorem}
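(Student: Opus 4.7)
My plan is to adapt the AMSGrad-style analysis to the rolling-window setting, using the strong convexity assumption \ref{as1:4} to absorb the extra quadratic terms introduced by the momentum decomposition, and then exploiting $\lambda<1$ to control the momentum-squared sum. First, I would start from the coordinate-wise update $\omega_{t+1,i}=\omega_{t,i}-\frac{\eta}{\sqrt{\hat{v}_{t,i}}}m_{t,i}$, square both sides after subtracting $\omega^*_{p,i}$, and rearrange to get
\[
 m_{t,i}(\omega_{t,i}-\omega^*_{p,i})=\frac{\sqrt{\hat{v}_{t,i}}}{2\eta}\bigl[(\omega_{t,i}-\omega^*_{p,i})^2-(\omega_{t+1,i}-\omega^*_{p,i})^2\bigr]+\frac{\eta}{2\sqrt{\hat{v}_{t,i}}}m_{t,i}^2.
\]
Next, using the recursion $g_t=(m_t-\beta_1 m_{t-1})/(1-\beta_1)$ together with strong convexity in the form $f_t(\omega_t)-f_t(\omega_p^*)\le g_t^{T}(\omega_t-\omega_p^*)-\tfrac{H}{2}\|\omega_t-\omega_p^*\|^2$, I would write each term of $R_p(T)$ as a combination of the identity above, a similar expression involving $m_{t-1}$, and the strong-convexity penalty. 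The $m_{t-1}$ cross term is bounded via Young's inequality as $-\beta_1 m_{t-1,i}(\omega_{t,i}-\omega^*_{p,i})\le \tfrac{\beta_1\sqrt{\hat{v}_{t-1,i}}}{2\eta}(\omega_{t,i}-\omega^*_{p,i})^2+\tfrac{\beta_1\eta}{2\sqrt{\hat{v}_{t-1,i}}}m_{t-1,i}^2$.

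The second step is to sum from $t=p$ to $t=p+T$ coordinate by coordinate. The telescoping in $(\omega_{t,i}-\omega^*_{p,i})^2$ is only approximate because of the varying weights $\sqrt{\hat{v}_{t,i}}$, but monotonicity of $\hat{v}_t$ together with $\sqrt{\hat{v}_{t,i}}\le G_\infty$ yields a leftover of the form $\frac{1}{2\eta(1-\beta_1)}\sum_{t}[\sqrt{\hat{v}_{t,i}}-(1-\beta_1)\sqrt{\hat{v}_{t-1,i}}](\omega_{t,i}-\omega^*_{p,i})^2$. Here the condition $\beta_1\le H\eta/(1+H\eta)$, equivalent to $\beta_1/(1-\beta_1)\le H\eta$, is precisely what is needed so that, after dividing by $1-\beta_1$ and combining with the $-\tfrac{H}{2}\|\omega_t-\omega_p^*\|^2$ strong-convexity contributions summed over $t$, the positive quadratic residues are dominated and drop out. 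Boundedness of $\omega_t$ by \ref{as1:1} keeps the remaining boundary terms at most $\mathcal{O}(D_\infty^2 G_\infty/\eta)$, which with $\eta=\eta_1/\sqrt{T}$ is $\mathcal{O}(\sqrt{T})$.

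The third step is to handle the momentum-squared pieces $\sum_{t}\frac{\eta}{\sqrt{\hat{v}_{t,i}}}m_{t,i}^2$ and $\sum_{t}\frac{\beta_1\eta}{\sqrt{\hat{v}_{t-1,i}}}m_{t-1,i}^2$. Writing $m_{t,i}=(1-\beta_1)\sum_{k\le t}\beta_1^{t-k}g_{k,i}$ and applying Cauchy--Schwarz in the form $m_{t,i}^2\le (1-\beta_1)\sum_{k\le t}\beta_1^{t-k}g_{k,i}^2$, and using $\hat{v}_{t,i}\ge (1-\beta_2)\beta_2^{t-k}g_{k,i}^2$ on each summand, one converts the ratio into a geometric series in $\lambda=\beta_1/\sqrt{\beta_2}$; the assumption $\lambda<1$ makes it telescope and ultimately yields a bound of the form $C\eta\sum_{t}\sqrt{g_{t,i}^2/(1-\beta_2)}\le C'\eta\,G_\infty\,T$, which under $\eta=\eta_1/\sqrt{T}$ is again $\mathcal{O}(\sqrt{T})$. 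Summing over the $d$ coordinates gives dimension constants but keeps the $T$-dependence. All constants are independent of $p$, so taking the maximum over $p$ preserves the bound.

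The hardest part, I expect, will be step two: making the absorption of the $(\omega_{t,i}-\omega^*_{p,i})^2$ coefficients work uniformly in $p$ despite the $\hat{v}_{t,i}$ weights being non-constant and the window starting at an arbitrary $p$ where $m_{p-1}$ and $\hat{v}_{p-1}$ are not zero. Boundary terms like $\sqrt{\hat{v}_{p-1,i}}(\omega_{p,i}-\omega^*_{p,i})^2/\eta$ and $\beta_1 m_{p-1,i}^2/\sqrt{\hat{v}_{p-1,i}}$ must be shown to be $p$-independent, which is where Assumption \hyperref[as:1]{1} parts \ref{as1:1}--\ref{as1:2} (ensuring $|m_{t,i}|\le G_\infty$ and $\sqrt{\hat{v}_{t,i}}\le G_\infty$) become essential, and where some care is needed when $\hat{v}_{t-1,i}$ can vanish (using the convention after step~\ref{ADAM:update} that $m_{t,i}=0$ on those coordinates).
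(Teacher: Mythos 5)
Your proposal is correct and follows essentially the same route as the paper's Appendix C proof: the same rearrangement of the coordinate-wise update, the same decomposition $g_t=(m_t-\beta_1 m_{t-1})/(1-\beta_1)$ with strong convexity absorbing the $\|\omega_t-\omega_p^*\|^2$ residue via $\beta_1\le H\eta/(1+H\eta)$, the same telescoping of the $\sqrt{\hat v_t}$-weighted distances using monotonicity of $\hat v_t$ and $\sqrt{\hat v_{t,i}}\le G_\infty$, and the same geometric-series-in-$\lambda$ bound on the momentum-squared sums (the paper's Lemmas \ref{lem1} and \ref{lem2}, including the split of the pre-window contribution for arbitrary $p$). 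The one small discrepancy is that your Young's inequality places the weight $\sqrt{\hat v_{t-1,i}}$ on the quadratic term, so your absorption step would actually require $\beta_1\sqrt{\hat v_{t-1,i}}/(1-\beta_1)\le H\eta$ (an extra factor of up to $G_\infty$); the paper applies the unweighted split $-\frac{\beta_1}{1-\beta_1}\langle\omega_t-\omega^*,m_{t-1}\rangle\le\frac{\beta_1}{1-\beta_1}\bigl[\frac{\|\omega_t-\omega^*\|^2}{2\eta}+\frac{\eta\|m_{t-1}\|^2}{2}\bigr]$, which is exactly what makes the stated hypothesis sufficient as written.
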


The proof is provided in  Appendix \hyperref[proof:thm2]{C}.
The very technical proof follows the following steps. Based on the updating procedure in steps~\ref{adam:s}-\ref{ADAM:update}, we establish a relationship for the sequence of the weight error $\omega_t-\omega_p^*$. Meanwhile, considering condition \ref{as1:4} in Assumption \hyperref[as:1]{1}, we obtain another relationship between the loss function error $f_t(\omega_t)-f_t(\omega_p^*)$ and $\left\langle \omega_t-\omega_*,\bigtriangledown f_t(\omega_t)\right\rangle$. Assembling these two relationships provide a relationship between the weight error $\omega_t-\omega_*$ and the loss function error $f_t(\omega_t)-f_t(\omega_*)$. By deriving upper bounds for all of the remaining terms based on conditions from Assumption \hyperref[as:1]{1}, we are able to argue the same regret with rolling window of $\mathcal{O}(\sqrt{T})$. 

In the regret analysis of {\scshape AMSGrad} \cite{reddi2018convergence}, the authors forget that the stepsize is $\frac{1}{\sqrt{t}}$ and take the hyperparameter $\eta$ to be exponentially decaying for granted without assumptions which eventually leads to $\mathcal{O}(T\sqrt{T})$ regret in standard online setting. Our analysis is flexible enough to extend to {\scshape AMSGrad} and a slight change to our proof yields the $\mathcal{O}(\sqrt{T})$ regret for {\scshape AMSGrad}. The changes in our proof to accommodate standard online setting and {\scshape AMSGrad} are stated in Appendix \hyperref[correction]{A.2}. Moreover, the proof of convergence of {\scshape AMSGrad} in \cite{reddi2018convergence} uses a diminishing stepsize while our proof is valid for both constant and diminishing stepsizes. Likewise, for {\scshape AdaBound} \cite{luo2019adaptive}, the right scale of the stepsize is also missed and the regret should be $\mathcal{O}(T)$, which is discussed in more detail in Appendix \hyperref[correction]{A.2}. In this section we also discuss how to amend our proof to provide the $\mathcal{O}(\sqrt{T})$ regret bound in standard online setting for {\scshape AdaBound}.

Theorem \ref{thm2} guarantees that {\scshape convgAdam} achieves the same regret with rolling window as {\scshape OGD} for convex loss functions. On the other hand, very limited work has been done about regret for nonconvex loss functions, e.g. the loss function of a two-layer ReLU neural network. In the following section, we argue that both {\scshape dnnGD} and {\scshape dnnAdam} attain the same regret with rolling window if the initial starting point is close to an optimal offline solution and by using a constant learning rate or stepsize. In addition to a favorable starting point, further assumptions are needed. 

\section{Two-Layer ReLU Neural Network}

In this section we consider a two layer neural network with the first hidden layer having an arbitrary number of neurons and the second hidden layer has a single neuron. The underlying activation function is a probabilistic version of ReLU and minimum square error is considered as the loss function. 
First of all, the optimization problem of such a two-layer ReLU neural network is neither convex nor convex (and clearly non linear), therefore, it is very hard to find a global minimizer. Instead, we show that our algorithms achieve $\mathcal{O}(\sqrt{T})$ regret with rolling window when the initial point is close enough to an optimal solution.

Neural networks as classifiers have a lot of success in practice, whereas a formal theoretical understanding of the mechanism of why they work is largely missing. Studying a general neural network is challenging, therefore, we focus on the proposed two-layer ReLU neural network. For a dataset $\left\{z^t, y^t \right\}_{t=1}^{\infty}$, the standard loss function of the two-layer neural network is $f_t\left(\omega_{1,t},\omega_{2,t}\right)=\frac{\left\|\omega_{1,t}^T\sigma\left(\omega_{2,t}z^t\right)-y^t\right\|^2}{2}$,
where $\sigma$ represents the ReLU activation function applied element-wise, $\omega_{1,t}$ is the parameter vector, and $\omega_{2,t}$ is the parameter matrix. It turns out that ReLU is challenging to analyze since nesting them yields many combinations of the various values being below zeros. One way to get around this is to consider a probabilistic version of ReLU and capturing expected loss, Luo \& Wu \cite{wu2018no}.

To this end we treat ReLU as a random Bernoulli variable in the sense that Pr$(\sigma(x)=x) = \rho$, Pr$(\sigma(x)=0)=1-\rho$. 
Luo \& Wu \cite{wu2018no} in the standard offline setting analyze $f_t\left(\omega_{1,t},\omega_{2,t}\right)$ for the probabilistic version of ReLU. For our online analyses we need to slightly alter the setting by 
introducing two independent identically distributed random variables $\sigma_1$ and $\sigma_2$ and the loss function as follows 
\begin{align}
f_t(\omega_{1,t},\omega_{2,t}) =\frac{ \left(\omega_{1,t}^T\sigma_1\left(\omega_{2,t} z^t\right)-y^t \right) \left(\omega_{1,t}^T\sigma_2\left(\omega_{2,t} z^t\right)-y^t \right)}{2}.
\label{loss:non}
\end{align}
There is a crucial property of $f_t$ which is positive-homogeneity. That is, for any $c>0$, $f_t(c\omega_1,\frac{\omega_2}{c}) = f_t(\omega_1,\omega_2)$. This property allows the network to be rescaled without changing the function computed by the network.

For two-layer ReLU neural network, given $(\omega_{1,*}^p,\omega_{2,*}^p) \in \argmin_{\omega}\sum_{t=p}^{p+T}\mathop{{}\mathbb{E}_{\sigma_1,\sigma_2}} f_t(\omega_{1,t},\omega_{2,t})$, we consider regret with rolling window as
\begin{align}
\label{obj:non1}
    \max_{p\in\mathbb{N}}\min_{\substack{\left(\omega_{1,t}\right)_{t\in\mathbb{N}},\left(\omega_{2,t}\right)_{t\in\mathbb{N}}\\ \left\|\omega_{1,t}\right\|=1}} R_p(T):=\sum_{t=p}^{T+p} \mathop{{}\mathbb{E}_{\sigma_1,\sigma_2}}\left[l_t\left(\omega_{1,t},\omega_{2,t} \right)\right].
\end{align}

Next, we propose two algorithms with different learning rates or stepsizes for the two-layer neural network and analyze them with respect to (\ref{obj:non1}). 

\subsection{Algorithms}
In order to present the algorithms, let us first introduce notation and parameters. For any matrix $A$ and vector $x$, let $[A]_{ij}$ and $[x]_i$ denote the element in the $i_{\mathrm{th}}$ row and $j_{\mathrm{th}}$ column of matrix $A$ and $i_{\mathrm{th}}$ coordinate of vector $x$, respectively. Similarly, we use $[A]_{\cdot j}$ ($[A]_{i\cdot}$) to represent the $j_{\mathrm{th}}$ column ($i_{\mathrm{th}}$ row) of matrix $A$. Next, in order to be consistent, we also denote $\eta$ and $g_{1,t},g_{2,t}$ as the learning rate or stepsize and a subgradient of loss function $f_t$ associated with sample $(z^t,y^t)$, respectively. Let $\xi_1$ and $\xi_2$ be constants. 
Lastly, in order to be consistent with the notation in the convex setting, we employ $\odot$ to represent the element-wise multiplication between vectors or matrices while using standard division and square root notation for the corresponding operations element-wise in vectors and matrices. 
 
We start with {\scshape dnnGd} which is the algorithm with a fixed learning rate, Algorithm \ref{dnn GD}. 
We show later that its regret with rolling window is $\mathcal{O}(\sqrt{T})$. {\scshape dnnGD} is an analogue of the gradient descent optimization method for the online setting with the two-layer ReLU neural network, and at the same time it is an extension of {\scshape OGD}. Different from {\scshape OGD}, {\scshape dnnGD} not only modifies weights at a given iteration by following the gradient direction, but it also rescales weights based on the domain constraint in step~\ref{nonconvexSGD1}, i.e. $\omega_{1,t}$ has a fixed norm. Then, $\omega_{2,t}$ is rescaled at the same time to impose positive-homogeneity in step~\ref{nonconvexSGD2}. 

\begin{algorithm}[H]
\caption{{\scshape Deep NN Gradient Descent}}
\label{dnn GD}
\begin{algorithmic}[1]
\State $\textit{Positive parameter\,\,} \eta>0$
\For{$t=0,1,2,\cdots$}
\State $\textrm{Sample}\,\,\sigma_1,\,\sigma_2$
\State $g_{1,t}=\frac{1}{2}\left(\omega_{1,t}^T\sigma_1\left(\omega_{2,t}z^t \right)-y^t\right)\sigma_2\left(\omega_{2,t}z^t\right) +
\frac{1}{2}\left(\omega_{1,t}^T\sigma_2\left(\omega_{2,t}z^t \right)-y^t\right)\sigma_1\left(\omega_{2,t}z^t\right) $\label{dnn:g1}
\State $g_{2,t}=\frac{1}{2}\left(\omega_{1,t}^T\sigma_1\left(\omega_{2,t}z^t \right)-y^t\right)\omega_{1,t}\left(\sigma_2\left(z^t 
\right)\right)^T + \frac{1}{2}\left(\omega_{1,t}^T\sigma_2\left(\omega_{2,t}z^t \right)-y^t\right)\omega_{1,t}\left(\sigma_1\left(z^t
\right)\right)^T $\label{dnn:g2}
\State $\omega_{1,t+1} =\frac{\omega_{1,t}-\eta g_{1,t}}{\left\| \omega_{1,t}-\eta g_{1,t} \right\|\mathbin{/} \sqrt{\frac{1}{2}+\xi_1 }}$\label{nonconvexSGD1}
\State $\omega_{2,t+1} = \left(\omega_{2,t}-\eta g_{2,t}\right)\cdot \left[\left\| \omega_{1,t}-\eta g_{1,t} \right\|\mathbin{/} \sqrt{\frac{1}{2}+\xi_1} \right]$\label{nonconvexSGD2}
\EndFor
\State \textbf{end for}
\end{algorithmic}
\end{algorithm}

Taking the drawbacks of a constant learning rate into consideration, we propose
Algorithm \ref{dnn ADAM}, which is an extension of {\scshape convgAdam} for the two-layer ReLU neural network and likewise attains $\mathcal{O}(\sqrt{T})$ regret with rolling window. In {\scshape dnnAdam}, the stochastic gradients computed in steps~\ref{ddnAdam:g1} and~\ref{ddnAdam:g2} are different than those in {\scshape dnnGD}. This is due to challenges in establishing the regret bound. Nevertheless, the stochastic gradients are unbiased estimators of gradients of the loss function. An alternative is to have four samples, two per gradient group. This would also enable the regret analysis, however we only employ two of them so as to reduce the variance of the algorithm. {\scshape dnnAdam} records exponential moving average of gradients and moments in steps~\ref{dnnAdam:m1} -~\ref{dnnAdam:v2dot}. Step~\ref{dnnAdam:v2} modifies $v_{2,t}$ to be a matrix with same value in the same column. This is a divergence from standard {\scshape ADAM} which does not have this requirement. The modification is required for the regret analysis. Then, steps~\ref{dnnAdam:v1hat} and~\ref{dnnAdam:v2hat} guarantee that $\left\{\hat{v}_{1,t}\right\}$ and $\left\{\hat{v}_{2,t}\right\}$ are nondecreasing sequences element-wise. Lastly, we update weights and also perform the rescaling modification to {\scshape dnnAdam} in steps~\ref{dnnAdam:w1} and~\ref{dnnAdam:w2}. Additionally, we apply the same strategy as we mention in {\scshape convgAdam} when $[\hat{v}_{1,t}]_i = 0$ or $[\hat{v}_{2,t}]_{i,j} = 0 $. More precisely, if $[\hat{v}_{1,t}]_i=0$ ($[\hat{v}_{2,t}]_{ij} = 0 $), it implies $[g_{1,k}]_i=0$ ($[g_{2,k}]_{ij}=0$) for all $k$, which in turn yields $[m_{1,t}]_i=0$ ($[m_{2,t}]_{ij}=0$). Thus, we define $\left[m_{1,t}/\sqrt{\hat{v}_{1,t}}\right]_i = \frac{0}{0}=0$ and $\left[{m_{2,t}}/{\sqrt{\hat{v}_{2,t}}}\right]_{ij} = \frac{0}{0}=0$. Therefore, we maintain the weights from the last iteration.

\textcolor{red}{}

\begin{algorithm}[H]
\caption{{\scshape Deep NN ADAM}}
\label{dnn ADAM}
\begin{algorithmic}[1]
\State $\textit{Positive parameters\,\,} \eta, \epsilon_1, \epsilon_2, \beta_{11t}\leq 1,\beta_{12t}\leq 1,\beta_{21}\leq 1,\beta_{22}\leq 1$
\For{$t=0,1,2,\cdots$}
\State $\textrm{Sample}\,\,\sigma_1,\,\sigma_2$
\State $g_{1,t}=\left(\omega_{1,t}^T\sigma_1\left(\omega_{2,t}z^t \right)-y^t\right)\sigma_2\left(\omega_{2,t}z^t 
\right)$\label{ddnAdam:g1}
\State $g_{2,t}=\left(\omega_{1,t}^T\sigma_1\left(\omega_{2,t}z^t \right)-y^t\right)\omega_{1,t}\left(\sigma_2\left(z^t 
\right)\right)^T$\label{ddnAdam:g2}
\State $m_{1,t} = \beta_{11t}m_{1,t-1}+\left(1-\beta_{11t}\right)g_{1,t}$\label{dnnAdam:m1}
\State $m_{2,t} = \beta_{12t}m_{2,t-1}+\left(1-\beta_{12t}\right)g_{2,t}$\label{dnnAdam:m2}
\State $v_{1,t} = \beta_{21}v_{1,t-1}+\left(1-\beta_{21}\right)g_{1,t}\odot g_{1,t}$\label{dnnAdam:v1}
\State $\dot{v}_{2,t} = \beta_{22}\dot{v}_{2,t-1}+\left(1-\beta_{22}\right)g_{2,t}\odot g_{2,t}$\label{dnnAdam:v2dot}
\State $\left[v_{2,t}\right]_{ij}=\max_k \left| \left[\dot{v}_{2,t}\right]_{kj}\right|$\label{dnnAdam:v2}
\State $\hat{v}_{1t} = \max\left(v_{1t},\hat{v}_{1,t-1}\right)$\label{dnnAdam:v1hat}
\State $\hat{v}_{2t} = \max\left(v_{2t},\hat{v}_{2,t-1}\right)$\label{dnnAdam:v2hat}
\State $\omega_{1,t+1} = \frac{\omega_{1t}-\frac{\eta}{\sqrt{\hat{v}_{1t}}}\odot m_{1t}}{\left\| \omega_{1t}-\frac{\eta}{\sqrt{\hat{v}_{1t}}} \odot m_{1t}\right\|}\cdot\sqrt{\frac{\left[\frac{1}{2}+\xi_2\right]}{\left(1-\beta_{121}\right)}}$\label{dnnAdam:w1}
\State $\omega_{2,t+1} =\left(\omega_{2,t}-\frac{\eta}{\sqrt{\hat{v}_{2,t}}}\odot m_{2t}\right)\cdot\left\| \omega_{1t}-\frac{\eta}{\sqrt{\hat{v}_{1t}}}\odot m_{1t}\right\|/\sqrt{\frac{\left[\frac{1}{2}+\xi_2\right]}{\left(1-\beta_{121}\right)}} $\label{dnnAdam:w2}
\EndFor
\State \textbf{end for}
\end{algorithmic}
\end{algorithm}

\subsection{Analyses}
In this section, we discuss regret with rolling window bounds of {\scshape dnnGd} and {\scshape dnnAdam} showing that both of them attain regret with rolling window proportional to the square root of the size of the rolling window. Before establishing the regret bounds, we first require the following assumptions. 

\subsection*{Assumption 2:}
\label{as:2}

\begin{enumerate}
\item Activations $\sigma_1, \sigma_2$ are independent Bernoulli random variables with the same probability $\rho$ of success, i.e. Pr$(\sigma(x)=x) = \rho$, Pr$(\sigma(x)=0)=1-\rho$.\label{as2:1} 
\item There exists $\omega_{1,*}$ and $\omega_{2,*}$ such that $\mathop{{}\mathbb{E}}\left[\omega_{1,*}^T\sigma_1\left(\omega_{2,*}z^t \right)\right] = \rho\omega_{1,*}^T\omega_{2,*}z^t = y^t$ for all $t$. \label{as2:2} 
\item Quantities $\omega_{1,t},\omega_{2,t}$, $z^t$ and $y^t$ are all bounded for any $t$. 
In particular, let $\left\|\omega_{2,t} \right\|_F\leq \alpha$ and $\left|\left[g_{2,t}\right]_{ij}\right|\leq G_{2,\infty}$ for any $t,i,j$. \label{as2:3}
\item There exists $0 < \epsilon < \pi/2$ such that $\frac{| \left\langle
 \omega_{1,t}^T\omega_{2,t}-\omega_{1,*}^T\omega_{2,*}, z^t
 \right\rangle |}{\left\| \omega_{1,t}^T\omega_{2,t}-\omega_{1,*}^T\omega_{2,*} \right\|\left\|z^t\right\|}\geq \cos(\epsilon)$ for all $t$ when $\left(
 \omega_{1,t}^T\omega_{2,t}-\omega_{1,*}^T\omega_{2,*}\right)^T z^t \neq 0$.\label{as2:4}
\item There exits a positive constant $\mu$ such that $\mu \leq \min_{i,t,[\hat{v}_{1,t}]_i\neq 0}\left|[\hat{v}_{1,t}]_i\right|$.\label{as2:5}
\end{enumerate}

As Kawaguchi assumed in \cite{kawaguchi2016deep} and other works (\cite{dauphin2014identifying}, \cite{choromanska2015loss}, \cite{choromanska2015open}), we also assume that $\sigma$'s are Bernoulli random variables with the same probability of success and are independent from input $z^t$'s and weights $\omega$'s in \ref{as2:1} from Assumption \hyperref[as:2]{2}. \footnote{In general, the distribution of the Bernoulli random variable representing the ReLU activation function is not required to be stationary for all $t$. Since all loss functions are considered separately, we only need to assume that for every $z^t$, there is a corresponding $\rho^t$ such that $\mathrm{E}\left[\omega_{1,*}^T\omega_{2,*}z^t \right]=\rho^t\omega_{1,*}^T\omega_{2,*}z^t = y^t$, then, later in the proof, those $\rho^t$'s are absorbed into  $\mathrm{E}\left[l_t(\omega_{1,t},\omega_{2,t}|\mathrm{F}^t \right]$. Therefore, the algorithms can dynamically adapt to the new patterns in the dataset. In the proof, we simplify this process by using a constant $\rho$. Then, given $\sigma, \sigma_1, \sigma_2$ are i.i.d, $\mathrm{E}_{\sigma}\left[\left\|\omega_{1,t}^T\sigma\left(\omega_{2,t}z^t \right)-y^t \right\|^2/2 \right] = \mathrm{E}_{\sigma}\left[\left\|\omega_{1,t}^T\sigma\left(\omega_{2,t}z^t \right)-\omega_{1,*}^T\sigma\left(\omega_{2,*}z^t \right) \right\|^2/2 \right]=\frac{\rho}{2}\left(\omega_{1,t}^T\omega_{2,t}z^t-\omega_{1,*}^T\omega_{2,*}z^t \right)^2$. At the same time, the new loss function is $\mathrm{E}_{\sigma_1,\sigma_2}\left[\left(\omega_{1,t}^T\sigma_1\left(\omega_{2,t}z^t\right)-y^t \right)\left(\omega_{1,t}^T\sigma_2\left(\omega_{2,t}z^t\right)-y^t \right)/2 \right]=\mathrm{E}_{\sigma_1}\left[\omega_{1,t}^T\sigma_1\left(\omega_{2,t}z^t\right)-y^t \right]\mathrm{E}_{\sigma_2}\left[\omega_{1,t}^T\sigma_2\left(\omega_{2,t}z^t\right)-y^t \right]/2 = \frac{\rho^2}{2}\left(\omega_{1,t}^T\omega_{2,t}z^t-\omega_{1,*}^T\omega_{2,*}z^t \right)^2$. Therefore, minimizing our new loss function is the same as minimizing the original loss given that $\rho$ is a positive constant.} 
Condition in \ref{as2:2} from Assumption \hyperref[as:2]{2} states that the optimal expected loss is zero. 
This is also assumed in other prior work in offline, e.g. \cite{wu2018no}, \cite{du2017gradient}. The 3$^{rd}$ condition in Assumption \hyperref[as:2]{2} is an extension of \ref{as1:1} in Assumption \hyperref[as:1]{1}. Likewise, the constraints on $\omega_{1,t}$ and $\omega_{2,t}$ can be removed by further introducing technique discussed in Appendix \hyperref[projection]{A.1}, and consequently, $g_{1,t}$ and $g_{2,t}$ are bounded due to steps~\ref{ddnAdam:g1} and~\ref{ddnAdam:g2}. The next to the last condition in Assumption \hyperref[as:2]{2} requires that a new coming sample has to be beneficial to improve current weights. More precisely, we interpret the difference between the current weights and optimal weights as an error that needs to be corrected. Then, a new sample which is not relevant to the error vector is not allowed. In other words, we assume that the algorithm does not receive any uninformative samples. Condition \ref{as2:5} from Assumption \hyperref[as:2]{2} assumes that any nonzero $\left|[\hat{v}_{1,t}]_i\right|$ is lower bounded by a constant $\mu$ for all $t$ and $i$. It is a weak constraint since $[\hat{v}_{1,t}]_i \geq [\hat{v}_{1,t-1}]_i$ for any $t$ and $i$. In practice, we can modify it by only memorizing the first nonzero value in each coordinate and finding the smallest among these values. Otherwise, if all of $[\hat{v}_{1,t}]_i = 0$, then we can set $\mu = 1$ by default. 


The regret statement for {\scshape dnnGd} is as follows.
\begin{theorem}
\label{thm3}
If \ref{as2:1}-\ref{as2:4} in Assumption \hyperref[as:2]{2} hold, $\xi_1 = \frac{\alpha}{\cos(\epsilon)}$, and $\eta=\frac{\eta_1}{\sqrt{T}}$ for any positive constant $\eta_1$, the sequence $\omega_{1,t}$ and $\omega_{2,t}$ generated by {\scshape dnnGd} for a 2-layer ReLU neural network achieves $\max_{p\in\mathbb{N}}\mathop{{}\mathbb{E}}\left[R_p(T)\right]\leq \mathcal{O}(\sqrt{T})$.
\end{theorem}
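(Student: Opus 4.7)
The plan is to reduce the regret analysis to a convex problem in the effective variable $W_t := \omega_{2,t}^T \omega_{1,t}$, on which the algorithm acts (approximately) as online gradient descent. First, using \ref{as2:1} and \ref{as2:2} of Assumption \hyperref[as:2]{2} together with independence of $\sigma_1,\sigma_2$, a direct calculation yields
\begin{equation*}
\mathbb{E}_{\sigma_1,\sigma_2}\bigl[l_t(\omega_{1,t},\omega_{2,t})\bigr] = \frac{\rho^2}{2}\bigl((W_t - W_*)^T z^t\bigr)^2,
\end{equation*}
where $W_* := \omega_{2,*}^T \omega_{1,*}$. Setting $\delta_t := W_t - W_*$, the expected regret reduces to controlling $\sum_{t=p}^{p+T}\mathbb{E}(\delta_t^T z^t)^2$, which is quadratic in the effective variable.

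Next I would exploit the positive-homogeneity built into steps~\ref{nonconvexSGD1}-\ref{nonconvexSGD2}: the scalar dividing $\tilde\omega_{1,t+1}:=\omega_{1,t}-\eta g_{1,t}$ also multiplies $\tilde\omega_{2,t+1}:=\omega_{2,t}-\eta g_{2,t}$, so $W_{t+1}=\tilde\omega_{2,t+1}^T\tilde\omega_{1,t+1}$ is invariant under the rescaling, giving
\begin{equation*}
W_{t+1} - W_t = -\eta\bigl(\omega_{2,t}^T g_{1,t} + g_{2,t}^T\omega_{1,t}\bigr) + \eta^2\, g_{2,t}^T g_{1,t}.
\end{equation*}
Taking conditional expectation over $\sigma_1,\sigma_2$ and using the symmetrized estimators in steps~\ref{dnn:g1}-\ref{dnn:g2}, one obtains
\begin{equation*}
\mathbb{E}[\omega_{2,t}^T g_{1,t}\mid \mathcal{F}^t] = \rho^2(\delta_t^T z^t)\,\omega_{2,t}^T\omega_{2,t}z^t, \quad \mathbb{E}[g_{2,t}^T\omega_{1,t}\mid \mathcal{F}^t] = \rho^2(\delta_t^T z^t)\|\omega_{1,t}\|^2 z^t.
\end{equation*}
The rescaling enforces $\|\omega_{1,t}\|^2=\tfrac12+\xi_1$ for all $t\geq 2$, so the second expectation supplies a sign-definite multiple of $(\delta_t^T z^t)^2$ in the expansion of $\|\delta_{t+1}\|^2-\|\delta_t\|^2$.

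The main obstacle is the first term, $\delta_t^T\omega_{2,t}^T\omega_{2,t}z^t$, whose sign is not a priori controlled. I would bound its magnitude by $\alpha^2\|\delta_t\|\|z^t\|$ using the Frobenius bound from \ref{as2:3}, and then invoke \ref{as2:4} to write $\|\delta_t\|\|z^t\|\leq |\delta_t^T z^t|/\cos(\epsilon)$, converting the cross term into a constant multiple of $(\delta_t^T z^t)^2$. The choice $\xi_1=\alpha/\cos(\epsilon)$ is calibrated so that, after bookkeeping, the favorable contribution $(\tfrac12+\xi_1)(\delta_t^T z^t)^2$ strictly dominates the cross-term contribution. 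Absorbing the $\eta^2 g_{2,t}^T g_{1,t}$ remainder through the gradient boundedness in \ref{as2:3} then gives a descent inequality of the form
\begin{equation*}
\mathbb{E}\!\left[\|\delta_{t+1}\|^2 \mid \mathcal{F}^t\right] \leq \|\delta_t\|^2 - 2\eta\rho^2 c\,(\delta_t^T z^t)^2 + C\eta^2,
\end{equation*}
for positive constants $c,C$ depending only on $\alpha$, $\epsilon$, $\rho$ and the bounds in Assumption \hyperref[as:2]{2}.

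Finally, telescoping from $t=p$ to $t=p+T$ and using boundedness of $\|\delta_t\|^2$ via \ref{as2:3} yields $\sum_{t=p}^{p+T}\mathbb{E}(\delta_t^T z^t)^2 \leq \|\delta_p\|^2/(2\eta\rho^2 c) + C\eta T/(2\rho^2 c)$. With $\eta=\eta_1/\sqrt{T}$ both terms are $\mathcal{O}(\sqrt{T})$, and since $\mathbb{E}[l_t] = \frac{\rho^2}{2}\mathbb{E}(\delta_t^T z^t)^2$, this gives $\max_{p}\mathbb{E}[R_p(T)]\leq \mathcal{O}(\sqrt{T})$. I expect the hardest technical pieces to be the careful computation of the conditional expectations of the non-standard symmetrized gradient estimators in steps~\ref{dnn:g1}-\ref{dnn:g2}, and verifying that $\xi_1=\alpha/\cos(\epsilon)$ is indeed large enough for the sign-domination after all constants from the bilinear structure of $W_{t+1}-W_t$ are accounted for.
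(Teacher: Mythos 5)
Your proposal follows essentially the same route as the paper's proof: reduce to the effective variable $W_t=\omega_{2,t}^T\omega_{1,t}$ using the invariance of the product under the rescaling in steps~\ref{nonconvexSGD1}--\ref{nonconvexSGD2}, compute the conditional expectations of the symmetrized gradient estimators (the paper's Lemmas \ref{lem:obj} and \ref{lem3} are exactly your two preliminary calculations), extract the sign-definite term $\|\omega_{1,t}\|^2\left((W_t-W_*)^Tz^t\right)^2$, control the cross term $(W_t-W_*)^T\omega_{2,t}^T\omega_{2,t}z^t$ via condition \ref{as2:4}, and telescope. The only divergence is the constant in the cross-term bound (you use $\|\omega_{2,t}^T\omega_{2,t}\|\leq\alpha^2$ while the paper bounds it by $\alpha$ and calibrates $\xi_1=\alpha/\cos(\epsilon)$ to cancel it exactly), which affects only the required value of $\xi_1$ and not the $\mathcal{O}(\sqrt{T})$ order.
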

The proof is in Appendix \hyperref[proof:thm3]{D}.
Based on the fact that the loss function is nonconvex, i.e., we no longer have a direct relationship between the loss function error $f_t(\omega_{1,t},\omega_{2,t})-f_t(\omega_{1,*},\omega_{2,*})$ and  $\left\langle \omega_{1,t}^T\omega_{2,t}-\omega_{1,*}^T\omega_{2,*},\omega_{1,t}^T\bigtriangledown_{\omega_2} f_t(\omega_{1,t},\omega_{2,t})\right.$ $\left.+\left(\bigtriangledown_{\omega_1} f_t(\omega_{1,t},\omega_{2,t})\right)^T\omega_{2,t}\right\rangle$, any technique that relies on the property of convexity is inappropriate. The main challenges are coming from building a bridge between the loss function error $f_t(\omega_{1,t}^T\omega_{2,t})-f_t(\omega_{1,*}\omega_{2,*})$ and the weight error $\omega_{1,t}^T\omega_{2,t}-\omega_{1,*}^T\omega_{2,*}$. To address this problem, we explore the difference between $\omega_{1,t+1}^T\omega_{2,t+1}-\omega_{1,*}^T\omega_{2,*}$ and $\omega_{1,t}^T\omega_{2,t}-\omega_{1,*}^T\omega_{2,*}$ in detail. 

The steps to study the regret with rolling window are as follows. Based on steps~\ref{dnn:g1} -~\ref{nonconvexSGD2}, we expand $\omega_{1,t+1}^T\omega_{2,t+1}-\omega_{1,*}^T\omega_{2,*}$ to establish a relationship for the sequence of the weight error $\omega_{1,t}^T\omega_{2,t}-\omega_{1,*}^T\omega_{2,*}$. In association with explicit formulas of gradients and condition \ref{as2:4} in Assumption \hyperref[as:2]{2}, we obtain the loss function error $f_t(\omega_{1,t},\omega_{2,t})-f_t(\omega_{1,*},\omega_{2,*})$. Meanwhile, all of the remaining terms are bounded due to condition \ref{as2:3} from Assumption \hyperref[as:2]{2}. Combined with the fact that $\omega_{1,t}$ has a constant norm, the regret with rolling window bound of {\scshape dnnGd} is achieved by applying the law of iterated expectation. 

At the same time, our proof is flexible enough to extend to standard online setting. For a constant learning rate, Appendix \hyperref[proof:thm3]{D} provides the necessary details for the standard case. In summary, regret of $\mathcal{O}(\sqrt{T})$ is achieved. We note that such a result has only been known for the diminishing learning rate and thus we extend the prior knowledge by covering the constant learning rate case. 

The adaptive learning setting algorithm {\scshape dnnAdam} has the same regret bound as stated in the following theorem.
\begin{theorem}
\label{thm4}
If Assumption \hyperref[as:2]{2} holds, $\eta=\frac{\eta_1}{\sqrt{T}}$ for any positive constant $\eta_1$, $\beta_{111}, \beta_{121}, \beta_{21},\beta_{22}$ are constants between $0$ and $1$ such that $\lambda_1:=\frac{\beta_{111}}{\beta_{21}}\leq 1$ and $\lambda_2:=\frac{\beta_{121}}{\beta_{22}}\leq 1$, $\beta_{11t}=\beta_{111}\gamma_1^t$ and $\beta_{12t}=\beta_{121}\gamma_2^t$ with $0<\gamma_1,\gamma_2<1$, and $\xi_2 = \frac{\alpha G_{2,\infty}}{\mu\cos{(\epsilon)}}$, then, the sequence $\omega_{1,t}$ and $\omega_{2,t}$ generated by {\scshape dnnAdam} for the 2-layer ReLU neural network achieves $\max_{p\in\mathbb{N}}\mathop{{}\mathbb{E}}\left[R_p(T)\right]\leq \mathcal{O}(\sqrt{T})$.
\end{theorem}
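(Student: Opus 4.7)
The plan is to combine the two-layer weight-product analysis developed in the proof of Theorem~\ref{thm3} with the Adam-style momentum/variance decomposition used in the proof of Theorem~\ref{thm2}. The fundamental object of study is the scalar quantity $\omega_{1,t}^T\omega_{2,t}-\omega_{1,*}^T\omega_{2,*}$, not $\omega_{1,t}$ and $\omega_{2,t}$ separately. Observe first that the rescaling factor $\sqrt{[1/2+\xi_2]/(1-\beta_{121})}$ appearing in step~\ref{dnnAdam:w1} is the reciprocal of the one appearing in step~\ref{dnnAdam:w2}, and similarly the norm $\|\omega_{1t}-(\eta/\sqrt{\hat{v}_{1t}})\odot m_{1t}\|$ appears dividing in step~\ref{dnnAdam:w1} and multiplying in step~\ref{dnnAdam:w2}. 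Hence these rescalings cancel exactly in the product, giving the clean recursion
\[
\omega_{1,t+1}^T\omega_{2,t+1}=\Bigl(\omega_{1t}-\tfrac{\eta}{\sqrt{\hat{v}_{1t}}}\odot m_{1t}\Bigr)^{\!T}\Bigl(\omega_{2t}-\tfrac{\eta}{\sqrt{\hat{v}_{2t}}}\odot m_{2t}\Bigr),
\]
so positive-homogeneity makes the normalization invisible to the quantity that matters. I would then subtract $\omega_{1,*}^T\omega_{2,*}$ from both sides and expand, producing one linear-in-$\eta$ cross-term of each of the forms $(m_{1t}/\sqrt{\hat v_{1t}})^T\omega_{2t}$ and $\omega_{1t}^T(m_{2t}/\sqrt{\hat v_{2t}})$, plus a quadratic-in-$\eta$ remainder.

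Next I would import the momentum unrolling from the proof of Theorem~\ref{thm2}: with $\beta_{11t}=\beta_{111}\gamma_1^t$ and $\beta_{12t}=\beta_{121}\gamma_2^t$, write $m_{j,t}=\sum_{k\le t}c^{(j)}_{t,k}g_{j,k}$ and use $\lambda_1,\lambda_2\le 1$ to bound the doubly-weighted sums $\sum_{t}c^{(j)}_{t,k}/\sqrt{[\hat v_{j,t}]_i}$ by a uniformly convergent geometric series, relying on Assumption~\hyperref[as:2]{2}.\ref{as2:5} to keep $1/\sqrt{[\hat v_{1,t}]_i}$ bounded by $1/\sqrt{\mu}$ whenever the denominator is nonzero. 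The gradient formulas in steps~\ref{ddnAdam:g1}--\ref{ddnAdam:g2}, combined with Assumption~\hyperref[as:2]{2}.\ref{as2:1}--\ref{as2:2}, show that $\mathbb{E}_{\sigma_1,\sigma_2}[g_{1,k}^T\omega_{2,k}+\omega_{1,k}^Tg_{2,k}]$ contains exactly the error inner-product $(\omega_{1,k}^T\omega_{2,k}-\omega_{1,*}^T\omega_{2,*})z^k$ multiplied by $\rho\,(z^k)^T$ on the right, which via a direct expectation computation is the bridge
\[
\mathbb{E}_{\sigma_1,\sigma_2}[l_t(\omega_{1,t},\omega_{2,t})]=\tfrac{\rho^2}{2}\bigl((\omega_{1,t}^T\omega_{2,t}-\omega_{1,*}^T\omega_{2,*})z^t\bigr)^{\!2}.
\]
Assumption~\hyperref[as:2]{2}.\ref{as2:4} then inverts Cauchy--Schwarz by $\cos^2(\epsilon)$, turning the inner-product terms I extracted into a multiple of $\mathbb{E}[l_t]$ itself.

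For the remaining quadratic-in-$\eta$ pieces, the matrix column-wise symmetrization in step~\ref{dnnAdam:v2} was inserted precisely so that every column of $\omega_{2,t}$ is scaled by the same adaptive denominator, keeping the inner-product bridge of the previous step coherent with per-column normalization; I would exploit this together with the uniform bounds $\|\omega_{2,t}\|_F\le\alpha$ and $|[g_{2,t}]_{ij}|\le G_{2,\infty}$ of Assumption~\hyperref[as:2]{2}.\ref{as2:3}, and the $\mu$-lower-bound of Assumption~\hyperref[as:2]{2}.\ref{as2:5}, to show an $O(\eta^2)=O(1/T)$ remainder per step. The specific calibration $\xi_2=\alpha G_{2,\infty}/(\mu\cos(\epsilon))$ is chosen to absorb exactly the mismatch between the adaptive denominator, the column norms of $\omega_2$, and the angle loss, completely analogously to how $\xi_1=\alpha/\cos(\epsilon)$ functions in Theorem~\ref{thm3}. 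Telescoping the resulting recursion for $\|\omega_{1,t}^T\omega_{2,t}-\omega_{1,*}^T\omega_{2,*}\|^2$ across $t=p,\dots,p+T$, taking the law of total expectation over $(\sigma_1,\sigma_2)$, and plugging $\eta=\eta_1/\sqrt{T}$ produces $\max_p\mathbb{E}[R_p(T)]\le O(\sqrt T)$, uniformly in the starting index $p$.

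The main obstacle I expect is that, unlike in Theorem~\ref{thm2} where $\hat v_t$ interacts with a vector weight, here $\hat v_{2,t}$ interacts with a matrix whose columns are multiplied against the fixed-norm vector $\omega_{1,t}$, and the momentum $m_{2,t}$ inherits outer-product structure from $\omega_{1,k}(\sigma_2(z^k))^T$. Keeping the column-wise $\max$ in step~\ref{dnnAdam:v2} compatible with the geometric bound on $\sum_t c^{(2)}_{t,k}/\sqrt{\hat v_{2,t}}$, while simultaneously routing the angle condition through the correct $z^t$ direction so that the $\xi_2$ calibration cancels the adaptive-denominator mismatch, is the delicate calculation; once that bookkeeping is in place, the remaining steps specialize the now-standard combined ``convgAdam + dnnGd'' blueprint and the bound drops out.
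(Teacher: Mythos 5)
Your overall architecture matches the paper's: cancel the rescaling factors in the product $\omega_{1,t+1}^T\omega_{2,t+1}$, unroll the momentum with the exponentially decaying $\beta_{11t},\beta_{12t}$, use the column-constant structure of $\hat v_{2,t}$ (the paper's Lemma \ref{lem4}) and the expected-gradient formulas to extract $\mathbb{E}[l_t\mid\mathcal{F}^t]$, calibrate $\xi_2$ so the useful loss term dominates the harmful one coming from the $\hat v_{1,t}$-normalized $m_{1,t}$ cross term, and telescope. However, there is one genuine gap, and it sits exactly at the step you defer as ``delicate bookkeeping'': you propose to telescope the \emph{unweighted} error $\left\|\omega_{1,t}^T\omega_{2,t}-\omega_{1,*}^T\omega_{2,*}\right\|^2$, whereas the paper telescopes the weighted quantity $\left\|\left(\sqrt[4]{\hat v_{2,t}}\odot\omega_{2,t}\right)^T\omega_{1,t}-\left(\sqrt[4]{\hat v_{2,t}}\odot\omega_{2,*}\right)^T\omega_{1,*}\right\|^2$. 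With the unweighted object, the $\omega_2$-update cross term in expectation is proportional to $\left(e_t^Tz^t\right)\,e_t^T\tilde v_{2,t}^{-1/2}z^t$ with $e_t=\omega_{2,t}^T\omega_{1,t}-\omega_{2,*}^T\omega_{1,*}$ and $\tilde v_{2,t}$ the diagonal matrix of column values of $\hat v_{2,t}$; the two inner products involve \emph{different} distorted copies of $z^t$ and need not even have the same sign (e.g. $e_t=(1,-1)$, $z^t=(1,0.9)$, $\tilde v_{2,t}^{-1/2}=\mathrm{diag}(1,2)$ gives $e_t^Tz^t>0$ but $e_t^T\tilde v_{2,t}^{-1/2}z^t<0$). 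Assumption~\hyperref[as:2]{2}.\ref{as2:4} controls the angle between $e_t$ and $z^t$ only, not between $e_t$ and $\tilde v_{2,t}^{-1/2}z^t$, so this term cannot be lower-bounded by a positive multiple of $\mathbb{E}[l_t]=\frac{\rho^2}{2}(e_t^Tz^t)^2$ and the recursion does not close.

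The paper's fix is precisely the $\sqrt[4]{\hat v_{2,t}}$ weighting: in the weighted cross term the factor $\sqrt{\hat v_{2,t}}$ on the error cancels the $1/\sqrt{\hat v_{2,t}}$ in the $m_{2,t}$ update (equation (\ref{thm4:item2}) collapses to $\eta\left\langle \omega_{2,t}^T\omega_{1,t}-\omega_{2,*}^T\omega_{1,*},\, m_{2,t}^T\omega_{1,t}\right\rangle$ with no adaptive denominator), after which the loss is extracted cleanly; the price is that the telescoping now changes weight from $\hat v_{2,t-1}$ to $\hat v_{2,t}$ at each step, which is absorbed using monotonicity and boundedness of $\hat v_{2,t}$ exactly as in (\ref{ADAMeq:8}) for {\scshape convgAdam}, yielding the $W_\infty\sum_{i}\mathbb{E}\left[\left[\sqrt{\tilde v_{2,p+T}}\right]_i\right]$ constant. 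You should replace your telescoped quantity with the weighted one (and note that the residual $\hat v_{1,t}$ mismatch in the \emph{other} cross term is what forces the ratio $G_{2,\infty}/\mu$ into $\xi_2$, via Assumption~\hyperref[as:2]{2}.\ref{as2:5}); with that change your outline aligns with the paper's proof.
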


The proof is in  Appendix \hyperref[proof:thm4]{E}.
Similar to the difficulty faced in the proof of Theorem \ref{thm3}, we do not possess a relationship between the loss function error $f_t(\omega_{1,t},\omega_{2,t})-f_t(\omega_{1,*},\omega_{2,*})$ and $\left\langle \omega_{1,t}^T\omega_{2,t}-\omega_{1,*}^T\omega_{2,*},\omega_{1,t}^T\bigtriangledown_{\omega_2} f_t(\omega_{1,t},\omega_{2,t})+\left(\bigtriangledown_{\omega_1} f_t(\omega_{1,t},\omega_{2,t})\right)^T\omega_{2,t}\right\rangle$. Even worse, the variance of the algorithm caused by merging all previous information and normalizing the stepsize makes the relationship between the loss function error $f_t(\omega_{1,t},\omega_{2,t})-f_t(\omega_{1,*},\omega_{2,*})$ and the weight error $\omega_{1,t}^T\omega_{2,t}-\omega_{1,*}^T\omega_{2,*}$ more ambiguous. The way we deal with this is by treating $\frac{m_t}{\sqrt{v}_t}$ together as the gradient first and then extracting the effective gradient out from it and bounding the remaining terms. 

The structure of the technical proof is similar to that of Theorem \ref{thm3}. We first establish a relationship for the sequence of the weight error $\omega_{1,t}^T\omega_{2,t}-\omega_{1,*}^T\omega_{2,*}$ by multiplying $\sqrt[4]{\hat{v}_{2,t}}$. Then, using the definitions of $\beta$'s, $\lambda$'s and $\gamma$'s, we bound all the terms without the stepsize by constants except those which potentially can contribute to the loss function. To this end, we obtain a relationship between the weight error $\omega_{1,t}^T\omega_{2,t}-\omega_{1,*}^T\omega_{2,*}$ and the loss function. Finally, combined with step\ref{dnnAdam:v2hat} and the law of iterated expectation, we are able to argue $\mathcal{O}(\sqrt{T})$ regret with rolling window for {\scshape dnnAdam}.

Likewise, we are able to extend the proof of Theorem \ref{thm4} to the standard online setting for {\scshape dnnAdam}. We do not need to make any change to establish $\mathcal{O}(\sqrt{T})$. For diminishing stepsize $\mu$, a slight change to the proof is indeed. Details are provided in Appendix \hyperref[projection]{A.2}.  


\section{Numerical Study}
In this section, we compare the {\scshape convgAdam} method with {\scshape OGD} \cite{zinkevich2003online} for solving problem (\ref{goal}) with a long sequence of data points (mimicking streaming). We conduct experiments on the MNIST8M dataset and two other different-size real datasets from the Yahoo! Research Alliance Webscope program. For all of these datasets, we train  multi-class hinge loss support vector machines (SVM) \cite{shalev2014understanding} and we assume that the samples are streamed one by one based on a certain random order.  
For all the figures provided in this section, the horizontal axis is in $10^5$ scale. Moreover, we set $\beta_1=0.8$ and $\beta_2=0.81$ in  {\scshape convgAdam}. We mostly capture the log of the loss function value which is defined as $\max_{p\in\mathbb{N}}\min_{\left(\omega_{t}\right)_{t\in\mathbb{N}}} \sum_{t=p}^{T+p} f_t(\omega_t)$.


\subsection{Multiclass SVM with Yahoo! Targeting User Modeling Dataset}
We first compare {\scshape convgAdam} with {\scshape OGD} using the Yahoo! user targeting and interest prediction dataset consisting of Yahoo user profiles\footnote{https://webscope.sandbox.yahoo.com/catalog.php?datatype=a}. It contains 1,589,113 samples (i.e., user profiles), represented by a total of 13,346 features and 380 different classification problems (called labels in the supporting documentation) each one with 3 classes.

First, we pick the first label out and conduct a sequence of experiments with respect to this label. 
The most important results are presented in Figure \hyperref[fig:1]{1} for {\scshape OGD} and Figure \hyperref[fig:2]{2} for {\scshape convgAdam}. In Figures \hyperref[fig1]{1(a)} and \hyperref[fig4]{2(a)}, we consider the cases when the learning rate or step size varies from $0.1$ to $5\cdot 10^{-6}$ while keeping the order and $T$ fixed at 1,000. Figures \hyperref[fig2]{1(b)} and \hyperref[fig5]{2(b)} provide the influence of the order of the sequence. Figures \hyperref[fig3]{1(c)} and \hyperref[fig6]{2(c)} represent the case where $T$ varies from $10$ to $10^5$ with a fixed learning rate or step size. 
Lastly, in Figure \hyperref[fig7]{2(d)}, we compare the performance of {\scshape convgAdam} and {\scshape OGD} with certain learning rates and step sizes.

In these plots, we observe that {\scshape convgAdam} outperforms {\scshape OGD} for most of the learning rates and step sizes, and definitely for promissing choices. More precisely, in Figure \hyperref[fig1]{1(a)} and \hyperref[fig4]{2(a)}, we discover that 0.1/1000 and 3/$\sqrt{1000}$ are two high-quality learning rate and stepsize values which have relatively low error and are learning for {\scshape OGD} and {\scshape convgAdam}, respectively. Therefore, we apply those two learning rates for the remaining experiments on this dataset. In Figures \hyperref[fig2]{1(b)} and \hyperref[fig5]{2(b)}, we observe that the perturbation caused by the change of the order is negligible especially when compared to the loss value, which is a positive characteristic. Thus, in the remaining experiments, we no longer need to consider the impact of the order of the sequence. From Figure \hyperref[fig3]{1(c)} and Figure \hyperref[fig6]{1(d)}, we discover that the loss and $T$ have a significantly positive correlation as we expect. Notice that changing $T$ but fixing the learning rate or stepsize essentially means containing more samples in the regret, in other words, the regret for $T=100$ is roughly $10$ times the regret for $T=10$. Since the pattern in the figures is preserved for the different $T$ values for {\scshape OGD} and {\scshape convgAdam}, in the remaining experiments we fix $T$. In Figure \hyperref[fig6]{2(c)}, we discover that too big $T$ or too small $T$ causes poor performance and therefore, for the remaining experiments, we set $T=1,000$ whenever $T$ is fixed. From Figure \hyperref[fig7]{2(d)}, we observe that {\scshape convgAdam} outperforms {\scshape OGD}.

\begin{figure}[H]
\minipage{0.33\textwidth}
\includegraphics[width=\linewidth]{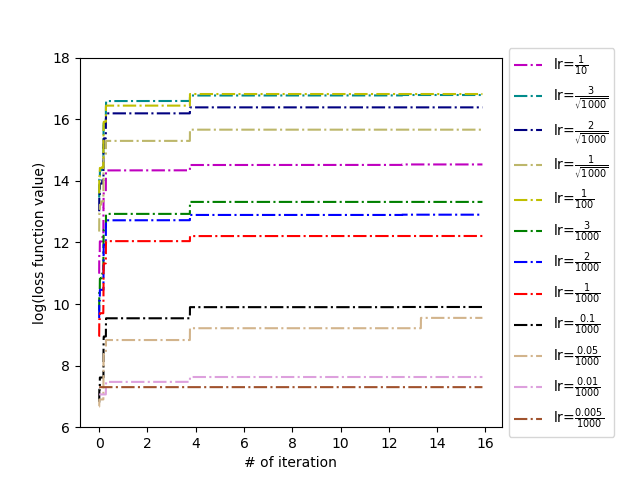}
\caption*{(a)}
\label{fig1}
\endminipage\hfill
\minipage{0.33\textwidth}
\includegraphics[width=\linewidth]{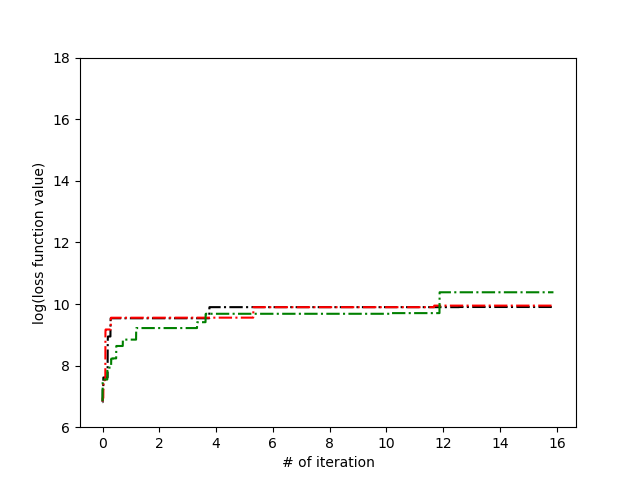}
\caption*{(b)}
\label{fig2}
\endminipage\hfill
\minipage{0.33\textwidth}
\includegraphics[width=\linewidth]{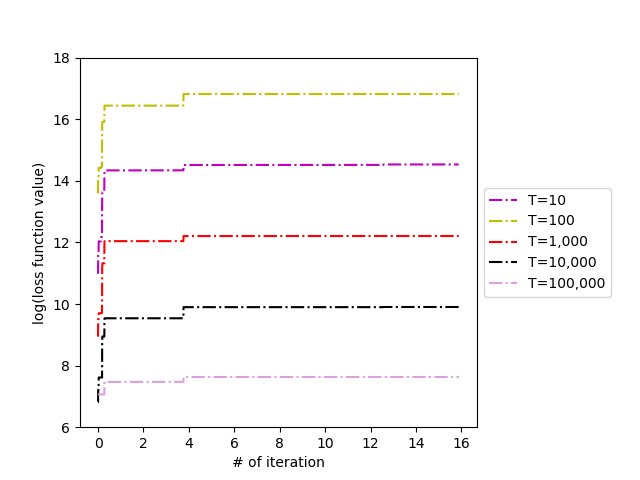}
\caption*{(c)}
  \label{fig3}
\endminipage\hfill
\caption{Comparison of {\scshape OGD} for different orders, learning rates and $T$}
\label{fig:1}
\end{figure}

\begin{figure}[H]
\minipage{0.5\textwidth}
\includegraphics[width=\linewidth]{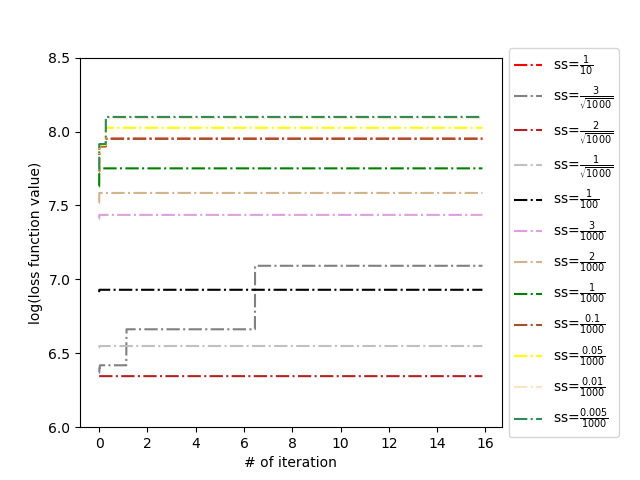}
\caption*{(a)}
\label{fig4}
\endminipage\hfill
\minipage{0.5\textwidth}
\includegraphics[width=\linewidth]{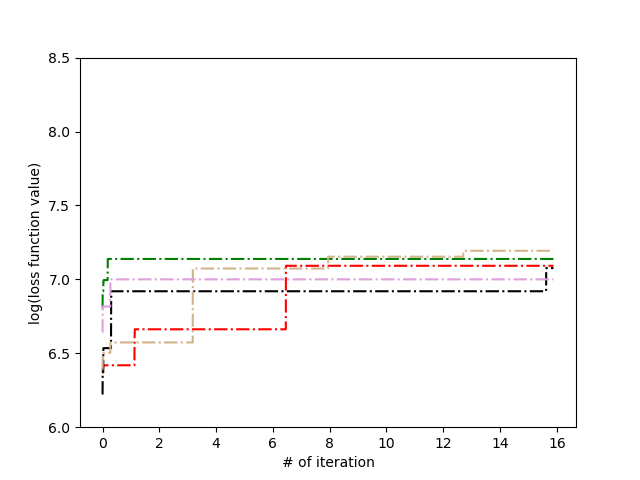}
\caption*{(b)}
  \label{fig5}
\endminipage\hfill
\minipage{0.5\textwidth}
\includegraphics[width=\linewidth]{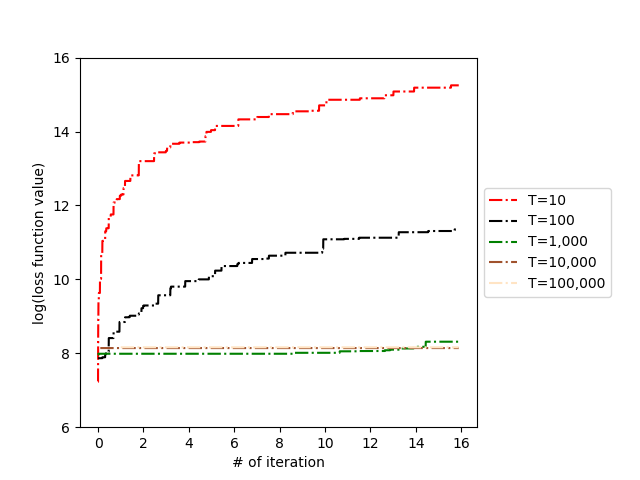}
\caption*{(c)}
\label{fig6}
\endminipage\hfill
\minipage{0.5\textwidth}
\includegraphics[width=\linewidth]{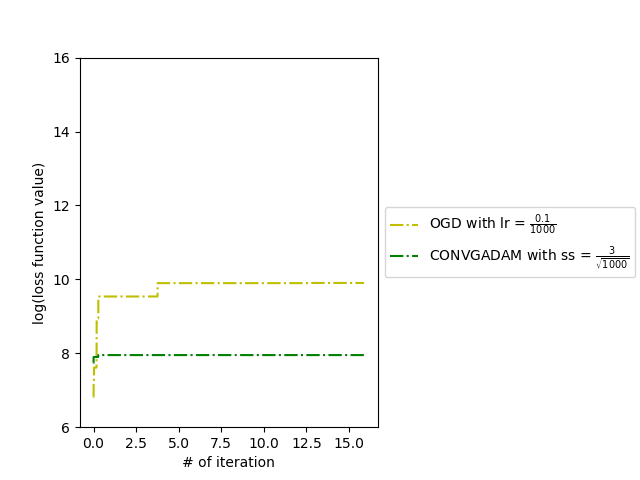}
\caption*{(d)}
\label{fig7}
\endminipage\hfill
\caption{Comparison of {\scshape convgAdam} for different orders, stepsizes and $T$}
\label{fig:2}
\end{figure}

After studying the algorithms on the first label, we test them on the next four labels. In Figure \hyperref[fig:3]{3}, we compare the performance of {\scshape convgAdam} for different $T$ and the difference with {\scshape OGD} on the four labels. In these plots, we observe that $T=1000$ provides a more stable and better performance than the other two values. Moreover, {\scshape convgAdam} outperforms {\scshape OGD} for all considered learning rates and step sizes.

\begin{figure}[H]
\minipage{0.5\textwidth}
\includegraphics[width=\linewidth]{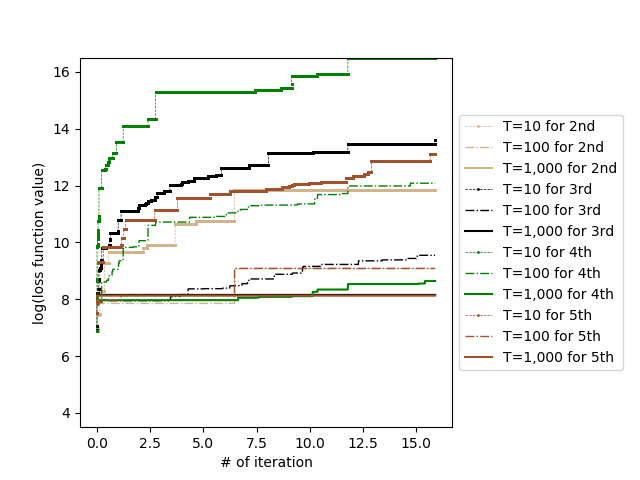}
\caption*{(a) Comparison of {\scshape convgAdam} for different $T$}
\label{fig8}
\endminipage\hfill
\minipage{0.5\textwidth}
\includegraphics[width=\linewidth]{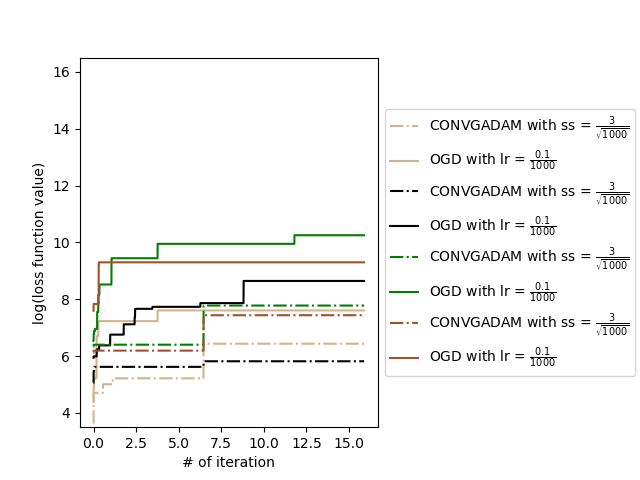}
\caption*{(b) Comparison of {\scshape OGD} and {\scshape convgAdam}}
\label{fig9}
\endminipage\hfill
\caption{Performance of {\scshape convgAdam} and {\scshape OGD} on the remaining labels}
\label{fig:3}
\end{figure}

\subsection{Multiclass SVM with Yahoo! Learn to Rank Challenge Dataset}
In this set of experiments, we study the performances of {\scshape convgAdam} and {\scshape OGD} on Yahoo! Learn to Rank Challenge Dataset\footnote{https://webscope.sandbox.yahoo.com/catalog.php?datatype=c}. The dataset contains 473,134 samples, represented by a total of 700 features and 5 classes.

Figures \hyperref[fig10]{4(a)} and \hyperref[fig11]{4(b)} show the performances of {\scshape OGD} and {\scshape convgAdam} for different learning rates and stepsizes. Figure \hyperref[fig12]{4(c)} provides the performance of {\scshape convgAdam} for different $T$. Lastly, Figure \hyperref[fig13]{4(d)} compares the performance of {\scshape convgAdam} and {\scshape OGD} for a set of good learning rates but same $T$. 

From Figures \hyperref[fig10]{4(a)} and \hyperref[fig11]{4(b)}, we select the learning rate and stepsize 3/${\sqrt{1000}}$ and 2/${\sqrt{1000}}$ for {\scshape convgAdam} and {\scshape OGD}, respectively. From Figure \hyperlink{fig13}{4(d)}, we discover the superior behavior of {\scshape convgAdam} over {\scshape OGD} as we expect.

\begin{figure}[H]
\minipage{0.5\textwidth}
\includegraphics[width=\linewidth]{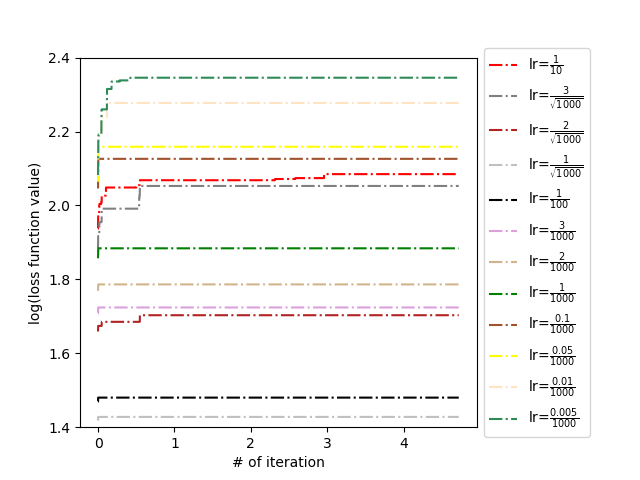}
\caption*{(a) Comparison of {\scshape OGD} for different learning rates}
\label{fig10}
\endminipage\hfill
\minipage{0.5\textwidth}
\includegraphics[width=\linewidth]{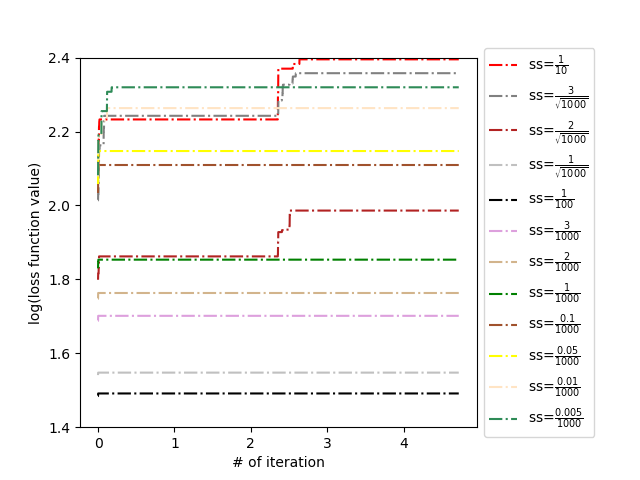}
\caption*{(b) Comparison of {\scshape convgAdam} for different stepsizes}
\label{fig11}
\endminipage\hfill
\minipage{0.5\textwidth}
\includegraphics[width=\linewidth]{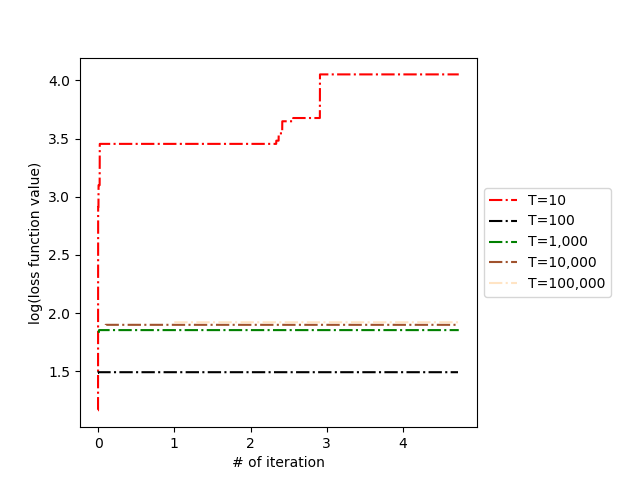}
\caption*{(c) Comparison of {\scshape convgAdam} for different $T$}
\label{fig12}
\endminipage\hfill
\minipage{0.5\textwidth}
\includegraphics[width=\linewidth]{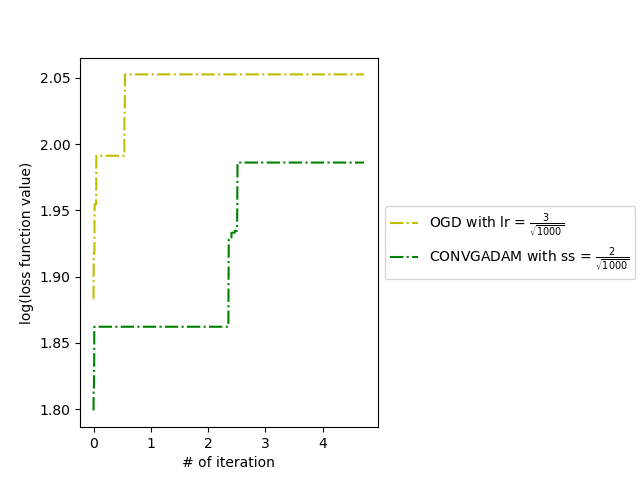}
\caption*{(d) Comparison of {\scshape convgAdam} and {\scshape OGD}}
\label{fig13}
\endminipage\hfill
\label{fig:4}
\caption{Performance of {\scshape convgAdam} on Learn to Rank Challenge dataset}
\end{figure}

\subsection{Multiclass SVM with MNIST8M Dataset}
In this set of experiments, we study the performances of {\scshape convgAdam} and {\scshape OGD} on MNIST8M Dataset\footnote{https://www.csie.ntu.edu.tw/~cjlin/libsvmtools/datasets/}. The dataset is generated on the fly by performing careful elastic deformation of the original MNIST training set. The dataset contains 8,100,000 samples, represented by a total of 784 features and 10 classes.

In Figures \hyperref[fig14]{5(a)} and \hyperref[fig15]{5(b)}, we compare the performances of {\scshape OGD} and {\scshape convgAdam} for different learning rates and stepsizes. Figure \hyperref[fig16]{5(c)} shows that performance of {\scshape convgAdam} for different $T$. Lastly, Figure \hyperref[fig17]{5(d)} depicts the comparison of {\scshape convgAdam} and {\scshape OGD}. From Figures \hyperref[fig14]{5(a)} and \hyperref[fig15]{5(b)}, we select the stepsize 2/${\sqrt{1000}}$ and the learning rate of 1/${1000}$. As we observe, {\scshape convgAdam} always exhibits a better performance than {\scshape OGD}.

\begin{figure}[H]
\minipage{0.5\textwidth}
\includegraphics[width=\linewidth]{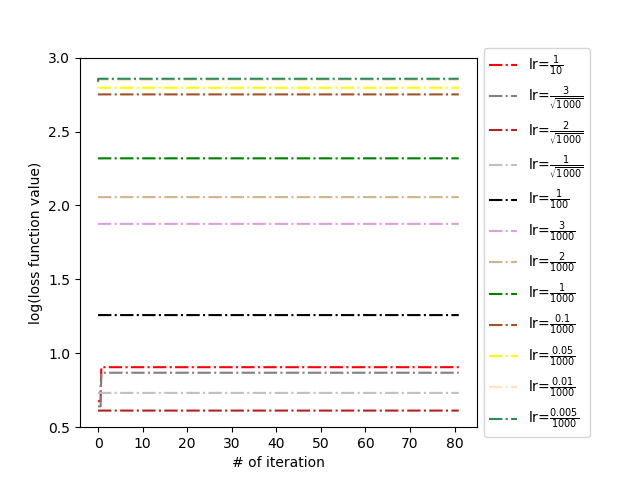}
\caption*{(a) Comparison of {\scshape OGD} for different learning rates}
\label{fig14}
\endminipage\hfill
\minipage{0.5\textwidth}
\includegraphics[width=\linewidth]{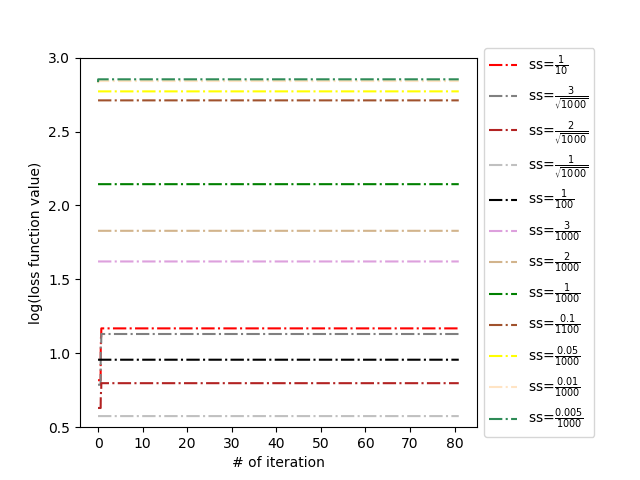}
\caption*{(b) Comparison of {\scshape convgAdam} for different stepsizes}
\label{fig15}
\endminipage\hfill
\minipage{0.5\textwidth}
\includegraphics[width=\linewidth]{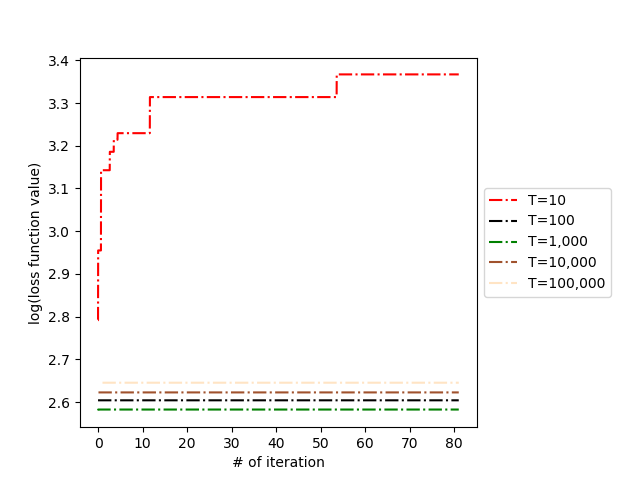}
\caption*{(c) Comparison of {\scshape convgAdam} for different $T$}
\label{fig16}
\endminipage\hfill
\minipage{0.5\textwidth}
\includegraphics[width=\linewidth]{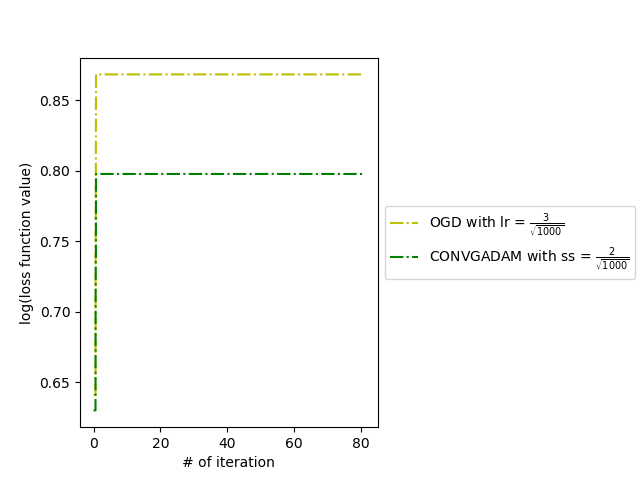}
\caption*{(d) Comparison of {\scshape convgAdam} and {\scshape OGD}}
\label{fig17}
\endminipage\hfill
\caption{Performance of {\scshape convgAdam} on MINST8M dataset}
\label{fig:5}
\end{figure}

\newpage
\bibliographystyle{apa}
\bibliography{references}

\newpage
\section{Appendix}
\subsection*{A\tab Extensions}
We first introduce techniques to guarantee boundedness of the weight $\omega$, i.e. how to remove condition \ref{as1:1} in Assumption \hyperref[as:1]{1}. We then point out problems in the proofs of {\scshape AMSGrad} \cite{reddi2018convergence} and {\scshape AdaBound} \cite{luo2019adaptive} and provide a different proof for {\scshape AMSGrad}.
\subsubsection*{A.1\tab Unbounded Case}
\label{projection}
Projection is a popular technique to guarantee that a weight does not exceed a certain bound (\cite{blum1998line}, \cite{Hazan:2007:LRA:1296038.1296051},  \cite{duchi2011adaptive}, \cite{luo2019adaptive}). For unbounded weight $\hat{\omega}$, we introduce the following notation. Given convex sets $\mathcal{P}_1$, $\mathcal{P}_2$, vectors $\omega_1,\omega_1', g_1$ and matrix $\hat{v}$, we define projections
\begin{align*}
&\Pi_{\mathcal{P}_1}(\hat{\omega})=\argmin_{\omega\in\mathcal{P}_1}\left\|\omega-\hat{\omega}\right\|\\
&\Pi^1_{\mathcal{P}_1,\mathcal{P}_2,\omega_1,g_1,\omega_1'}(\hat{\omega}_2)\\
&=
\argmin_{\omega_{2}': \omega_2'\cdot\left[\left\| \omega_{1}'-\eta g_{1} \right\|\mathbin{/} \sqrt{\frac{1}{2}+\xi_1 } \right] \in\mathcal{P}_2}\left\|\omega_2'-
\argmin_{\omega_2: \omega_1^T\omega_2\in\mathcal{P}_1}\left\|\omega_1^T\omega_2-\omega_1^T\hat{\omega}_2\right\|\right\|\\
&\Pi^2_{\mathcal{P}_1,\mathcal{P}_2,\omega_1,g_1,\omega_1', \hat{v}}(\hat{\omega}_2)\\
&=\argmin_{\omega_2': \omega_2'\cdot \left[\left\| \omega_{1}'-\eta g_{1} \right\|\mathbin{/} \sqrt{\frac{1}{2}+\xi_2 } \right]\in\mathcal{P}_2}\left\|\omega_2'-\argmin_{\omega_2: \omega_1^T\omega_2\in\mathcal{P}_1}\left\|\left(\sqrt[4]{\hat{v}}\odot\omega_2\right)^T\omega_1-\left(\sqrt[4]{\hat{v}}\odot\hat{\omega}_2\right)^T\omega_1\right\|\right\|.
\end{align*}
Projection $\Pi$ is the standard projection which maps vector $\hat{\omega}$ into set $\mathcal{P}_1$. If an optimal weight $\omega_*$ is such that $\omega_*\in\mathcal{P}_1$, then we have
\begin{align*}
    \left\|\Pi_{\mathcal{P}_1}(\hat{\omega}_{t+1})-\omega_*\right\|\leq \left\|\hat{\omega}_{t+1}-\omega_*\right\|,
\end{align*}
which could be directly applied in the proofs of Theorem \ref{thm1} and \ref{thm2}.

For $\Pi^1$ and $\Pi^2$, we could regard them as a combination of two standard projections. Note that, for the outer projection, we require that it does not affect the product of $\omega_1^T\omega_2$, which could be done by projection methods for linear equality constraints. In this way, we have
\begin{align*}
    &\left\|\omega_{1,t+1}^T\Pi^1_{\mathcal{P}_1,\mathcal{P}_2,\omega_{1,t+1},g_{1,t},\omega_{1,t}}(\hat{\omega}_{2,t+1})-\omega_{1,*}^T\omega_{2,*}\right\|\leq \left\|\omega_{1,t+1}^T\hat{\omega}_{2,t+1}-\omega_{1,*}^T\omega_{2,*}\right\|\\
    &\left\|\left(\sqrt[4]{\hat{v}_{2,t}}\odot \Pi^2_{\mathcal{P}_1,\mathcal{P}_2,\omega_{1,t+1},g_{1,t},\omega_{1,t},\hat{v}_{2,t}}(\hat{\omega}_{2,t+1})\right)^T\omega_{1,t+1}-\left(\sqrt[4]{\hat{v}_{2,t}}\odot\omega_{2,*}\right)^T\omega_{1,*}\right\|\\
    &\quad\quad\leq \left\|\left(\sqrt[4]{\hat{v}_{2,t}}\odot\left(\hat{\omega}_{2,t+1}\right)\right)^T\omega_{1,t+1}-\left(\sqrt[4]{\hat{v}_{2,t}}\odot\omega_{2,*}\right)^T\omega_{1,*}\right\|,
\end{align*}
which could also be directly applied in the proofs of Theorem \ref{thm3} and \ref{thm4}.
\subsubsection*{A.2\tab Standard setting of {\scshape Adam}}
\label{correction}
First, let us point out the problem in {\scshape AMSGrad} \cite{reddi2018convergence}. At the bottom of Page 18 in \cite{reddi2018convergence}, the authors obtain an upper bound for the regret which has a term containing $\sum_{t=1}^T\frac{\beta_{1t}\hat{v}_{t,i}^{1/2}}{\alpha_t}$. Without assuming that $\beta_{1t}$ is exponentially decaying, it is questionable to establish $\mathcal{O}(\sqrt{T})$ given $\alpha_t=\frac{1}{\sqrt{t}}$ since $\sum_{t=1}^T\sqrt{t} > \mathcal{O}(\sqrt{T})$. Although this questionable term can be bounded by assumptions on $\beta_{1t}$, the last term in Theorem 4 is $\mathcal{O}(\log(T)\sum_{i=1}^d\left\|g_{1:T,i} \right\|_2)=\mathcal{O}(\log(T)\sqrt{T})$ since $g_{1:T,i}$ is the concatenation of the gradients from 0 to current time $T$ in the $i^{th}$ coordinate. Moreover, 
the authors argue that decaying $\beta_{1t}$ is crucial to guarantee the convergence, however, our proof shows $\mathcal{O}(\sqrt{T})$ regret for {\scshape AMSGrad} with constant $\beta$ and both constant and diminishing stepsizes, which is more practically relevant. For a diminishing stepsize, the slight change we need to make in the proof is that $\eta_t$ needs to be considered together with $\sqrt{\hat{v}_{t,j}}$ in (\ref{ADAMeq:8}) and the rest of proof of Theorem \ref{thm2}. Applying the fact that $\frac{\sqrt{\hat{v}_{t,j}}}{\eta_t}\geq \frac{\sqrt{\hat{v}_{t-1,j}}}{\eta_t}$ and $\sum_{t=1}^T\frac{1}{\sqrt{t}}=2\sqrt{T}-1$ yields $\mathcal{O}(\sqrt{T})$ regret in standard online setting.


Table \ref{resultsummary} summarizes the various regret bounds in different convex settings. 

\begin{table}[H]
\begin{tabular}{|l|l|l|l|l|}
\hline
\multirow{2}{*}{} & \multicolumn{2}{c|}{gradient descent}& \multicolumn{2}{c|}{Adam}\\ \cline{2-5} 
& constant & diminishing & constant & diminishing\\ \hline
standard online   & \begin{tabular}[c]{@{}l@{}}$\mathcal{O}(\sqrt{T})$(us)\\ $\mathcal{O}(T)$\cite{zinkevich2003online}\end{tabular} & \begin{tabular}[c]{@{}l@{}}$\mathcal{O}(\sqrt{T})$(us)\\ $\mathcal{O}(\sqrt{T})$\cite{zinkevich2003online}\end{tabular}& $\mathcal{O}(\sqrt{T})$(us)& \begin{tabular}[c]{@{}l@{}}$\mathcal{O}(\sqrt{T})$(us) \\ $\mathcal{O}(\sqrt{T})$\cite{reddi2018convergence} (flawed)\\ $\mathcal{O}(\log(T)\sqrt{T})$\cite{reddi2018convergence} (true)\end{tabular} \\ \hline
streaming& $\mathcal{O}(\sqrt{T})$(us)& $\mathcal{O}(T)$(us)&$\mathcal{O}(\sqrt{T})$(us)&$\mathcal{O}(T)$(us)\\ \hline
\end{tabular}
\captionsetup{justification=centering}
\captionsetup{width=.75\textwidth}
\caption{Summary of known regret bounds for online learning and streaming in convex setting}
\label{resultsummary}
\end{table}

\subsection*{B\tab Regret with Rolling Window Analysis of {\scshape OGD}}
\subsubsection*{Proof of Theorem \ref{thm1}}
\label{proof:thm1}
\begin{proof}

For any $p\in\mathbb{N}$ and fixed $T$, from step~\ref{SGDupdate} in Algorithm~\hyperref[SGD]{1}, for any $\omega^*$, we obtain
\begin{align*}
&\left\|\omega_{t+1}-\omega^*\right\|^2=\left\|\omega_t-\eta \bigtriangledown f_t(\omega_t)-\omega^*\right\|^2\\
=&\left\|\omega_t-\omega^*\right\|^2-2\eta\left\langle \omega_t-\omega^*,\bigtriangledown f_t(\omega_t)\right\rangle+\eta^2 \left\|\bigtriangledown f_t(\omega_t)\right\|^2,
\end{align*}
which in turn yields
\begin{align}
\label{SGDeq:1}
\left\langle \omega_t-\omega^*,\bigtriangledown f_t(\omega_t)\right\rangle = \frac{\left\|\omega_t-\omega^*\right\|^2-\left\|\omega_{t+1}-\omega^*\right\|^2}{2\eta}+\frac{\eta}{2} \left\|\bigtriangledown f_t(\omega_t)\right\|^2.
\end{align}
Applying convexity of $f_t$ yields
\begin{align}
f_t(\omega_t)-f_t(\omega^*)\leq
\left\langle \omega_t-\omega^*,\bigtriangledown f_t(\omega_t)\right\rangle.
\label{SGDeq:2}
\end{align}
Inserting (\ref{SGDeq:1}) into (\ref{SGDeq:2}) gives
\begin{align*}
f_t(\omega_t)-f_t(\omega^*)\leq \frac{\left\|\omega_t-\omega^*\right\|^2-\left\|\omega_{t+1}-\omega^*\right\|^2}{2\eta}+\frac{\eta}{2} \left\|\bigtriangledown f_t(\omega_t)\right\|^2.
\end{align*}
By summing up all differences, we obtain
\begin{align}
\label{SGDeq:3}
\sum_{t=p}^{T+p}\left[f_t(\omega_t)-f_t(\omega^*)\right]&\leq\frac{1}{2}\sum_{t=p}^{T+p}\left[\frac{\left\|\omega_t-\omega^*\right\|^2-\left\|\omega_{t+1}-\omega^*\right\|^2}{\eta}+\eta \left\|\bigtriangledown f_t(\omega_t)\right\|^2\right]\nonumber\\
&\leq \frac{1}{2}\left(\frac{\left\|\omega_p-\omega^*\right\|^2}{\eta} \right)+dG_{\infty}\sum_{t=p}^{T+p}\eta\nonumber\\
&\leq \frac{D_{\infty}^2\sqrt{T}}{2\eta_1}+dG_{\infty}\eta_1\sqrt{T}=\mathcal{O}(\sqrt{T}).
\end{align}
The second inequality holds due to \ref{as1:2} in Assumption \hyperref[as:1]{1} and the last inequality uses \ref{as1:4} in Assumption \hyperref[as:1]{1} and the definition of $\eta$. Since (\ref{SGDeq:3}) holds for any $p$ and $\omega^*$, setting $\omega^*=\omega_p^*$ for each $p$ yields the statement in Theorem \ref{thm1}. 
\end{proof}
\subsection*{C\tab Regret with Rolling Window Analyses of {\scshape convgAdam}}
\begin{lem}
\label{lem1}
Under the conditions assumed in Theorem \hyperref[thm2]{2}, we have
\begin{align*}
\sum_{t=p}^{T+p}\left\|\frac{1}{\sqrt[4]{\hat{v}_t}}\odot m_t\right\|^2\leq \mathcal{O}(T).
\end{align*}
\end{lem}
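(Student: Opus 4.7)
The plan is to bound each individual summand $\left\|\frac{1}{\sqrt[4]{\hat v_t}} \odot m_t\right\|^2 = \sum_{i=1}^d \frac{[m_t]_i^2}{\sqrt{[\hat v_t]_i}}$ by a constant independent of $t$, so that summing over $T+1$ values of $t$ immediately yields $\mathcal{O}(T)$. The key is to use the assumption $\lambda := \beta_1/\sqrt{\beta_2} < 1$ from Theorem \ref{thm2} to convert a double sum into a convergent geometric series.

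The first step is to unroll the recursion in step~\ref{convg:m} to obtain $[m_t]_i = (1-\beta_1)\sum_{k=1}^t \beta_1^{t-k}[g_k]_i$ (using $m_0 = 0$). Applying Cauchy--Schwarz on the split $\beta_1^{t-k} = \beta_1^{(t-k)/2}\cdot\beta_1^{(t-k)/2}$ gives
\begin{align*}
[m_t]_i^2 \;\le\; (1-\beta_1)^2 \Bigl(\sum_{k=1}^t \beta_1^{t-k}\Bigr)\Bigl(\sum_{k=1}^t \beta_1^{t-k}[g_k]_i^2\Bigr) \;\le\; (1-\beta_1)\sum_{k=1}^t \beta_1^{t-k}[g_k]_i^2.
\end{align*}
The second step is to extract a useful lower bound for $\sqrt{[\hat v_t]_i}$. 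Since $\hat v_t \geq v_t$ elementwise and unrolling step~\ref{ADAM:vupdate} gives $[v_t]_i \ge (1-\beta_2)\beta_2^{t-k}[g_k]_i^2$ for every $k \le t$, we obtain $\sqrt{[\hat v_t]_i} \ge \sqrt{1-\beta_2}\,\beta_2^{(t-k)/2}|[g_k]_i|$. Dividing the bound from the first step by this and using $|[g_k]_i| \le G_\infty$ from \ref{as1:2} yields
\begin{align*}
\frac{[m_t]_i^2}{\sqrt{[\hat v_t]_i}} \;\le\; (1-\beta_1)\sum_{k=1}^t \beta_1^{t-k}\,\frac{|[g_k]_i|}{\sqrt{1-\beta_2}\,\beta_2^{(t-k)/2}} \;\le\; \frac{(1-\beta_1)G_\infty}{\sqrt{1-\beta_2}}\sum_{k=1}^t \lambda^{t-k},
\end{align*}
where the case $[g_k]_i = 0$ is handled by the convention introduced after step~\ref{ADAM:update}. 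Because $\lambda < 1$, the geometric sum is bounded by $1/(1-\lambda)$, so the whole expression is at most a constant $C$ independent of $t$.

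The final step is to sum over $i \in \{1,\dots,d\}$ and over $t \in \{p,\dots,T+p\}$:
\begin{align*}
\sum_{t=p}^{T+p}\Bigl\|\tfrac{1}{\sqrt[4]{\hat v_t}}\odot m_t\Bigr\|^2 \;=\; \sum_{t=p}^{T+p}\sum_{i=1}^d \frac{[m_t]_i^2}{\sqrt{[\hat v_t]_i}} \;\le\; d(T+1)\,C \;=\; \mathcal{O}(T).
\end{align*}
Since the derivation is uniform in $p$, the bound is independent of the rolling window position. There is no serious obstacle here beyond getting the geometric bookkeeping right; the main subtlety is the choice to split $\beta_1^{t-k}$ symmetrically in Cauchy--Schwarz and to pair each $k$-index in the $m_t$-expansion with the matching $k$-index lower bound on $[\hat v_t]_i$, which is precisely what generates the ratio $\lambda = \beta_1/\sqrt{\beta_2}$ and justifies the hypothesis $\lambda < 1$ in Theorem \ref{thm2}.
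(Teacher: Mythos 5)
Your proof is correct and follows essentially the same route as the paper's: unroll the recursions for $m_t$ and $v_t$, apply Cauchy--Schwarz with the symmetric split of $\beta_1^{t-k}$, and pair the $k$-th numerator term with the $k$-th term retained in $\hat v_t$ to produce the geometric ratio $\lambda=\beta_1/\sqrt{\beta_2}$. Your finish is in fact slightly cleaner -- bounding each summand by a $t$-independent constant via $|[g_k]_i|\le G_\infty$ and $\sum_k\lambda^{t-k}\le 1/(1-\lambda)$ -- whereas the paper carries the $\sum_k\lambda^{t-k}\|g_{k,j}\|$ term forward and performs an interchange of the double sum over $t$ and $k$ before invoking the same bounds, arriving at the same $\mathcal{O}(T)$ conclusion.
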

\begin{proof}[Proof of Lemma \ref{lem1}]
By the definition of $\hat{v}_t$, for any $t=p,p+1,\cdots, T+p$, we obtain
\begin{align}
\label{ADAMeq:1}
\left\|\frac{1}{\sqrt[4]{\hat{v}_t}}\odot m_t\right\|^2&=\sum_{j=1}^d\frac{m_{t,j}^2}{\sqrt{\hat{v}_{t,j}}}\leq \sum_{j=1}^d\frac{m_{t,j}^2}{\sqrt{v_{t,j}}}=\sum_{j=1}^d\frac{\left( (1-\beta_1)\sum_{i=1}^t\beta_1^{t-i}g_{i,j}\right)^2}{\sqrt{(1-\beta_2)\sum_{i=1}^t\beta_2^{t-i}g_{i,j}^2}}\nonumber\\
&\leq \frac{(1-\beta_1)^2}{\sqrt{1-\beta_2}} \sum_{j=1}^d\frac{\left(\sum_{i=1}^t\beta_1^{t-i} \right)\left(\sum_{i=1}^t\beta_1^{t-i}g_{i,j}^2 \right)}{\sqrt{\sum_{i=1}^t\beta_2^{t-i}g_{i,j}^2}}\nonumber\\
&\leq \frac{1-\beta_1}{\sqrt{1-\beta_2}} \sum_{j=1}^d\frac{\sum_{i=1}^t\beta_1^{t-i}g_{i,j}^2 }{\sqrt{\sum_{i=1}^t\beta_2^{t-i}g_{i,j}^2}}\leq \frac{1-\beta_1}{\sqrt{1-\beta_2}}\sum_{j=1}^d\sum_{i=1}^t\left(\frac{\beta_1}{\sqrt{\beta_2}} \right)^{t-i}\left\|g_{i,j}\right\|_2\nonumber\\
&=\frac{1-\beta_1}{\sqrt{1-\beta_2}}\sum_{j=1}^d\sum_{i=1}^t\lambda^{t-i}\left\|g_{i,j}\right\|_2.
\end{align}
The second equality follows from the updating rule of Algorithm \ref{convgADAM}. The second inequality follows from the Cauchy-Schwarz inequality, while the third inequality follows from the inequality $\sum_{i=1}^t\beta_1^{t-i}\leq \frac{1}{1-\beta_1}$. Using (\ref{ADAMeq:1}) for all time steps yields
\begin{align}
\label{ADAMeq:2}
&\sum_{t=p}^{T+p}\frac{1}{\sqrt{\hat{v}_t}}\odot \left(m_t\odot m_t\right)\nonumber\\
\leq & \frac{1-\beta_1}{\sqrt{1-\beta_2}}\sum_{t=p}^{T+p}\sum_{j=1}^d\sum_{i=1}^t\lambda^{t-i}\left\|g_{i,j}\right\|_2\nonumber\\
=&\frac{1-\beta_1}{\sqrt{1-\beta_2}}\sum_{j=1}^d\sum_{t=p}^{T+p}\left(\sum_{i=p+1}^t\lambda^{t-i}\left\|g_{i,j}\right\|_2+\sum_{i=1}^{p}\lambda^{t-i}\left\|g_{i,j}\right\|_2 \right)\nonumber\\
=&\frac{1-\beta_1}{\sqrt{1-\beta_2}}\sum_{j=1}^d\left(\sum_{t=p+1}^{T+p}\sum_{i=p+1}^t\lambda^{t-i}\left\|g_{i,j}\right\|_2+\sum_{t=p}^{T+p}\sum_{i=1}^{p}\lambda^{t-i}\left\|g_{i,j}\right\|_2 \right)\nonumber\\
=&\frac{1-\beta_1}{\sqrt{1-\beta_2}}\sum_{j=1}^d\left(\sum_{t=p+1}^{T+p}\sum_{i=p+1}^t\lambda^{t-i}\left\|g_{i,j}\right\|_2+
\left(\sum_{i=1}^p\lambda^{p-i}\left\|g_{i,j}\right\|_2\right)\left(\sum_{i=0}^T\lambda^i\right) \right).
\end{align}
We first bound the first term in (\ref{ADAMeq:2}) for each $j$ as follows, 
\begin{align}
&\sum_{t=p+1}^{T+p}\sum_{i=p+1}^t\lambda^{t-i}\left\|g_{i,j}\right\|_2=\sum_{i=p+1}^{T+p}\left\|g_{i,j}\right\|_2\sum_{t=i}^{T+p}\lambda^{T+p-t}\nonumber\\
\leq &\frac{1}{1-\lambda}\sum_{t=p+1}^{T+p}\left\|g_{i,j}\right\|_2\leq \frac{TG_{\infty}}{1-\lambda}.
\label{ADAMeq:3}
\end{align}
The first inequality follows from the fact that $\sum_{t=i}^{T+p}\lambda^{T+p-t}<\frac{1}{1-\lambda}$ and the last inequality is due to \ref{as1:2} in Assumption \hyperref[as:1]{1}. Using a similar argument, we further bound the second term in (\ref{ADAMeq:2}) as follows,
\begin{align}
&\left(\sum_{i=1}^p\lambda^{p-i}\left\|g_{i,j}\right\|_2\right)\left(\sum_{i=0}^T\lambda^i\right)\leq \frac{1}{1-\lambda}\left(\sum_{i=1}^p\lambda^{p-i}\left\|g_{i,j}\right\|_2\right)\nonumber\\
\leq &\frac{G_{\infty}}{1-\lambda}\left(\sum_{i=1}^p\lambda^{p-i}\right)\leq  \frac{G_{\infty}}{\left(1-\lambda\right)^2}.
\label{ADAMeq:4}
\end{align}
Inserting (\ref{ADAMeq:3}) and (\ref{ADAMeq:4}) into (\ref{ADAMeq:2}) implies
\begin{align*}
\sum_{t=p}^{T+p}\left\|\frac{1}{\sqrt[4]{\hat{v}_t}}\odot m_t\right\|^2\leq \frac{d\left(1-\beta_1\right)}{\sqrt{1-\beta_2}}\left( \frac{TG_{\infty}}{1-\lambda}+ \frac{G_{\infty}}{\left(1-\lambda\right)^2}\right).
\end{align*}
This completes the proof of the lemma.
\end{proof}
In order to establish the regret analysis of Algorithm \ref{convgADAM}, we further need the following intermediate result.
\begin{lem}
\label{lem2}
Under the conditions in Theorem \hyperref[thm2]{2}, we have
\begin{align*}
\sum_{t=p}^{T+p}\left\|m_{t-1}\right\|^2\leq \mathcal{O}(T).
\end{align*}
\end{lem}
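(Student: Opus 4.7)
The plan is to exploit the closed-form expression for $m_t$ implied by the recursion in step~\ref{convg:m} of Algorithm~\ref{convgADAM} together with the uniform bound on gradients from Assumption~\hyperref[as:1]{1}(\ref{as1:2}). Unrolling step~\ref{convg:m} with $m_0=0$ gives
\begin{equation*}
m_t=(1-\beta_1)\sum_{i=1}^{t}\beta_1^{t-i}g_i,
\end{equation*}
so coordinate-wise $m_{t,j}=(1-\beta_1)\sum_{i=1}^{t}\beta_1^{t-i}g_{i,j}$. This closely parallels the starting point used in the proof of Lemma~\ref{lem1}.

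Next I would follow exactly the Cauchy--Schwarz step used there: applying Cauchy--Schwarz to the sum defining $m_{t,j}^2$ and using $\sum_{i=1}^{t}\beta_1^{t-i}\le \frac{1}{1-\beta_1}$ gives
\begin{equation*}
m_{t,j}^2 \le (1-\beta_1)\sum_{i=1}^{t}\beta_1^{t-i}g_{i,j}^2.
\end{equation*}
Summing over $j$ and invoking Assumption~\hyperref[as:1]{1}(\ref{as1:2}), which provides $g_{i,j}^2\le G_\infty^2$, yields
\begin{equation*}
\|m_t\|^2 \le d(1-\beta_1)G_\infty^2\sum_{i=1}^{t}\beta_1^{t-i}\le dG_\infty^2.
\end{equation*}
This uniform coordinate-wise bound is the critical step and requires only the assumed boundedness of gradients, not any cancellation among them; in that sense the argument is strictly easier than Lemma~\ref{lem1}, which also needed the definition of $\hat v_t$ to untangle the denominator.

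Finally I would sum the uniform bound over the window: for the shifted index $t-1$ running over $p,\dots,T+p$,
\begin{equation*}
\sum_{t=p}^{T+p}\|m_{t-1}\|^2 \le (T+1)\,dG_\infty^2 = \mathcal{O}(T),
\end{equation*}
which is the claim. I do not anticipate any real obstacle here — the entire argument is a direct replay of the opening lines of the proof of Lemma~\ref{lem1} with the $1/\sqrt[4]{\hat v_t}$ factor removed, so the sum in $t$ does not need to be rearranged and the geometric series in $\lambda$ does not appear. If one wished to sharpen or generalize the result (for example, allow unbounded gradients), one could instead swap the order of summation over $t$ and $i$ to obtain $\sum_{t=p}^{T+p}\|m_{t-1}\|^2 \le \tfrac{1}{1-\beta_1}\sum_{i=1}^{T+p}\|g_i\|^2\cdot\!\!\sum_{t\ge i}\beta_1^{t-1-i}$, but with Assumption~\hyperref[as:1]{1}(\ref{as1:2}) in force the simpler bound already suffices.
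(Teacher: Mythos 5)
Your proposal is correct and follows essentially the same route as the paper's own proof: unroll the recursion for $m_t$, apply Cauchy--Schwarz to split $\bigl(\sum_i \beta_1^{t-i} g_{i,j}\bigr)^2$, bound the geometric series by $\frac{1}{1-\beta_1}$, and invoke the gradient bound from Assumption~\hyperref[as:1]{1}. The only cosmetic difference is that you establish the uniform bound $\|m_t\|^2\le dG_\infty^2$ before summing over $t$ (and you correctly write $G_\infty^2$ where the paper's final line drops the square), whereas the paper carries the double sum throughout; the inequalities are identical.
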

\begin{proof}[Proof of Lemma \ref{lem2}]
By the definition of $m_t$, we obtain
\begin{align*}
&\sum_{t=p}^{T+p}\left\|m_{t-1}\right\|^2=\sum_{t=p}^{T+p}\sum_{j=1}^dm_{t-1,j}^2\\
=&\sum_{t=p}^{T+p}\sum_{j=1}^d\left( (1-\beta_1)\sum_{i=1}^t\beta_1^{t-i}g_{i,j}\right)^2\\
\leq & (1-\beta_1)^2\sum_{t=p}^{T+p} \sum_{j=1}^d\left(\sum_{i=1}^t\beta_1^{t-i} \right)\left(\sum_{i=1}^t\beta_1^{t-i}g_{i,j}^2 \right)\\
\leq &(1-\beta_1)\sum_{t=p}^{T+p} \sum_{j=1}^d\left(\sum_{i=1}^t\beta_1^{t-i}g_{i,j}^2 \right)\leq (1-\beta_1)\sum_{t=p}^{T+p}\sum_{j=1}^d \left(G_{\infty}\sum_{i=1}^t\beta_1^{t-i} \right)\\
\leq &\sum_{t=p}^{T+p}\sum_{j=1}^d G_{\infty}=dTG_{\infty}.
\end{align*}
The first inequality follows from the Cauchy-Schwarz inequality. The second and the last inequalities use the fact that $\sum_{i=1}^t\beta_1^{t-i}\leq \frac{1}{1-\beta_1}$. The third inequality is due to \ref{as1:2} in Assumption \hyperref[as:1]{1}. This completes the proof of the lemma. 
\end{proof}
\subsubsection*{Proof of Theorem \ref{thm2}}
\label{proof:thm2}
\begin{proof}
Based on the update step~\ref{ADAM:update} in Algorithm \ref{convgADAM} and given any $\omega^*\in\mathbb{R}^d$, we obtain
\begin{align}
&\left\|\omega_{t+1}-\omega^*\right\|^2=\left\|\omega_t-\frac{\eta }{\sqrt{\hat{v}_t}}\odot m_t-\omega^*\right\|^2\nonumber\\
=&\left\|\omega_t-\omega^*\right\|^2-2\left\langle \omega_t-\omega^*,\frac{\eta }{\sqrt{\hat{v}_t}}\odot m_t\right\rangle+\left\|\frac{\eta }{\sqrt{\hat{v}_t}}\odot m_t\right\|^2\nonumber\\
=&\left\|\omega_t-\omega^*\right\|^2-2\left\langle \omega_t-\omega^*,\frac{\eta (1-\beta_1) }{\sqrt{\hat{v}_t}}\odot g_t\right\rangle-2\left\langle \omega_t-\omega^*,\frac{\eta \beta_1 }{\sqrt{\hat{v}_t}}\odot m_{t-1} \right\rangle\nonumber\\
&\quad\quad\quad\quad\quad\quad\quad\quad\quad+\left\|\frac{\eta }{\sqrt{\hat{v}_t}\odot m_t}\right\|^2.
\label{ADAMeq:5}
\end{align}
The first inequality uses the same argument as those used in Theorem \ref{thm1}. Rearranging (\ref{ADAMeq:5}) gives
\begin{align}
\label{ADAMeq:6}
\left\langle \omega_t-\omega^*,g_t\right\rangle&= \frac{\left[\left\|\sqrt[4]{\hat{v}_t}\odot(\omega_t-\omega^*)\right\|^2-\left\|\sqrt[4]{\hat{v}_t}\odot(\omega_{t+1}-\omega^*)\right\|^2 \right]}{2\eta(1-\beta_1)}\nonumber\\
&\quad-\frac{\beta_1}{1-\beta_1}\left\langle\frac{\omega_t-\omega^*}{\sqrt{\eta}}, m_{t-1}\sqrt{\eta} \right\rangle + \frac{1}{2\eta(1-\beta_1)}\left\|\frac{\eta }{\sqrt[4]{\hat{v}_t}}\odot m_t \right\|^2\nonumber\\
&\leq \frac{\left[\left\|\sqrt[4]{\hat{v}_t}\odot(\omega_t-\omega^*)\right\|^2-\left\|\sqrt[4]{\hat{v}_t}\odot(\omega_{t+1}-\omega^*)\right\|^2 \right]}{2\eta(1-\beta_1)}\nonumber\\
&\quad+\frac{\beta_1}{1-\beta_1}\left[\frac{\left\|\omega_t-\omega^* \right\|^2}{2\eta}+\frac{m_{t-1}\odot m_{t-1}\eta}{2} \right] +\frac{\eta }{2(1-\beta_1)}\left\|\frac{1}{\sqrt[4]{\hat{v}_t}} \odot m_t\right\|^2.
\end{align}
From the strong convexity property of $f_t$ in \ref{as1:4} in Assumption \hyperref[as:1]{1},  we obtain
\begin{align*}
f_t(\omega_t)-f_t(\omega^*)\leq \left\langle \omega_t-\omega^*,\bigtriangledown f_t(\omega_t)\right\rangle-\frac{H}{2}\left\|\omega_t-\omega^*\right\|^2.
\end{align*}
Using (\ref{ADAMeq:6}) in the above inequality and summing up over all time steps yields
\begin{align}
&\sum_{t=p}^{T+p}\left[f_t(\omega_t)-f_t(\omega^*)\right]\nonumber\\
\leq &\sum_{t=p}^{T+p}\left\{\frac{\left[\left\|\sqrt[4]{\hat{v}_t}\odot(\omega_t-\omega^*)\right\|^2-\left\|\sqrt[4]{\hat{v}_t}\odot(\omega_{t+1}-\omega^*)\right\|^2 \right]}{2\eta(1-\beta_1)}+\left\|\omega_t-\omega^* \right\|^2\left[\frac{\beta_1}{2\eta(1-\beta_1)} -\frac{H}{2}\right]\right.\nonumber\\
&\quad\quad\quad\quad\quad\quad\left.
+\frac{\eta}{2(1-\beta_1)}\left[\beta_1m_{t-1}\odot m_{t-1}+\left\|\frac{1}{\sqrt[4]{\hat{v}_t}}\odot m_t \right\|^2 \right] \right\}.
\label{ADAMeq:7}
\end{align}
We proceed by separating (\ref{ADAMeq:7}) into 3 parts and find upper bounds for each one of them. Considering the first part in (\ref{ADAMeq:7}), we have
\begin{align}
\label{ADAMeq:8}
&\sum_{t=p}^{T+p}\frac{\left[\left\|\sqrt[4]{\hat{v}_t}\odot(\omega_t-\omega^*)\right\|^2-\left\|\sqrt[4]{\hat{v}_t}\odot(\omega_{t+1}-\omega^*)\right\|^2 \right]}{2\eta(1-\beta_1)}\nonumber\\
\leq&\frac{\left\|\sqrt[4]{\hat{v}_p}\odot(\omega_p-\omega^*)\right\|^2}{2\eta(1-\beta_1)}+\frac{1}{2\eta(1-\beta_1)}\sum_{t=p+1}^{T+p}\left(\left\|\sqrt[4]{\hat{v}_t}\odot(\omega_t-\omega^*) \right\|^2\right.\nonumber\\
&\quad\quad\quad\quad\quad\quad\quad\left.-\left\|\sqrt[4]{\hat{v}_{t-1}}\odot(\omega_t-\omega^*) \right\|^2 \right) \nonumber\\
=&\frac{1}{2\eta(1-\beta_1)}\left[\left\|\sqrt[4]{\hat{v}_p}\odot(\omega_p-\omega^*)\right\|^2+\sum_{t=p+1}^{T+p}\left(\sum_{j=1}^d\sqrt{\hat{v}_{t,j}}(\omega_{t,j}-\omega^{*,j})^2\right.\right.\nonumber\\
&\quad\quad\quad\quad\quad\quad\quad\left.\left.-\sum_{j=1}^d\sqrt{\hat{v}_{t-1,j}}(\omega_{t,j}-\omega^{*,j})^2\right) \right]\nonumber\\
=&\frac{1}{2\eta(1-\beta_1)}\left[\left\|\sqrt[4]{\hat{v}_p}\odot(\omega_p-\omega^*)\right\|^2+\sum_{t=p+1}^{T+p}\left(\sum_{j=1}^d(\omega_{t,j}-\omega^{*,j})^2\left( \sqrt{\hat{v}_{t,j}}-\sqrt{\hat{v}_{t-1,j}}\right)\right) \right].
\end{align}
Since $\hat{v}_{t,j}$ is maximum of all $v_{t,j}$ for each $j$ until the current time step, i.e. $\sqrt{\hat{v}_{t,j}}-\sqrt{\hat{v}_{t-1,j}}\geq 0$, by using \ref{as1:1} in Assumption \hyperref[as:1]{1}, (\ref{ADAMeq:8}) can be further bounded as follows,
\begin{align}
&\sum_{t=p}^{T+p}\frac{\left[\left\|\sqrt[4]{\hat{v}_t}\odot(\omega_t-\omega^*)\right\|^2-\left\|\sqrt[4]{\hat{v}_t}\odot(\omega_{t+1}-\omega^*)\right\|^2 \right]}{2\eta(1-\beta_1)}\nonumber\\
\leq& \frac{1}{2\eta(1-\beta_1)}\left[\left\|\sqrt[4]{\hat{v}_p}\odot(\omega_p-\omega^*)\right\|^2+D_{\infty}^2\sum_{j=1}^d\sum_{t=p+1}^{T+p}\left(\sqrt{\hat{v}_{t,j}}-\sqrt{\hat{v}_{t-1,j}}\right)\right]\nonumber\\
\leq& \frac{1}{2\eta(1-\beta_1)}\left[D_{\infty}^2\sum_{j=1}^d\sqrt{\hat{v}_{p,j}}+D_{\infty}^2\sum_{j=1}^d\sum_{t=p+1}^{T+p}\left(\sqrt{\hat{v}_{t,j}}-\sqrt{\hat{v}_{t-1,j}}\right) \right]\nonumber\\
=& \frac{1}{2\eta(1-\beta_1)}D_{\infty}^2\sum_{j=1}^d\sqrt{\hat{v}_{p+T,j}}\nonumber.
\end{align}
By the definition of $\hat{v}_t$ in step~\ref{ADAM:vupdate} in Algorithm \ref{convgADAM}, for any $t$ and $j$, we have
\begin{align*}
v_{t,j}=(1-\beta_2)\sum_{i=1}^t\beta_2^{t-i}g_{i,j}^2\leq (1-\beta_2)G_{\infty}^2\sum_{i=1}^t\beta_2^{t-i}\leq G_{\infty}^2,
\end{align*}
which in turn yields
\begin{align}
\sum_{t=p}^{T+p}&\frac{\left[\left\|\sqrt[4]{\hat{v}_t}\odot(\omega_t-\omega^*)\right\|^2-\left\|\sqrt[4]{\hat{v}_t}\odot(\omega_{t+1}-\omega^*)\right\|^2 \right]}{2\eta(1-\beta_1)}\leq \frac{dD_{\infty}^2G_{\infty}}{2\eta(1-\beta_1)}=\mathcal{O}(\sqrt{T}).
\label{ADAMeq:11}
\end{align}
The last equality is due to the setting of the stepsize, i.e. $\eta=\frac{\eta_1}{\sqrt{T}}$. For the second term in (\ref{ADAMeq:7}), from the relationship between $\beta_1$ and $H$, we obtain
\begin{align*}
\frac{\beta_1}{1-\beta_1}\leq H\eta,
\end{align*}
which in turn yields
\begin{align}
\label{ADAMeq:9}
\frac{\beta_1}{2\eta(1-\beta_1)}-\frac{H}{2}\leq 0.
\end{align}
Thus, (\ref{ADAMeq:9}) guarantees negativity of the second term in (\ref{ADAMeq:7}). For the third term in (\ref{ADAMeq:7}), by using Lemmas \ref{lem1} and \ref{lem2}, we assert
\begin{align}
\label{ADAMeq:10}
\frac{\eta}{2(1-\beta_1)}\left[\beta_1m_{t-1}\odot m_{t-1}+\left\|\frac{1}{\sqrt[4]{\hat{v}_t}}\odot m_t \right\|^2 \right] \leq \mathcal{O}(\frac{1}{\sqrt{T}})\cdot\mathcal{O}(T)=\mathcal{O}(\sqrt{T}).
\end{align}
The desired result follows directly from (\ref{ADAMeq:7}), (\ref{ADAMeq:11}), (\ref{ADAMeq:9}) and (\ref{ADAMeq:10}).
\end{proof}

\subsection*{D\tab Regret with Rolling Window Analysis of dnnOGD for Two-Layer ReLU Neural Network}
For a two-layer ReLU neural network, we first introduce $\mathcal{F}^t$ that records all previous iterates up until $t$.
\begin{lem}
\label{lem:obj}
If conditions \ref{as2:1} and \ref{as2:2} hold from Assumption \hyperref[as:2]{2}, we have
\begin{align}
\mathop{{}\mathbb{E}}\left[l_t\left(\omega_{1,t},\omega_{2,t} \right)\mid\mathcal{F}^t\right] 
=\frac{\rho^2}{2}(\omega_{1,t}^T\omega_{2,t}z^t-\omega_{1,*}^T\omega_{2,*}z^t)^2.
\label{obj:non}
\end{align}
\end{lem}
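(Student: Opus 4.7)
The plan is to compute the conditional expectation by exploiting the independence of the two Bernoulli activations $\sigma_1$ and $\sigma_2$ together with the fact that $\omega_{1,t}$, $\omega_{2,t}$, $z^t$, and $y^t$ are all $\mathcal{F}^t$-measurable. First I would write out $l_t(\omega_{1,t},\omega_{2,t}) = f_t(\omega_{1,t},\omega_{2,t}) - f_t(\omega_{1,*},\omega_{2,*})$ using the definition from (\ref{loss:non}), and take the conditional expectation of each of the two summands separately.

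For the piece involving $(\omega_{1,t},\omega_{2,t})$, the mutual independence of $\sigma_1$ and $\sigma_2$ (condition \ref{as2:1}) together with their independence from $\mathcal{F}^t$ allows the expectation of the product $(\omega_{1,t}^T\sigma_1(\omega_{2,t}z^t)-y^t)(\omega_{1,t}^T\sigma_2(\omega_{2,t}z^t)-y^t)$ to split into the product of conditional expectations of the two factors. Since the component-wise Bernoulli structure yields $\mathbb{E}[\sigma_k(\omega_{2,t} z^t) \mid \mathcal{F}^t] = \rho\,\omega_{2,t} z^t$ for $k=1,2$, each factor reduces to $\rho\,\omega_{1,t}^T\omega_{2,t}z^t - y^t$, so the first piece equals $\tfrac{1}{2}(\rho\,\omega_{1,t}^T\omega_{2,t}z^t - y^t)^2$. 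The identical calculation applied to $(\omega_{1,*},\omega_{2,*})$ yields $\tfrac{1}{2}(\rho\,\omega_{1,*}^T\omega_{2,*}z^t - y^t)^2$ for the second piece, and this vanishes by condition \ref{as2:2}.

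Finally, substituting $y^t = \rho\,\omega_{1,*}^T\omega_{2,*}z^t$ (again from \ref{as2:2}) into the surviving first piece yields $\tfrac{\rho^2}{2}(\omega_{1,t}^T\omega_{2,t}z^t - \omega_{1,*}^T\omega_{2,*}z^t)^2$, which is exactly the claimed identity. There is no real obstacle here — the only conceptual point is that the two-sample surrogate in (\ref{loss:non}) was engineered precisely so that the independence of $\sigma_1,\sigma_2$ factors its expectation into the square of the expected residual, cleanly linking the stochastic loss to the weight error $\omega_{1,t}^T\omega_{2,t} - \omega_{1,*}^T\omega_{2,*}$ that will drive the remainder of the analysis.
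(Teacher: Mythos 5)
Your proposal is correct and follows essentially the same route as the paper's proof: factor the conditional expectation of each product term using the independence of $\sigma_1$ and $\sigma_2$, observe that the term at $(\omega_{1,*},\omega_{2,*})$ vanishes by condition \ref{as2:2}, and substitute $y^t=\rho\,\omega_{1,*}^T\omega_{2,*}z^t$ into the surviving square. No gaps.
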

\begin{proof}[Proof of Lemma \ref{lem:obj}]
Based on condition \ref{as2:2} in Assumption \hyperref[as:2]{2}, we obtain
\begin{align*}
&\mathop{{}\mathbb{E}_{\sigma_1,\sigma_2}}\left[f_t(\omega_{1,t},\omega_{2,t})\mid \mathcal{F}^t \right]\\
=&
\frac{1}{2}
\mathop{{}\mathbb{E}_{\sigma_1}}\left[\omega_{1,t}^T\sigma_1\left(\omega_{2,t} z^t\right)-y^t\mid \mathcal{F}^t \right] \cdot \mathop{{}\mathbb{E}_{\sigma_2}}\left[\omega_{1,t}^T\sigma_2\left(\omega_{2,t} z^t\right)-y^t\mid \mathcal{F}^t \right]\\
=&\frac{1}{2}\left( \rho\omega_{1,t}^T\omega_{2,t}z^t - y^t \right)\cdot \left( \rho\omega_{1,t}^T\omega_{2,t}z^t - y^t \right) =\frac{1}{2} \left( \rho\omega_{1,t}^T\omega_{2,t}z^t - y^t \right)^2\\
=&\frac{\rho^2}{2}(\omega_{1,t}^T\omega_{2,t}z^t-\omega_{1,*}^T\omega_{2,*}z^t)^2.
\end{align*}
On the other hand, we get
\begin{align*}
&\mathop{{}\mathbb{E}_{\sigma_1,\sigma_2}}\left[f_t(\omega_{1,*},\omega_{2,*})\mid \mathcal{F}^t \right]\\
=&
\frac{1}{2}
\mathop{{}\mathbb{E}_{\sigma_1}}\left[\omega_{1,*}^T\sigma_1\left(\omega_{2,*} z^t\right)-y^t\mid \mathcal{F}^t \right] \cdot \mathop{{}\mathbb{E}_{\sigma_2}}\left[\omega_{1,*}^T\sigma_2\left(\omega_{2,*} z^t\right)-y^t\mid \mathcal{F}^t \right]\\
=&\frac{1}{2}
\left(\mathop{{}\mathbb{E}_{\sigma_1}}\left[\omega_{1,*}^T\sigma_1\left(\omega_{2,*} z^t\right)\mid \mathcal{F}^t \right]-y^t\right) \cdot \left(\mathop{{}\mathbb{E}_{\sigma_2}}\left[\omega_{1,*}^T\sigma_2\left(\omega_{2,*} z^t\right)\mid \mathcal{F}^t \right]-y^t\right)\\
=&0,
\end{align*}
which in turn yields
\begin{align*}
&\mathop{{}\mathbb{E}}\left[l_t\left(\omega_{1,t},\omega_{2,t} \right)\mid\mathcal{F}^t\right]\nonumber\\ 
=& \mathop{{}\mathbb{E}_{\sigma_1,\sigma_2}}\left[f_t(\omega_{1,t},\omega_{2,t})\mid \mathcal{F}^t \right]-\mathop{{}\mathbb{E}_{\sigma_1,\sigma_2}}\left[f_t(\omega_{1,*},\omega_{2,*})\mid \mathcal{F}^t \right]\nonumber\\
=&\frac{\rho^2}{2}(\omega_{1,t}^T\omega_{2,t}z^t-\omega_{1,*}^T\omega_{2,*}z^t)^2.
\end{align*}
This completes the proof of the lemma.
\end{proof}

\begin{lem}
\label{lem3}
Under the conditions assumed in Theorem \hyperref[thm3]{3}, we have
\begin{align}
\label{lem3:grad1}
&\mathop{{}\mathbb{E}_{\sigma_1,\sigma_2}}\left[g_{1,t} \mid\mathcal{F}^t\right]=\rho^2\left(\omega_{1,t}^T\omega_{2,t}z^t-\omega_{1,*}^T\omega_{2,*}z^t \right)\omega_{2,t}z^t\\
\label{lem3:grad2}
&\mathop{{}\mathbb{E}_{\sigma_1,\sigma_2}}\left[g_{2,t}\mid\mathcal{F}^t \right]=\rho^2\left(\omega_{1,t}^T\omega_{2,t}z^t-\omega_{1,*}^T\omega_{2,*}z^t \right)\omega_{1,t}\left(z^t\right)^T.
\end{align}
\end{lem}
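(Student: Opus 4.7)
The plan is to compute each conditional expectation directly by exploiting the symmetry between the two summands in each gradient together with the independence of the activations. Conditioned on $\mathcal{F}^t$, the weights $\omega_{1,t}, \omega_{2,t}$, the sample $(z^t,y^t)$, and hence the argument $\omega_{2,t}z^t$ are all deterministic, so the only randomness comes from $\sigma_1$ and $\sigma_2$. Condition \ref{as2:1} in Assumption \hyperref[as:2]{2} yields $\mathbb{E}[\sigma_i(v)\mid\mathcal{F}^t] = \rho v$ for any deterministic vector $v$, and condition \ref{as2:2} gives $y^t = \rho\,\omega_{1,*}^T\omega_{2,*}z^t$; these two identities, combined with the independence of $\sigma_1$ and $\sigma_2$, are the only ingredients I will need.

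For (\ref{lem3:grad1}), I first note that the two summands in step~\ref{dnn:g1} differ only by interchanging the i.i.d.\ labels $\sigma_1 \leftrightarrow \sigma_2$, so they share the same conditional expectation and the leading factor of $1/2$ is absorbed. It therefore suffices to evaluate $\mathbb{E}[(\omega_{1,t}^T\sigma_1(\omega_{2,t}z^t) - y^t)\,\sigma_2(\omega_{2,t}z^t)\mid\mathcal{F}^t]$. Since the scalar factor depends only on $\sigma_1$ and the vector factor only on $\sigma_2$, independence lets me factor this as $(\rho\,\omega_{1,t}^T\omega_{2,t}z^t - y^t)(\rho\,\omega_{2,t}z^t)$, and substituting the expression for $y^t$ collapses it to $\rho^2(\omega_{1,t}^T\omega_{2,t}z^t - \omega_{1,*}^T\omega_{2,*}z^t)\,\omega_{2,t}z^t$, which is exactly (\ref{lem3:grad1}).

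The argument for (\ref{lem3:grad2}) is structurally identical. The two summands in step~\ref{dnn:g2} are again exchangeable under $\sigma_1 \leftrightarrow \sigma_2$, so only one term needs to be analyzed. The scalar factor $\omega_{1,t}^T\sigma_1(\omega_{2,t}z^t)-y^t$ is independent, given $\mathcal{F}^t$, of the matrix factor $\omega_{1,t}(\sigma_2(z^t))^T$, so expectation factors once more into $(\rho\,\omega_{1,t}^T\omega_{2,t}z^t - y^t)\,\omega_{1,t}(\rho z^t)^T$. Inserting $y^t = \rho\,\omega_{1,*}^T\omega_{2,*}z^t$ produces $\rho^2(\omega_{1,t}^T\omega_{2,t}z^t - \omega_{1,*}^T\omega_{2,*}z^t)\,\omega_{1,t}(z^t)^T$.

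No real obstacle is expected here; the only care required is to verify that the scalar-times-vector and scalar-times-matrix products genuinely factor under independence without spurious cross-terms, which holds because exactly one of the two independent activations appears in each factor. The lemma will function as a building block downstream, where the stochastic gradients $g_{1,t}, g_{2,t}$ can be replaced by these unbiased expressions when tower-averaging against $\mathcal{F}^t$ in the regret calculations.
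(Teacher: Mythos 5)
Your proposal is correct and follows essentially the same route as the paper's proof: exploit the $\sigma_1\leftrightarrow\sigma_2$ exchangeability to reduce each symmetrized gradient to a single term, factor the conditional expectation using independence of the two activations, and substitute $y^t=\rho\,\omega_{1,*}^T\omega_{2,*}z^t$ from condition \ref{as2:2}. The only cosmetic difference is that the paper rewrites $y^t$ as $\mathop{{}\mathbb{E}_{\sigma_1}}\left[\omega_{1,*}^T\sigma_1\left(\omega_{2,*}z^t\right)\right]$ before factoring rather than after, which changes nothing substantive.
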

\begin{proof}[Proof of Lemma \ref{lem3}]
From steps~\ref{dnn:g1} and~\ref{dnn:g2}, we have
\begin{align*}
&\mathop{{}\mathbb{E}_{\sigma_1,\sigma_2}}\left[g_{1,t}\mid\mathcal{F}^t \right]\\
=& 
\mathop{{}\mathbb{E}_{\sigma_1,\sigma_2}}\left[\bigtriangledown_{\omega_1}\left( \frac{1}{2}\left(\omega_{1,t}^T\sigma_1\left( \omega_{2,t}z^t\right)-y^t \right)\left(\omega_{1,t}^T\sigma_2\left( \omega_{2,t}z^t\right)-y^t \right) \right)\mid\mathcal{F}^t \right]\\
=&\mathop{{}\mathbb{E}_{\sigma_1,\sigma_2}}\left[\left(\omega_{1,t}^T\sigma_1\left( \omega_{2,t}z^t\right)-y^t \right)\sigma_2\left(\omega_{2,t}z^t \right)\mid\mathcal{F}^t \right]\\
=&\mathop{{}\mathbb{E}_{\sigma_1}}\left[\omega_{1,t}^T\sigma_1\left( \omega_{2,t}z^t\right)-\omega_{1,*}^T\sigma_1\left( \omega_{2,*}z^t\right) \mid\mathcal{F}^t \right]
\mathop{{}\mathbb{E}_{\sigma_2}}\left[\sigma_2\left(\omega_{2,t}z^t \right)\mid\mathcal{F}^t \right]\\
=&\rho\left( \omega_{1,t}^T \omega_{2,t}z^t-\omega_{1,*}^T\omega_{2,*}z^t\right)\rho\omega_{2,t}z^t  = \rho^2\left(\omega_{1,t}^T\omega_{2,t}z^t-\omega_{1,*}^T\omega_{2,*}z^t \right)\omega_{2,t}z^t.
\end{align*}
Similarly,
\begin{align*}
&\mathop{{}\mathbb{E}_{\sigma_1,\sigma_2}}\left[g_{2t}\mid\mathcal{F}^t \right]\\
=&
\mathop{{}\mathbb{E}_{\sigma_1,\sigma_2}}\left[\bigtriangledown_{\omega_2}\left( \frac{1}{2}\left(\omega_{1,t}^T\sigma_1\left( \omega_{2,t}z^t\right)-y^t \right)\left(\omega_{1,t}^T\sigma_2\left( \omega_{2,t}z^t\right)-y^t \right) \right)\mid\mathcal{F}^t \right]\\
=&\mathop{{}\mathbb{E}_{\sigma_1,\sigma_2}}\left[\left(\omega_{1,t}^T\sigma_1\left( \omega_{2,t}z^t\right)-y^t \right)\omega_{1,t}\left(\sigma_2(z^t) \right)^T\mid\mathcal{F}^t \right] \\
=& \mathop{{}\mathbb{E}_{\sigma_1}}\left[\omega_{1,t}^T\sigma_1\left( \omega_{2,t}z^t\right)-\omega_{1,*}^T\sigma_1\left( \omega_{2,*}z^t\right) \mid\mathcal{F}^t \right]
\mathop{{}\mathbb{E}_{\sigma_2}}\left[\omega_{1,t}\left(\sigma_2(z^t) \right)^T\mid\mathcal{F}^t \right]\\
=&\rho\left( \omega_{1,t}^T\omega_{2,t}z^t-\omega_{1,*}^T\omega_{2,*}z^t\right)\rho\omega_{1,t}\left(z^t\right)^T\\
=&\rho^2\left(\omega_{1,t}^T\omega_{2,t}z^t-\omega_{1,*}^T\omega_{2,*}z^t \right)\omega_{1,t}\left(z^t\right)^T.
\end{align*}
This completes the proof of the lemma.
\end{proof}
\subsection*{Proof of Theorem \ref{thm3}}
\label{proof:thm3}
\begin{proof}
First, 
based on the update step~\ref{nonconvexSGD1} and ~\ref{nonconvexSGD2}  in Algorithm \ref{dnn GD}, we obtain
\begin{align}
&\mathop{{}\mathbb{E}_{\sigma_1,\sigma_2}}\left[\left\|\omega_{1,t+1}^T\omega_{2,t+1}-\omega_{1,*}^T\omega_{2,*} \right\|^2\mid\mathcal{F}^t\right] = \mathop{{}\mathbb{E}_{\sigma_1,\sigma_2}}\left[\left\|\omega_{2,t+1}^T\omega_{1,t+1}-\omega_{2,*}^T\omega_{1,*} \right\|^2\mid\mathcal{F}^t\right]\nonumber\\
=& \mathop{{}\mathbb{E}_{\sigma_1,\sigma_2}} \left[\left\|\left(\omega_{2,t}-\eta g_{2,t}\right)^T\left(\omega_{1,t}-\eta g_{1,t}\right)-\omega_{2,*}^T\omega_{1,*} \right\|^2\mid\mathcal{F}^t\right]\nonumber\\
=&  \mathop{{}\mathbb{E}_{\sigma_1,\sigma_2}} \left[\left\|\omega_{2,t}^T\omega_{1,t}-\omega_{2,*}^T\omega_{1,*} -\eta\left(g_{2,t}^T\omega_{1,t}+\omega_{2,t}^Tg_{1,t} \right)+\eta^2 g_{1,t}^Tg_{2,t}\right\|^2\mid\mathcal{F}^t\right]\nonumber\\
=&\mathop{{}\mathbb{E}_{\sigma_1,\sigma_2}} \left[\left\|\omega_{2,t}^T\omega_{1,t}-\omega_{2,*}^T\omega_{1,*}\right\|^2 -2\eta\left\langle\omega_{2,t}^T\omega_{1,t}-\omega_{2,*}^T\omega_{1,*},  g_{2,t}^T\omega_{1,t}+\omega_{2,t}^Tg_{1,t} \right\rangle\right.\nonumber\\
&\left.\quad\quad+\eta^2\left(2\left\langle\omega_{2,t}^T\omega_{1,t}-\omega_{2,*}^T\omega_{1,*},  g_{2,t}^Tg_{1,t} \right\rangle+\left\| \eta g_{1,t}^Tg_{2,t}-\left(g_{2,t}^T\omega_{1,t}+\omega_{2,t}^Tg_{1,t} \right)\right\|^2\right)\mid\mathcal{F}^t\right]\nonumber\\
=&\left\|\omega_{2,t}^T\omega_{1,t}-\omega_{2,*}^T\omega_{1,*}\right\|^2 - 2\eta\left\langle\omega_{2,t}^T\omega_{1,t}-\omega_{2,*}^T\omega_{1,*},  \mathop{{}\mathbb{E}_{\sigma_1,\sigma_2}}\left[g_{2,t}^T\mid\mathcal{F}^t\right]\omega_{1,t}\right.\nonumber\\
&\quad\quad\left.+\omega_{2,t}^T\mathop{{}\mathbb{E}_{\sigma_1,\sigma_2}}\left[g_{1,t}\mid\mathcal{F}^t\right] \right\rangle
 + \eta^2\mathop{{}\mathbb{E}_{\sigma_1,\sigma_2}}\left[2\left\langle\omega_{2,t}^T\omega_{1,t}-\omega_{2,*}^T\omega_{1,*},  g_{2,t}^Tg_{1,t} \right\rangle\right.\nonumber\\
&\quad\quad\left.+\left\| \eta g_{1,t}^Tg_{2,t}-\left(g_{2,t}^T\omega_{1,t}+\omega_{2,t}^Tg_{1,t} \right)\right\|^2\mid\mathcal{F}^t\right].
\label{thm3:1}
\end{align}
By Lemma \ref{lem3} we conclude that $\mathop{{}\mathbb{E}}\left[\left\|g_{1,t}\right\|\mid\mathcal{F}^t\right]$ and $\mathop{{}\mathbb{E}}\left[\left\|g_{2,t}\right\|\mid\mathcal{F}^t\right]$ are bounded due to \hyperref[as2:3]{3} in Assumption \hyperref[as:2]{2}, which in turn yields
\begin{align}
\mathop{{}\mathbb{E}_{\sigma_1,\sigma_2}}&\left[2\left\langle\omega_{2,t}^T\omega_{1,t}-\omega_{2,*}^T\omega_{1,*},  g_{2,t}^Tg_{1,t} \right\rangle+\left\| \eta g_{1,t}^Tg_{2,t}-\left(g_{2,t}^T\omega_{1,t}+\omega_{2,t}^Tg_{1,t} \right)\right\|^2\mid\mathcal{F}^t\right]\nonumber\\
\leq &\left\|\omega_{2,t}^T\omega_{1,t}-\omega_{2,*}^T\omega_{1,*} \right\|^2\cdot\mathop{{}\mathbb{E}_{\sigma_1,\sigma_2}}\left[ \left\|g_{2,t}^Tg_{1,t}\right\|^2 \mid\mathcal{F}^t\right] \nonumber\\
&\quad\quad\quad\quad\quad+ \mathop{{}\mathbb{E}_{\sigma_1,\sigma_2}}\left[\left\| \eta g_{1,t}^Tg_{2,t}-\left(g_{2,t}^T\omega_{1,t}+\omega_{2,t}^Tg_{1,t} \right)\right\|^2\mid\mathcal{F}^t\right]\nonumber\\
\leq & M_1,
\label{thm3:2}
\end{align}
where $M_1$ is a fixed positive number. The first inequality comes from the Cauchy-Schwarz inequality and the second inequality is due to the boundedness of $\omega_{1,t}$, $\omega_{2,t}$, $\omega_{1,*}$, $\omega_{2,*}$, $g_{1,t}$, $g_{2,t}$ and $\eta$. 
Inserting (\ref{thm3:2}) into (\ref{thm3:1}) gives
\begin{align}
 &\left\langle\omega_{2,t}^T\omega_{1,t}-\omega_{2,*}^T\omega_{1,*},  \mathop{{}\mathbb{E}_{\sigma_1,\sigma_2}}\left[g_{2,t}^T\mid\mathcal{F}^t\right]\omega_{1,t}\right\rangle + \left\langle\omega_{2,t}^T\omega_{1,t}-\omega_{2,*}^T\omega_{1,*},\omega_{2,t}^T\mathop{{}\mathbb{E}_{\sigma_1,\sigma_2}}\left[g_{1,t}\mid\mathcal{F}^t\right] \right\rangle\nonumber\\
 =&\left\langle\omega_{2,t}^T\omega_{1,t}-\omega_{2,*}^T\omega_{1,*},  \mathop{{}\mathbb{E}_{\sigma_1,\sigma_2}}\left[g_{2,t}^T\mid\mathcal{F}^t\right]\omega_{1,t}+\omega_{2,t}^T\mathop{{}\mathbb{E}_{\sigma_1,\sigma_2}}\left[g_{1,t}\mid\mathcal{F}^t\right] \right\rangle\nonumber \\
 \leq&\frac{\left\|\omega_{2,t}^T\omega_{1,t}-\omega_{2,*}^T\omega_{1,*}\right\|^2 - \mathop{{}\mathbb{E}_{\sigma_1,\sigma_2}}\left[\left\|\omega_{1,t+1}^T\omega_{2,t+1}-\omega_{1,*}^T\omega_{2,*} \right\|^2\mid\mathcal{F}^t\right]}{2\eta}+ \frac{\eta M_1}{2}.
 \label{thm3:3}
\end{align}
Using (\ref{lem3:grad1}) yields
\begin{align}
&\left\langle\omega_{2,t}^T\omega_{1,t}-\omega_{2,*}^T\omega_{1,*},  \mathop{{}\mathbb{E}_{\sigma_1,\sigma_2}}\left[g_{2,t}^T\mid\mathcal{F}^t\right]\omega_{1,t}\right\rangle\nonumber\\
=& \left\langle\omega_{2,t}^T\omega_{1,t}-\omega_{2,*}^T\omega_{1,*},  \rho^2\left(\omega_{1,t}^T\omega_{2,t}z^t-\omega_{1,*}^T\omega_{2,*}z^t \right)z^t\omega_{1,t}^T\omega_{1,t}\right\rangle\nonumber\\
=&\rho^2\left(\omega_{1,t}^T\omega_{2,t}z^t-\omega_{1,*}^T\omega_{2,*}z^t \right)\left(\omega_{1,t}^T\omega_{2,t}-\omega_{1,*}^T\omega_{2,*} \right)z^t\left\|\omega_{1,t}\right\|^2\nonumber\\
=&\rho^2\left(\omega_{1,t}^T\omega_{2,t}z^t-\omega_{1,*}^T\omega_{2,*}z^t \right)^2\left\|\omega_{1,t}\right\|^2=\mathop{{}\mathbb{E}}\left[l_t\left(\omega_{1,t},\omega_{2,t} \right)\mid\mathcal{F}^t\right]\cdot 2\left\|\omega_{1,t}\right\|^2.
\label{thm3:4}
\end{align}
The last equality follows from (\ref{obj:non}) in Lemma \ref{lem:obj}. Then, we have
\begin{align}
&\left|\left\langle\omega_{2,t}^T\omega_{1,t}-\omega_{2,*}^T\omega_{1,*},\omega_{2,t}^T\mathop{{}\mathbb{E}_{\sigma_1,\sigma_2}}\left[g_{1,t}\mid\mathcal{F}^t\right] \right\rangle\right|\nonumber\\
=& \left|\left\langle\omega_{2,t}^T\omega_{1,t}-\omega_{2,*}^T\omega_{1,*},\omega_{2,t}^T \rho^2\left(\omega_{1,t}^T\omega_{2,t}z^t-\omega_{1,*}^T\omega_{2,*}z^t \right)\omega_{2,t}z^t \right\rangle\right|\nonumber\\
=&\left| \rho^2\left(\omega_{1,t}^T\omega_{2,t}z^t-\omega_{1,*}^T\omega_{2,*}z^t \right)\left(\omega_{1,t}^T\omega_{2,t}-\omega_{1,*}^T\omega_{2,*}\right)\omega_{2,t}^T\omega_{2,t}z^t\right|\nonumber\\
\leq&\rho^2\left(\omega_{1,t}^T\omega_{2,t}z^t-\omega_{1,*}^T\omega_{2,*}z^t\right)^2\frac{\left\|\omega_{1,t}^T\omega_{2,t}-\omega_{1,*}^T\omega_{2,*}\right\|\left\|\omega_{2,t}^T\omega_{2,t}\right\|\left\|z^t\right\|}{\left|\left(\omega_{1,t}^T\omega_{2,t}-\omega_{1,*}^T\omega_{2,*}\right)z^t\right|} \nonumber\\
\leq&\rho^2\left(\omega_{1,t}^T\omega_{2,t}z^t-\omega_{1,*}^T\omega_{2,*}z^t\right)^2\left\|\omega_{2,t}^T\omega_{2,t}\right\|\frac{\left\|\omega_{1,t}^T\omega_{2,t}-\omega_{1,*}^T\omega_{2,*}\right\|\left\|z^t\right\|}{\left|\left(\omega_{1,t}^T\omega_{2,t}-\omega_{1,*}^T\omega_{2,*}\right)z^t\right|} \nonumber\\
\leq&\rho^2\left(\omega_{1,t}^T\omega_{2,t}z^t-\omega_{1,*}^T\omega_{2,*}z^t\right)^2\frac{\alpha}{\cos\left(\epsilon \right)} =\mathop{{}\mathbb{E}}\left[l_t\left(\omega_{1,t},\omega_{2,t} \right)\mid\mathcal{F}^t\right]\cdot  \frac{2\alpha}{\cos\left(\epsilon \right)}.
\label{thm3:5}
\end{align}
Note that $\left\|\omega_{2,t}^T\omega_{2,t} \right\| = \sigma_{\mathit{max}}(\omega_{2,t}^T)\leq \left\|\omega_{2,t}^T\right\|_F\leq \alpha$ by \ref{as2:3} in Assumption \hyperref[as:2]{2}. If $\left(\omega_{1,t}^T\omega_{2,t}-\omega_{1,*}^T\omega_{2,*}\right)z^t=0$, then the inequality holds trivially. Using (\ref{thm3:3}), (\ref{thm3:4}) and (\ref{thm3:5}) we obtain
\begin{align}
&\mathop{{}\mathbb{E}_{\sigma_1,\sigma_2}}\left[l_t\left(\omega_{1,t},\omega_{2,t} \right)\mid\mathcal{F}^t\right]\cdot 2\left(\left\|\omega_{1,t}\right\|^2-\frac{\alpha}{\cos(\epsilon)} \right)\nonumber\\
\leq& \frac{\left\|\omega_{2,t}^T\omega_{1,t}-\omega_{2,*}^T\omega_{1,*}\right\|^2 - \mathop{{}\mathbb{E}_{\sigma_1,\sigma_2}}\left[\left\|\omega_{1,t+1}^T\omega_{2,t+1}-\omega_{1,*}^T\omega_{2,*} \right\|^2\mid\mathcal{F}^t\right]}{2\eta}+ \frac{\eta M_1}{2}.
\label{thm3:7}
\end{align}
From update step~\ref{nonconvexSGD1} we notice that $\left\|\omega_{1,t} \right\|^2 =\frac{1}{2}+\xi_1= \frac{1}{2}+\frac{\alpha}{\cos(\epsilon)}$, thus, (\ref{thm3:7}) could be further simplified as
\begin{align*}
\mathop{{}\mathbb{E}_{\sigma_1,\sigma_2}}\left[l_t\left(\omega_{1,t},\omega_{2,t} \right)\mid\mathcal{F}^t\right] &\leq \frac{\left\|\omega_{2,t}^T\omega_{1,t}-\omega_{2,*}^T\omega_{1,*}\right\|^2 - \mathop{{}\mathbb{E}_{\sigma_1,\sigma_2}}\left[\left\|\omega_{1,t+1}^T\omega_{2,t+1}-\omega_{1,*}^T\omega_{2,*} \right\|^2\mid\mathcal{F}^t\right]}{2\eta}\\
&\quad\quad\quad\quad+ \frac{\eta M_1}{2}.
\end{align*}
Applying the law of iterated expectation implies
\begin{align*}
\mathop{{}\mathbb{E}}\left[l_t\left(\omega_{1,t},\omega_{2,t}\right)\right] \leq \frac{\mathop{{}\mathbb{E}}\left[\left\|\omega_{2,t}^T\omega_{1,t}-\omega_{2,*}^T\omega_{1,*}\right\|^2\right] - \mathop{{}\mathbb{E}}\left[\left\|\omega_{1,t+1}^T\omega_{2,t+1}-\omega_{1,*}^T\omega_{2,*} \right\|^2\right]}{2\eta}+ \frac{\eta M_1}{2}
\end{align*}
By summing up all differences, we obtain
\begin{align}
\sum_{t=p}^{T+p}&\mathop{{}\mathbb{E}}\left[l_t\left(\omega_{1,t},\omega_{2,t} \right)\right]\leq \frac{1}{2}\sum_{t=p}^{T+p}\frac{\mathop{{}\mathbb{E}}\left[\left\|\omega_{2,t}^T\omega_{1,t}-\omega_{2,*}^T\omega_{1,*}\right\|^2\right] - \mathop{{}\mathbb{E}}\left[\left\|\omega_{1,t+1}^T\omega_{2,t+1}-\omega_{1,*}^T\omega_{2,*} \right\|^2\right]}{\eta}\nonumber\\
&\quad\quad\quad\quad\quad\quad\quad\quad\quad\quad + \frac{ M_1}{2} \eta T\nonumber\\
&=\frac{1}{2}\frac{\mathop{{}\mathbb{E}}\left[\left\|\omega_{2,p}^T\omega_{1,p}-\omega_{2,*}^T\omega_{1,*}\right\|^2\right] - \mathop{{}\mathbb{E}}\left[\left\|\omega_{1,p+T+1}^T\omega_{2,p+T+1}-\omega_{1,*}^T\omega_{2,*} \right\|^2\right]}{\eta}+ \frac{ M_1}{2} \eta T\nonumber\\
& = \mathcal{O}(\sqrt{T}).
\label{thm3:8}
\end{align}
The last equality uses \ref{as2:3} from Assumption \hyperref[as:2]{2} and the definition of $\eta = \frac{\eta_1}{\sqrt{T}}$. The desired result in Theorem \ref{thm3} follows directly from (\ref{thm3:8}) since it holds for any $p$.

\end{proof}
\subsection*{E\tab Regret with Rolling Window Analyses of dnnAdam for Two-Layer ReLU Neural Network}

\begin{lem}
\label{lem4}
In Algorithm \ref{dnn ADAM}, given $\omega_{2,t},\omega_{1,t},\omega_{2,*},\omega_{1,*}$ and $\hat{v}_{2,t}$, there exists a bounded matrix $\tilde{v}_{2,t}$ such that
\begin{align}
\label{lem4:obj}
\left(\sqrt{\hat{v}_{2,t}}\odot \omega_{2,t}\right)^T\omega_{1,t}-\left(\sqrt{\hat{v}_{2,t}}\odot \omega_{2,*}\right)^T\omega_{1,*}=\left(\sqrt{\tilde{v}_{2,t}}\right)^T\left(\omega_{2,t}^T\omega_{1,t}-\omega_{2,*}^T\omega_{1,*}\right),
\end{align}
where $\odot$ is an element-wise multiplication operation.
\end{lem}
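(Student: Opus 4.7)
The plan is to exploit the column-constant structure that step~\ref{dnnAdam:v2} bakes into $v_{2,t}$ and propagate it through the max in step~\ref{dnnAdam:v2hat}. First I would prove by induction on $t$ that every row of $\hat{v}_{2,t}$ is identical. The base case $\hat{v}_{2,0}=0$ is trivial, and the inductive step uses the fact that $[v_{2,t}]_{ij}=\max_k|[\dot{v}_{2,t}]_{kj}|$ depends only on $j$ (hence $v_{2,t}$ has identical rows by construction), together with the observation that an element-wise max of two matrices with row-constant structure again has row-constant structure.

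Next, writing $s_j:=\sqrt{[\hat{v}_{2,t}]_{1j}}$ for the (well-defined) common column value, I would expand the left-hand side coordinate-wise. The $j$-th entry of $(\sqrt{\hat{v}_{2,t}}\odot\omega_{2,t})^T\omega_{1,t}$ equals $\sum_i s_j[\omega_{2,t}]_{ij}[\omega_{1,t}]_i=s_j[\omega_{2,t}^T\omega_{1,t}]_j$, and subtracting the analogous expression for $(\omega_{2,*},\omega_{1,*})$ gives $s_j\bigl([\omega_{2,t}^T\omega_{1,t}]_j-[\omega_{2,*}^T\omega_{1,*}]_j\bigr)$. Stacking across $j$, the left-hand side equals $s\odot(\omega_{2,t}^T\omega_{1,t}-\omega_{2,*}^T\omega_{1,*})$ with $s=(s_1,\dots,s_d)$.

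Finally, defining $\tilde{v}_{2,t}$ so that $\sqrt{\tilde{v}_{2,t}}=s$ (for instance, a $d$-vector with $[\tilde{v}_{2,t}]_j=[\hat{v}_{2,t}]_{1j}$, or equivalently a $d\times d$ diagonal matrix with those diagonal entries) matches the claimed identity, once one interprets $(\sqrt{\tilde{v}_{2,t}})^T\cdot$ as the corresponding element-wise scaling of the $d$-vector $\omega_{2,t}^T\omega_{1,t}-\omega_{2,*}^T\omega_{1,*}$. Boundedness of $\tilde{v}_{2,t}$ reduces to boundedness of the entries of $\hat{v}_{2,t}$, which follows from condition \ref{as2:3} in Assumption~\hyperref[as:2]{2}: the uniform bound $|[g_{2,t}]_{ij}|\le G_{2,\infty}$ propagates through steps~\ref{dnnAdam:v2dot}--\ref{dnnAdam:v2hat} by a one-line induction, yielding $[\hat{v}_{2,t}]_{ij}\le G_{2,\infty}^{2}$ for all $i,j,t$.

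I do not anticipate a real obstacle; this lemma is essentially bookkeeping that isolates the very purpose of the non-standard step~\ref{dnnAdam:v2}, namely to flatten $v_{2,t}$ column-wise so that $\sqrt{\hat{v}_{2,t}}$ commutes with the bilinear pairing $\omega_{2,t}^{T}\omega_{1,t}$. The only mild subtlety is pinning down the type of $\tilde{v}_{2,t}$ and the meaning of the transpose notation, but any of the equivalent readings (diagonal matrix, Hadamard-acting $d$-vector, or row-repeated $n\times d$ matrix) works provided it is used consistently in the subsequent proof of Theorem~\ref{thm4}.
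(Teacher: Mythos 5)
Your proposal is correct and follows essentially the same route as the paper's proof: exploit the column-constant structure of $\hat{v}_{2,t}$ induced by step~\ref{dnnAdam:v2}, factor the common column value out of the bilinear products, set $\tilde{v}_{2,t}=\mathrm{diag}\left(\left[\hat{v}_{2,t}\right]_{1,:}\right)$, and bound the entries via $\left|\left[g_{2,t}\right]_{ij}\right|\le G_{2,\infty}$. Your explicit induction showing that the element-wise max in step~\ref{dnnAdam:v2hat} preserves the row-constant structure is a point the paper leaves implicit, but it is the same argument.
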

\begin{proof}[Proof of Lemma \ref{lem4}]
From step~\ref{dnnAdam:v2} in Algorithm \ref{dnn ADAM}, $v_{2t}$ is a matrix with same value in the same column, which in turn yields
\begin{align*}
\left(\sqrt{\hat{v}_{2,t}}\odot\omega_{2,t}\right)^T\omega_{1t}= \left(\sqrt{\tilde{v}_{2,t}}\right)^T\omega_{2,t}^T\omega_{1,t},
\end{align*}
where $\tilde{v}_{2,t}$ is a diagonal matrix with $\tilde{v}_{2,t}=\mathrm{diag}\left(\left[\hat{v}_{2,t} \right]_{1,:}\right)$, and $\left[\hat{v}_{2,t} \right]_{1,:}$ is the 1$^{st}$ row of matrix $\hat{v}_{2,t}$. Applying the same argument for $\left(\sqrt{\hat{v}_{2,t}}\odot \omega_{2,*}\right)^T\omega_{1,*}$ yields (\ref{lem4:obj}). Next, let us show that $\tilde{v}_{2,t}$ is bounded. It is sufficient to show that $\hat{v}_{2,t}$ is bounded. From steps~\ref{dnnAdam:v2hat} and~\ref{dnnAdam:v2dot}, we conclude that
\begin{align*}
    \hat{v}_{2,t}\leq \max\left(v_{2,1},v_{2,2},\cdots,v_{2,t}\right).
\end{align*}
Therefore, it is sufficient to show that $\dot{v}_{2,t}$ is bounded for all $t$. For each entry in the matrix, since $\left|\left[g_{2,t}\right]_{ij}\right|\leq G_{2,\infty}$, we obtain
\begin{align}
\label{lem4:1}
\left|\left[ \dot{v}_{2,t}\right]_{ik}\right| =\left| (1-\beta_{22})\sum_{j=1}^{t}\beta_{22}^{t-j}\left(\max_{p}\left[g_{2,j}\right]_{pk}^2\right) \right|\leq \left| (1-\beta_{22})\sum_{j=1}^{t}\beta_{22}^{t-j}G_{2,\infty}^2 \right|\leq G_{2,\infty}^2.
\end{align}
By combining with the fact that $g_{2}$ is bounded due to step~\ref{ddnAdam:g2} and the boundedness of $\omega_{1,t}, \omega_{2,t}, z^t$ and $y^t$ from condition \ref{as2:3} in Assumption \hyperref[as:2]{2}, Lemma \ref{lem4} follows. 
\end{proof}

\begin{lem}
\label{lem5}
In Algorithm \ref{dnn ADAM}, given $m_{1,t-1}, m_{1,t}, \hat{v}_{1,t}\in\mathbb{R}^n$ and $m_{2,t}, \hat{v}_{2,t}\in\mathbb{R}^{n\times d}$ for any $t$, and $\beta_{111},\beta_{121},\beta_{21}$ and $\beta_{22}$ are constants between $0$ and $1$ such that $\lambda_1:=\frac{\beta_{111}}{\beta_{21}}< 1$ and $\lambda_2:=\frac{\beta_{121}}{\beta_{22}}< 1$, then
\begin{align}
    &\left\|\frac{1}{\sqrt{\hat{v}_{1,t}}}\odot m_{1,t-1}\right\|^2 \leq \frac{n}{\left(1-\beta_{111}\right)\left(1-\beta_{21} \right)\left(1-\lambda_1\right)}\label{lem5:bd1}\\
    &\left\|\frac{1}{\sqrt{\hat{v}_{1,t}}}\odot m_{1,t}\right\|^2 \leq \frac{n}{\left(1-\beta_{111}\right)\left(1-\beta_{21} \right)\left(1-\lambda_1\right)} \label{lem5:bd2}\\
    &\left\|\frac{1}{\sqrt{\hat{v}_{2,t}}}\odot m_{2,t}\right\|^2 \leq \frac{nd}{\left(1-\beta_{121}\right)\left(1-\beta_{21}\right)\left(1-\lambda_2\right)}\label{lem5:bd3}\\
    &\left\|\frac{1}{\sqrt[4]{\hat{v}_{2,t}}}\odot m_{2,t}\right\|^2 \leq \frac{ndG_{2,\infty}}{\left(1-\beta_{121}\right)\sqrt{1-\beta_{21}}\left(1-\lambda_2\right)}\label{lem5:bd4}.
\end{align}
\end{lem}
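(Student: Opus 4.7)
The plan is to bound each ratio coordinate-by-coordinate using the same two-stage template as Lemma~\ref{lem1}, and then sum over all coordinates to recover the factor $n$ (for $m_{1,t}$) or $nd$ (for $m_{2,t}$). First I would unfold the exponential-average recursion of step~\ref{dnnAdam:m1} to get $[m_{1,t}]_i = \sum_{k=1}^{t}(1-\beta_{11k})\prod_{j=k+1}^{t}\beta_{11j}[g_{1,k}]_i$, and bound each weight by $(1-\beta_{11k})\prod_{j=k+1}^{t}\beta_{11j}\le\beta_{111}^{t-k}$, using $\beta_{11j}=\beta_{111}\gamma_1^{j}\le\beta_{111}$. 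A Cauchy-Schwarz step identical to the one in the proof of Lemma~\ref{lem1}, together with the geometric sum $\sum_{k}\beta_{111}^{t-k}\le 1/(1-\beta_{111})$, yields $[m_{1,t}]_i^{2}\le \frac{1}{1-\beta_{111}}\sum_{k}\beta_{111}^{t-k}[g_{1,k}]_i^{2}$.

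Second, I would replace $[\hat v_{1,t}]_i$ by $[v_{1,t}]_i=(1-\beta_{21})\sum_{k}\beta_{21}^{t-k}[g_{1,k}]_i^{2}$ (monotonicity from step~\ref{dnnAdam:v1hat}) and apply the term-wise lower bound $[v_{1,t}]_i\ge (1-\beta_{21})\beta_{21}^{t-k}[g_{1,k}]_i^{2}$. This collapses the gradient factors and leaves a geometric series $\sum_{k=1}^{t}\lambda_1^{t-k}\le 1/(1-\lambda_1)$, giving $[m_{1,t}]_i^{2}/[\hat v_{1,t}]_i\le \frac{1}{(1-\beta_{111})(1-\beta_{21})(1-\lambda_1)}$. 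Summing over the $n$ coordinates delivers (\ref{lem5:bd2}); the identical argument applied to $m_{1,t-1}$ (with $t-1$ in place of $t$) proves (\ref{lem5:bd1}).

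For (\ref{lem5:bd3}) the plan is structurally the same but in the matrix setting $\mathbb{R}^{n\times d}$. The only new ingredient is the monotonicity chain $[\hat v_{2,t}]_{ij}\ge[v_{2,t}]_{ij}=\max_{k}\bigl|[\dot v_{2,t}]_{kj}\bigr|\ge[\dot v_{2,t}]_{ij}$ (since $\dot v_{2,t}$ is entry-wise nonnegative), so that $[\dot v_{2,t}]_{ij}=(1-\beta_{22})\sum_{k}\beta_{22}^{t-k}[g_{2,k}]_{ij}^{2}$ can serve as the comparison denominator. Everything else, including the bound $\beta_{12j}\le\beta_{121}$ and the geometric summation in $\lambda_2=\beta_{121}/\beta_{22}$, runs verbatim, and summing over the $nd$ coordinates gives the bound.

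Finally, (\ref{lem5:bd4}) follows the exact pattern of Lemma~\ref{lem1}: instead of bounding the denominator as $[\hat v_{2,t}]_{ij}$, I keep $\sqrt{[\hat v_{2,t}]_{ij}}\ge\sqrt{(1-\beta_{22})\sum_{k}\beta_{22}^{t-k}[g_{2,k}]_{ij}^{2}}$ and apply $\sqrt{\sum_k a_k}\ge \sqrt{a_\ell}$ to each term, which produces one factor of $|[g_{2,k}]_{ij}|$ that is then bounded uniformly by $G_{2,\infty}$ using condition~\ref{as2:3} of Assumption~\hyperref[as:2]{2}. The resulting geometric series in $\lambda_2^{t-k}$ and the summation over $nd$ entries produce the stated bound. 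The main obstacle, and the only nontrivial point, is justifying that step~\ref{dnnAdam:v2}'s column-max construction of $v_{2,t}$ does not impede the term-wise comparison $\hat v_{2,t}\succeq\dot v_{2,t}$ in the relevant entry; once this is observed, the calculations are a direct transcription of Lemma~\ref{lem1}'s method to the new indexing and to the time-varying but uniformly dominated $\beta_{11t},\beta_{12t}$.
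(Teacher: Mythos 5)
Your proposal follows essentially the same route as the paper's proof: unroll the moment recursions, dominate the time-varying $\beta_{11j},\beta_{12j}$ by $\beta_{111},\beta_{121}$, apply Cauchy--Schwarz to the numerator, compare term-by-term against the exponential average of squared gradients that lower-bounds $\hat v$ (including the entry-wise chain $\hat v_{2,t}\geq v_{2,t}\geq \dot v_{2,t}$ through the column-max step, which you correctly single out as the one nontrivial point), and sum the resulting geometric series in $\lambda_1$ or $\lambda_2$, extracting one factor of $\left|[g_{2,k}]_{ij}\right|\leq G_{2,\infty}$ for (\ref{lem5:bd4}). The only cosmetic divergences are that your derivation naturally yields $(1-\beta_{22})$ where the stated bounds (\ref{lem5:bd3})--(\ref{lem5:bd4}) print $(1-\beta_{21})$ --- an inconsistency already present between (\ref{lem5:v2}) and the displayed computation in the paper's own proof --- and that (\ref{lem5:bd1}) additionally needs $\hat v_{1,t}\geq \hat v_{1,t-1}$ to pass from the denominator at time $t$ to the one at time $t-1$, which the monotonicity you invoke from step~\ref{dnnAdam:v1hat} supplies.
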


\begin{proof}[Proof of Lemma \ref{lem5}]
Based on steps~\ref{dnnAdam:m1} -~\ref{dnnAdam:v2hat} in Algorithm \ref{dnn ADAM}, we obtain
\begin{align}
    &m_{1,t} = \sum_{j=1}^t\left[\left(1-\beta_{11j}\right)\prod_{k=1}^{t-j}\beta_{11\left(t-k+1 \right)}g_{1,j}\right]\label{lem5:m1}\\
    &m_{2,t} = \sum_{j=1}^t\left[\left(1-\beta_{12j}\right)\prod_{k=1}^{t-j}\beta_{12\left(t-k+1 \right)}g_{2,j}\right]\label{lem5:m2}\\
    &\hat{v}_{1,t} \geq \left(1-\beta_{21}\right)\sum_{j=1}^t\beta_{21}^{t-j}g_{1,j}\odot g_{1,j}\label{lem5:v1}\\
    &\hat{v}_{2,t} \geq \left(1-\beta_{22}\right)\sum_{j=1}^t\beta_{22}^{t-j}g_{2,j}\odot g_{2,j}\label{lem5:v2}.
\end{align}
Then, combining (\ref{lem5:m1}) and (\ref{lem5:v1}) yields
\begin{align*}
    \left\|\frac{1}{\sqrt{\hat{v}_{1,t}}}\odot m_{1,t}\right\|^2 &\leq \sum_{i=1}^n\frac{\left(\sum_{j=1}^t\left[\left(1-\beta_{11j}\right)\prod_{k=1}^{t-j}\beta_{11\left(t-k+1 \right)}\left[g_{1,j}\right]_i\right] \right)^2}{ \left(\left(1-\beta_{21}\right)\sum_{j=1}^t\beta_{21}^{t-j}\left[g_{1,j}\right]_i^2 \right)}\\
    &\leq \sum_{i=1}^n\frac{\left(\sum_{j=1}^t\prod_{k=1}^{t-j}\beta_{11\left(t-k+1 \right)}\left[g_{1,j}\right]_i \right)^2}{ \left(\left(1-\beta_{21}\right)\sum_{j=1}^t\beta_{21}^{t-j}\left[g_{1,j}\right]_i^2 \right)}\\
    &\leq \sum_{i=1}^n\frac{\left(\sum_{j=1}^t\prod_{k=1}^{t-j}\beta_{11\left(t-k+1 \right)} \right)
    \left(\sum_{j=1}^t\prod_{k=1}^{t-j}\beta_{11\left(t-k+1 \right)}\left[g_{1,j}\right]_i^2 \right)
    }{ \left(\left(1-\beta_{21}\right)\sum_{j=1}^t\beta_{21}^{t-j}\left[g_{1,j}\right]_i^2 \right)}\\
    &\leq \sum_{i=1}^n\frac{\left(\sum_{j=1}^t\beta_{111}^{t-j} \right)
    \left(\sum_{j=1}^t\beta_{111}^{t-j}\left[g_{1,j}\right]_i^2 \right)
    }{ \left(\left(1-\beta_{21}\right)\sum_{j=1}^t\beta_{21}^{t-j}\left[g_{1,j}\right]_i^2 \right)}\\
    &\leq \frac{1}{\left(1-\beta_{111}\right)\left(1-\beta_{21} \right)}
    \sum_{i=1}^n\sum_{j=1}^t\frac{
    \beta_{111}^{t-j}\left[g_{1,j}\right]_i^2 }{ \beta_{21}^{t-j}\left[g_{1,j}\right]_i^2 }\\
    &\leq \frac{1}{\left(1-\beta_{111}\right)\left(1-\beta_{21} \right)}
    \sum_{i=1}^n\sum_{j=1}^t\lambda_1^{t-j}\\
    &\leq \frac{n}{\left(1-\beta_{111}\right)\left(1-\beta_{21} \right)\left(1-\lambda_1\right)}.
\end{align*}
The first inequality follows from the definition of $\hat{v}_{1,t}$, which is maximum of all $v_{1,t}$ until the current time step. The third inequality follows from the Cauchy-Schwarz inequality and the forth inequality uses the fact that $\beta_{11t}\leq\beta_{111}$ for any $t$. Applying the same argument to $\left\|\frac{1}{\sqrt{\hat{v}_{2,t}}}\odot m_{2,t}\right\|^2$ implies (\ref{lem5:bd3}). Then, applying the fact that $\hat{v}_{1,t}\geq \hat{v_{1,t-1}}$ yields
\begin{align*}
    \left\|\frac{1}{\sqrt{\hat{v}_{1,t}}}\odot m_{1,t-1}\right\|^2 &\leq \left\|\frac{1}{\sqrt{\hat{v}_{1,t-1}}}\odot m_{1,t-1}\right\|^2\leq \frac{n}{\left(1-\beta_{111}\right)\left(1-\beta_{21} \right)\left(1-\lambda_1\right)},
\end{align*}
where the last inequality follows from (\ref{lem5:bd2}). Lastly, $\lambda_2 = \frac{\beta_{121}}{\beta_{22}}<1$ implies $\frac{\beta_{121}}{\sqrt{\beta_{22}}}<\lambda_2 < 1$. By combining (\ref{lem5:m2}) and (\ref{lem5:v2}), we get
\begin{align*}
    \left\|\frac{1}{\sqrt[4]{\hat{v}_{2,t}}}\odot m_{2,t}\right\|^2&\leq \sum_{p=1}^n\sum_{q=1}^d\frac{\left(\sum_{j=1}^t\left[\left(1-\beta_{12j}\right)\prod_{k=1}^{t-j}\beta_{12\left(t-k+1 \right)}\left[g_{2,j}\right]_{pq}\right] \right)^2}{\sqrt{\left(\left(1-\beta_{21}\right)\sum_{j=1}^t\beta_{22}^{t-j}\left[g_{2,j}\right]_{pq}^2 \right)}}\\
    &\leq \sum_{p=1}^n\sum_{q=1}^d\frac{\left(\sum_{j=1}^t\left[\prod_{k=1}^{t-j}\beta_{12\left(t-k+1 \right)}\left[g_{2,j}\right]_{pq}\right] \right)^2}{\sqrt{\left(\left(1-\beta_{21}\right)\sum_{j=1}^t\beta_{22}^{t-j}\left[g_{2,j}\right]_{pq}^2 \right)}}\\
    &\leq \sum_{p=1}^n\sum_{q=1}^d\frac{\left(\sum_{j=1}^t\prod_{k=1}^{t-j}\beta_{12\left(t-k+1 \right)} \right)
    \left(\sum_{j=1}^t\prod_{k=1}^{t-j}\beta_{12\left(t-k+1 \right)}\left[g_{2,j}\right]_{pq}^2 \right)
    }{\sqrt{\left(\left(1-\beta_{21}\right)\sum_{j=1}^t\beta_{22}^{t-j}\left[g_{2,j}\right]_{pq}^2 \right)}}\\
    &\leq \sum_{p=1}^n\sum_{q=1}^d\frac{\left(\sum_{j=1}^t\beta_{121}^{t-j} \right)
    \left(\sum_{j=1}^t\beta_{121}^{t-j}\left[g_{2,j}\right]_{pq}^2 \right)
    }{\sqrt{\left(\left(1-\beta_{21}\right)\sum_{j=1}^t\beta_{22}^{t-j}\left[g_{2,j}\right]_{pq}^2 \right)}}\\
    &\leq\frac{1}{\left(1-\beta_{121}\right)\sqrt{1-\beta_{21}}}\sum_{p=1}^n\sum_{q=1}^d\sum_{j=1}^t\frac{
    \beta_{121}^{t-j}\left[g_{2,j}\right]_{pq}^2 
    }{\sqrt{\beta_{22}^{t-j}\left[g_{2,j}\right]_{pq}^2 }}\\
    &\leq\frac{1}{\left(1-\beta_{121}\right)\sqrt{1-\beta_{21}}}\sum_{p=1}^n\sum_{q=1}^d\sum_{j=1}^t\lambda_2^{t-j}\left|\left[g_{2,j}\right]_{pq}\right|\\
    &\leq\frac{ndG_{2,\infty}}{\left(1-\beta_{121}\right)\sqrt{1-\beta_{21}}\left(1-\lambda_2\right)}.
\end{align*}
\end{proof}

\subsection*{Proof of Theorem \ref{thm4}}
\label{proof:thm4}
\begin{proof}
Now, let us multiply $\left\|\omega_{2,t+1}^T\omega_{1,t+1}-\omega_{2*}^T\omega_{1*}\right\|^2$ by $\sqrt{\hat{v}_{2,t}}$, then take expectation given all records until time $t$. Then, from steps~\ref{dnnAdam:m1} -~\ref{dnnAdam:w2}, we obtain
\begin{align}
    &\mathop{{}\mathbb{E}}\left[\left\|\left(\sqrt[4]{\hat{v}_{2,t}}\odot\omega_{2,t+1}\right)^T\omega_{1,t+1}-\left(\sqrt[4]{\hat{v}_{2,t}}\odot\omega_{2,*}\right)^T\omega_{1,*} \right\|^2\mid\mathcal{F}^t \right]\label{thm4:1}\\
    = &\mathop{{}\mathbb{E}}\left[\left\|\left(\sqrt[4]{\hat{v}_{2,t}}\odot\left(\omega_{2,t}-\frac{\eta }{\sqrt{\hat{v}_{2,t}}}\odot m_{2,t} \right)\right)^T\left(\omega_{1,t}-\frac{\eta }{\sqrt{\hat{v}_{1,t}}}\odot m_{1,t} \right)\right.\right.\nonumber\\
    &\left.\left. -\left(\sqrt[4]{\hat{v}_{2,t}}\odot\omega_{2,*}\right)^T\omega_{1,*} \right\|^2\mid\mathcal{F}^t \right]\nonumber\\
    =&\mathop{{}\mathbb{E}}\left[\left\|\left(\sqrt[4]{\hat{v}_{2,t}}\odot\omega_{2,t}\right)^T\omega_{1,t}-\left(\sqrt[4]{\hat{v}_{2,t}}\odot\omega_{2,*}\right)^T\omega_{1,*} \right\|^2\mid\mathcal{F}^t \right]\nonumber\\
    & -2\mathop{{}\mathbb{E}}\left[\left\langle \left(\sqrt{\hat{v}_{2,t}}\odot\omega_{2,t}\right)^T\omega_{1,t}-\left(\sqrt{\hat{v}_{2,t}}\odot\omega_{2,*}\right)^T\omega_{1*},\omega_{2,t}^T\frac{\eta}{\sqrt{\hat{v}_{1,t}}}\odot m_{1,t}  \right\rangle \mid\mathcal{F}^t\right]\label{thm4:item1}\\
    & -2\mathop{{}\mathbb{E}}\left[\left\langle \left(\sqrt{\hat{v}_{2,t}}\odot\omega_{2,t}\right)^T\omega_{1,t}-\left(\sqrt{\hat{v}_{2,t}}\odot\omega_{2,*}\right)^T\omega_{1,*},\left(\frac{\eta}{\sqrt{\hat{v}_{2,t}}}\odot m_{2,t} \right)^T\omega_{1,t} \right\rangle \mid\mathcal{F}^t\right]\label{thm4:item2}\\
    &  + 2\eta^2\mathop{{}\mathbb{E}}\left[\left\langle \left(\sqrt{\hat{v}_{2,t}}\odot\omega_{2,t}\right)^T\omega_{1,t}-\left(\sqrt{\hat{v}_{2,t}}\odot\omega_{2,*}\right)^T\omega_{1,*},\right.\right.\nonumber\\
    &\quad\quad\quad\left.\left.\left(\frac{\eta }{\sqrt{\hat{v}_{2,t}}}\odot m_{2,t} \right)^T\left(\frac{\eta}{\sqrt{\hat{v}_{1,t}}}\odot m_{1,t} \right) \right\rangle \mid\mathcal{F}^t\right]\label{thm4:item3}\\
    &  + \eta^2\mathop{{}\mathbb{E}}\left[\left\|
    \left(\sqrt[4]{\hat{v}_{2,t}}\odot\omega_{2,t}\right)^T\left(\frac{\eta}{\sqrt{\hat{v}_{1,t}}}\odot m_{1,t} \right)+
    \left(\sqrt[4]{\hat{v}_{2,t}}\odot\left(\frac{\eta}{\sqrt{\hat{v}_{2,t}}}\odot m_{2,t} \right)\right)^T\omega_{1,t}+\right.\right.\nonumber\\
    &\left.\left.\quad\quad\quad\quad\left(\sqrt[4]{\hat{v}_{2,t}}\odot\left(\frac{\eta}{\sqrt{\hat{v}_{2,t}}}\odot m_{2,t} \right)\right)^T\left(\frac{\eta}{\sqrt{\hat{v}_{1,t}}}\odot m_{1,t} \right)\right\|^2 \mid\mathcal{F}^t\right]\label{thm4:item4}.
\end{align}
Let us first consider the expectations in (\ref{thm4:item3}) and (\ref{thm4:item4}). From (\ref{lem4:1}), we conclude that $\hat{v}_{2,t}$ is bounded. Similarly, given $\beta_{11t}=\beta_{111}\gamma_{1}^t$ and $\beta_{12t}=\beta_{121}\gamma_{2}^t$ with $0<\gamma_1,\gamma_2 < 1$, for each entry, we attain
\begin{align*}
    &\left|\left[m_{1,t}\right]_i\right| \leq \left|(1-\beta_{111})\sum_{j=1}^t\beta_{111}^{t-j}\left[g_{1,j}\right]_i \right|\leq \max_j\left|\left[g_{1,j}\right]_i\right|\\
    &\left|\left[m_{2,t}\right]_{ik}\right| \leq \left|(1-\beta_{121})\sum_{j=1}^t\beta_{121}^{t-j}\left[g_{2,j}\right]_{ik} \right|\leq \max_j\left|\left[g_{2,j}\right]_{ik}\right|.
\end{align*}
Since $\left\|\frac{ 1}{\sqrt{\hat{v}_{1,t}}}\odot m_{1,t} \right\|^2$, $\left\|\frac{1}{\sqrt{\hat{v}_{2,t}}}\odot  m_{2,t} \right\|^2$ and $\left\|\frac{ 1}{\sqrt[4]{\hat{v}_{2,t}}} \odot m_{2,t}\right\|^2$ are bounded from Lemma \ref{lem5} and $\omega_{1,t}, \omega_{2,t},\omega_{1,*}, \omega_{2,*}, \hat{v}_{2,t}$ are also bounded from Assumption \hyperref[as:2]{2} and Lemma \ref{lem4}, 
applying Lemma \ref{lem5} and Cauchy-Schwarz inequality yields
\begin{align}
 &2\eta^2\mathop{{}\mathbb{E}}\left[\left\langle \left(\sqrt{\hat{v}_{2,t}}\odot\omega_{2,t}\right)^T\omega_{1,t}-\left(\sqrt{\hat{v}_{2,t}}\odot\omega_{2,*}\right)^T\omega_{1,*},\left(\frac{\eta }{\sqrt{\hat{v}_{2,t}}}\odot m_{2,t} \right)^T\right.\right.\nonumber\\
 &\quad\left.\left.\left(\frac{\eta}{\sqrt{\hat{v}_{1,t}}} \odot m_{1,t}\right) \right\rangle \mid\mathcal{F}^t\right]
  + \eta^2\mathop{{}\mathbb{E}}\left[\left\|
    \left(\sqrt[4]{\hat{v}_{2,t}}\odot\omega_{2,t}\right)^T\left(\frac{\eta}{\sqrt{\hat{v}_{1,t}}}\odot m_{1,t} \right)\right.\right.\nonumber\\
    &\quad\left.\left.+\left(\sqrt[4]{\hat{v}_{2,t}}\odot\left(\frac{\eta}{\sqrt{\hat{v}_{2,t}}}\odot m_{2,t} \right)\right)^T\omega_{1,t}+\left(\sqrt[4]{\hat{v}_{2,t}}\odot\left(\frac{\eta}{\sqrt{\hat{v}_{2,t}}}\odot m_{2,t} \right)\right)^T\right.\right.\nonumber\\
    &\quad\left.\left.\quad\quad\quad\quad\quad\quad\left(\frac{\eta}{\sqrt{\hat{v}_{1,t}}}\odot m_{1,t} \right)\right\|^2 \mid\mathcal{F}^t\right]\nonumber\\
    =&2\eta^2\mathop{{}\mathbb{E}}\left[\left\langle \left(\sqrt{\hat{v}_{2,t}}\odot\omega_{2,t}\right)^T\omega_{1,t}-\left(\sqrt{\hat{v}_{2,t}}\odot\omega_{2,*}\right)^T\omega_{1,*},\left(\frac{\eta }{\sqrt{\hat{v}_{2,t}}}\odot m_{2,t} \right)^T\right.\right.\nonumber\\
    &\quad\left.\left.\left(\frac{\eta }{\sqrt{\hat{v}_{1,t}}}\odot m_{1,t} \right) \right\rangle \mid\mathcal{F}^t\right] + \eta^2\mathop{{}\mathbb{E}}\left[\left\|
    \left(\sqrt[4]{\hat{v}_{2,t}}\odot\omega_{2,t}\right)^T\left(\frac{\eta}{\sqrt{\hat{v}_{1,t}}}\odot m_{1,t} \right)\right.\right.\nonumber\\
    &\quad\left.\left.+
    \left(\frac{\eta}{\sqrt[4]{\hat{v}_{2,t}}}\odot m_{2,t} \right)^T\omega_{1,t}+ \left(\frac{\eta}{\sqrt[4]{\hat{v}_{2,t}}}\odot m_{2,t} \right)^T\left(\frac{\eta}{\sqrt{\hat{v}_{1,t}}}\odot m_{1,t} \right)\right\|^2 \mid\mathcal{F}^t\right]\nonumber\\
    \leq &\eta^2\mathop{{}\mathbb{E}}\left[\left\|\left(\sqrt{\hat{v}_{2,t}}\odot\omega_{2,t}\right)^T\omega_{1,t}-\left(\sqrt{\hat{v}_{2,t}}\odot\omega_{2,*}\right)^T\omega_{1,*},\right\|^2+\left\|\frac{\eta }{\sqrt{\hat{v}_{2,t}}}\odot m_{2,t} \right\|^2\right.\nonumber\\
    &\quad\left.\cdot\left\|\frac{\eta }{\sqrt{\hat{v}_{1,t}}}\odot m_{1,t} \right\|^2 \mid\mathcal{F}^t\right]+ 2\eta^2\mathop{{}\mathbb{E}}\left[\left\|
    \sqrt[4]{\hat{v}_{2,t}}\odot\omega_{2,t}\right\|^2 \left\|\frac{\eta}{\sqrt{\hat{v}_{1,t}}}\odot m_{1,t} \right\|^2\right.\nonumber\\
    &\quad\left.+
    \left\|\frac{\eta}{\sqrt[4]{\hat{v}_{2,t}}}\odot m_{2,t} \right\|^2\left\|\omega_{1,t}\right\|^2+\left\|\frac{\eta}{\sqrt[4]{\hat{v}_{2,t}}}\odot m_{2,t} \right\|^2\left\|\frac{\eta}{\sqrt{\hat{v}_{1,t}}}\odot m_{1,t} \right\|^2 \mid\mathcal{F}^t\right]\nonumber\\
    &\leq \eta^2\cdot M_1,\label{thm4:item3+4}
\end{align}
where $M_1$ is a fixed constant. Now, let us proceed to show an upper bound for the term in (\ref{thm4:item1}). Applying Lemma \ref{lem4} to (\ref{thm4:item1}) yields
\begin{align}
   &\mathop{{}\mathbb{E}}\left[\left\langle \left(\sqrt[2]{\hat{v}_{2,t}}\odot\omega_{2,t}\right)^T\omega_{1,t}-\left(\sqrt[2]{\hat{v}_{2,t}}\odot\omega_{2,*}\right)^T\omega_{1,*},\omega_{2,t}^T\frac{\eta}{\sqrt{\hat{v}_{1,t}}}\odot m_{1,t}  \right\rangle \mid\mathcal{F}^t\right]\nonumber \\
   =&\mathop{{}\mathbb{E}}\left[\left\langle \left(\sqrt{\tilde{v}_{2,t}}\right)^T\left(\omega_{2,t}^T\omega_{1,t}-\omega_{2,*}^T\omega_{1,*}\right),\omega_{2,t}^T\frac{\eta}{\sqrt{\hat{v}_{1,t}}}\odot m_{1,t}  \right\rangle \mid\mathcal{F}^t\right]\nonumber\\
   =&\mathop{{}\mathbb{E}}\left[\left\langle \left(\sqrt{\tilde{v}_{2,t}}\right)^T\left(\omega_{2,t}^T\omega_{1,t}-\omega_{2,*}^T\omega_{1,*}\right),\omega_{2,t}^T\frac{\eta}{\sqrt{\hat{v}_{1,t}}}\odot \left(\beta_{11t}m_{1,t-1}+(1-\beta_{11t})g_{1,t} \right)  \right\rangle \mid\mathcal{F}^t\right]\nonumber\\
   =&\mathop{{}\mathbb{E}}\left[\left\langle \left(\sqrt{\tilde{v}_{2,t}}\right)^T\left(\omega_{2,t}^T\omega_{1,t}-\omega_{2,*}^T\omega_{1,*}\right),\omega_{2,t}^T\frac{\eta}{\sqrt{\hat{v}_{1,t}}}\odot \beta_{11t}m_{1,t-1} \right\rangle \mid\mathcal{F}^t\right]\label{thm4:item1.1}\\
   &\quad\quad\quad+
   \mathop{{}\mathbb{E}}\left[\left\langle \left(\sqrt{\tilde{v}_{2,t}}\right)^T\left(\omega_{2,t}^T\omega_{1,t}-\omega_{2,*}^T\omega_{1,*}\right),\omega_{2,t}^T\frac{\eta}{\sqrt{\hat{v}_{1,t}}}\odot (1-\beta_{11t})g_{1,t}  \right\rangle \mid\mathcal{F}^t\right]\label{thm4:item1.2}.
\end{align}
Since $\omega_{2,t}, \omega_{1,t},  \omega_{2,*}, \omega_{1,*}, \frac{m_{1,t-1}}{\sqrt{\hat{v}_{1,t}}},\tilde{v}_{2,t}$ and $m_{1,t-1}$ are all bounded, for the term in (\ref{thm4:item1.1}), there exists a constant $M_2$ such that
\begin{align}
   &\mathop{{}\mathbb{E}}\left[\left\langle \left(\sqrt{\tilde{v}_{2,t}}\right)^T\left(\omega_{2,t}^T\omega_{1,t}-\omega_{2,*}^T\omega_{1,*}\right),\omega_{2,t}^T\frac{\eta}{\sqrt{\hat{v}_{1,t}}}\odot \beta_{11t}m_{1,t-1} \right\rangle \mid\mathcal{F}^t\right]\nonumber\\
   =&\eta\beta_{11t}\mathop{{}\mathbb{E}}\left[ \left(\omega_{1,t}^T\omega_{2,t}-\omega_{1,*}^T\omega_{2,*}\right)\sqrt{\tilde{v}_{2,t}}\omega_{2,t}^T\frac{1}{\sqrt{\hat{v}_{1,t}}}\odot m_{1,t-1} \mid\mathcal{F}^t\right]\nonumber\\
   \leq&\frac{\eta\beta_{11t}}{2}\mathop{{}\mathbb{E}}\left[\left\| \left(\omega_{1,t}^T\omega_{2,t}-\omega_{1,*}^T\omega_{2,*}\right)\sqrt{\tilde{v}_{2,t}}\omega_{2,t}^T\right\|^2+\left\|\frac{1}{\sqrt{\hat{v}_{1,t}}}m_{1,t-1}\right\|^2 \mid\mathcal{F}^t\right]
   \leq \eta\beta_{11t}M_2.\label{thm4:item1.1result}
\end{align}
Next, let us bound the term in (\ref{thm4:item1.2}). Based on Lemma \ref{lem5}, we have
\begin{align*}
    &\left|\mathop{{}\mathbb{E}}\left[\left\langle \left(\sqrt{\tilde{v}_{2,t}}\right)^T\left(\omega_{2,t}^T\omega_{1,t}-\omega_{2,*}^T\omega_{1,*}\right),\omega_{2,t}^T\frac{\eta}{\sqrt{\hat{v}_{1,t}}}\odot (1-\beta_{11t})g_{1,t}  \right\rangle \mid\mathcal{F}^t\right]\right|\\
    =&\eta(1-\beta_{11t})\left|\mathop{{}\mathbb{E}}\left[\left(\omega_{1,t}^T\omega_{2,t}-\omega_{1,*}^T\omega_{2,*}\right)\sqrt{\tilde{v}_{2,t}}\omega_{2,t}^T\frac{1 }{\sqrt{\hat{v}_{1,t}}}\odot g_{1,t} \mid\mathcal{F}^t\right]\right|\\
    \leq &\eta(1-\beta_{11t})\left\| \omega_{1,t}^T\omega_{2,t}-\omega_{1,*}^T\omega_{2,*}\right\|\mathop{{}\mathbb{E}}\left[\left\|\sqrt{\tilde{v}_{2,t}}\omega_{2,t}^T\frac{1 }{\sqrt{\hat{v}_{1,t}}} \odot g_{1,t}\right\|   \mid\mathcal{F}^t\right].
\end{align*}
Now, let us focus on the product in the expectation. Since $\sqrt{\tilde{v}_{2,t}}\in\mathbb{R}^{d\times d}$ is a diagonal matrix, let us denote the $i_{\mathrm{th}}$ element on diagonal as $\left[\tilde{v}_{2,t}\right]_i$. Then,
\begin{align*}
   \sqrt{\tilde{v}_{2,t}}\omega_{2,t}^T\frac{1 }{\sqrt{\hat{v}_{1,t}}}\odot g_{1,t} = \left(\mathcal{V}_{t}\odot\omega_{2,t}\right)^Tg_{1,t},
\end{align*}
where $\mathcal{V}_{12}\in\mathbb{R}^{n\times d}$ such that $\left[\mathcal{V}_{12}\right]_{ij} = \sqrt{\frac{\left[\tilde{v}_{2,t}\right]_j}{\left[\hat{v}_{1,t} \right]_i}}$. Then, we obtain
\begin{align*}
    &\left|\mathop{{}\mathbb{E}}\left[\left\langle \left(\sqrt{\tilde{v}_{2,t}}\right)^T\left(\omega_{2,t}^T\omega_{1,t}-\omega_{2,*}^T\omega_{1,*}\right),\omega_{2,t}^T\frac{\eta}{\sqrt{\hat{v}_{1,t}}}\odot (1-\beta_{11t})g_{1,t}  \right\rangle \mid\mathcal{F}^t\right]\right|\\
    \leq &\eta(1-\beta_{11t})\left\| \omega_{1,t}^T\omega_{2,t}-\omega_{1,*}^T\omega_{2,*}\right\|\mathop{{}\mathbb{E}}\left[\left\|\left(\mathcal{V}_{t}\odot\omega_{2,t}\right)\right\|\left\|g_{1,t} \right\|   \mid\mathcal{F}^t\right].
\end{align*}
Based on (\ref{lem4:1}) and condition \ref{as2:5} from Assumption \hyperref[as:2]{2}, we discover
\begin{align}
    \left[\mathcal{V}_{12}\right]_{ij} = \sqrt{\frac{\left[\tilde{v}_{2,t}\right]_j}{\left[\hat{v}_{1,t} \right]_i}}\leq \frac{G_{2,\infty}}{\mu},
    \label{thm4:7}
\end{align}
which in turn yields
\begin{align}
    &\left|\mathop{{}\mathbb{E}}\left[\left\langle \left(\sqrt{\tilde{v}_{2,t}}\right)^T\left(\omega_{2,t}^T\omega_{1,t}-\omega_{2,*}^T\omega_{1,*}\right),\omega_{2,t}^T\frac{\eta}{\sqrt{\hat{v}_{1,t}}}\odot (1-\beta_{11t})g_{1,t}  \right\rangle \mid\mathcal{F}^t\right]\right|\nonumber\\
    \leq &\eta(1-\beta_{11t})\frac{\alpha G_{2,\infty}}{\mu}\left\| \omega_{1,t}^T\omega_{2,t}-\omega_{1,*}^T\omega_{2,*}\right\|\mathop{{}\mathbb{E}}\left[\left\|g_{1,t} \right\|   \mid\mathcal{F}^t\right].\label{thm4:8}
\end{align}
Note that in (\ref{thm4:7}), we assume that $\left[\hat{v}_{1,t} \right]_i$ is nonzero on the $i_{\mathrm{th}}$ coordinate. On the other hand, if $\left[\hat{v}_{1,t} \right]_i$ is zero on the $i_{\mathrm{th}}$ coordinate, then it implies $\left[g_{1,j}\right]_{i}=0$ for $j=1,2,\cdots,t$ on the $i_{\mathrm{th}}$ coordinate, which in turn yields $\left[g_{1,t}\right]_{i} = 0$. Thus, (\ref{thm4:8}) directly follows. Then, based on step~\ref{ddnAdam:g1} in Algorithm \ref{dnn ADAM}, we obtain 
\begin{align}
    &\left|\mathop{{}\mathbb{E}}\left[\left\langle \left(\sqrt{\tilde{v}_{2,t}}\right)^T\left(\omega_{2,t}^T\omega_{1,t}-\omega_{2,*}^T\omega_{1,*}\right),\omega_{2,t}^T\frac{\eta}{\sqrt{\hat{v}_{1,t}}}\odot (1-\beta_{11t})g_{1,t}  \right\rangle \mid\mathcal{F}^t\right]\right|\nonumber\\
    =&\eta(1-\beta_{11t})\frac{\alpha G_{2,\infty}}{\mu}\left\| \omega_{1,t}^T\omega_{2,t}-\omega_{1,*}^T\omega_{2,*}\right\|\mathop{{}\mathbb{E}}\left[\left\|\left(\omega_{1,t}^T\sigma_1\left(\omega_{2,t}z^t \right)-y^t\right)\sigma_2\left(\omega_{2,t}z^t \right)\right\|\mid\mathcal{F}^t\right]\nonumber\\
    \leq &\eta(1-\beta_{11t})\frac{\alpha G_{2,\infty}}{\mu}\left\| \omega_{1,t}^T\omega_{2,t}-\omega_{1,*}^T\omega_{2,*}\right\| \mathop{{}\mathbb{E}}\left[\left|\omega_{1,t}^T\sigma_1\left(\omega_{2,t}z^t \right)-\omega_{1,*}^T\sigma_1\left(\omega_{2,*}z^t \right)\right|\mid\mathcal{F}^t\right]\nonumber\\
    &\quad\quad\quad\quad\cdot\mathop{{}\mathbb{E}}\left[\left\|\sigma_2\left(\omega_{2,t}z^t \right)\right\|\mid\mathcal{F}^t\right]\nonumber\\
    =&\eta(1-\beta_{11t})\frac{\alpha G_{2,\infty}}{\mu}\left\| \omega_{1,t}^T\omega_{2,t}-\omega_{1,*}^T\omega_{2,*}\right\|\rho\left|\left(\omega_{1,t}^T\omega_{2,t}-\omega_{1,*}^T\omega_{2,*}\right)z^t\right|\rho\left\|\omega_{2,t}z^t\right\|\nonumber\\
    =&\eta(1-\beta_{11t})\frac{\alpha G_{2,\infty}}{\mu}\left\| \omega_{1,t}^T\omega_{2,t}-\omega_{1,*}^T\omega_{2,*}\right\|\rho^2\left|\left(\omega_{1,t}^T\omega_{2,t}-\omega_{1,*}^T\omega_{2,*}\right)z^t\right|\left\|\omega_{2,t}z^t\right\|\nonumber\\
    \leq&\eta(1-\beta_{11t})\frac{\alpha G_{2,\infty}}{\mu}\rho^2\left\|\omega_{2,t} \right\|\left(\omega_{1,t}^T\omega_{2,t}z^t-\omega_{1,*}^T\omega_{2,*}z^t\right)^2\frac{\left\| \omega_{1,t}^T\omega_{2,t}-\omega_{1,*}^T\omega_{2,*}\right\|\left\|z^t\right\|}{\left|\omega_{1,t}^T\omega_{2,t}z^t-\omega_{1,*}^T\omega_{2,*}z^t \right|}\nonumber\\
    \leq &2\eta\frac{\alpha G_{2,\infty}(1-\beta_{11t})}{\mu\cos{\epsilon}}\mathop{{}\mathbb{E}}\left[l_t \mid\mathcal{F}^t\right].\label{thm4:item1.2result}
\end{align}
The last inequality follows by applying conditions \ref{as2:3} and \ref{as2:4} in Assumption \hyperref[as:2]{2}. Next, Let us deal with the term in (\ref{thm4:item2}). Based on step~\ref{dnnAdam:m2} in Algorithm \ref{dnn ADAM}, we observe
\begin{align}
   &\mathop{{}\mathbb{E}}\left[\left\langle \left(\sqrt[2]{\hat{v}_{2,t}}\odot\omega_{2,t}\right)^T\omega_{1,t}-\left(\sqrt[2]{\hat{v}_{2,t}}\odot\omega_{2,*}\right)^T\omega_{1,*},\left(\frac{\eta}{\sqrt{\hat{v}_{2,t}}}\odot m_{2,t} \right)^T\omega_{1,t} \right\rangle \mid\mathcal{F}^t\right]\nonumber\\
   =&\eta\mathop{{}\mathbb{E}}\left[\left\langle \omega_{2,t}^T\omega_{1,t}-\omega_{2,*}^T\omega_{1,*}, m_{2,t}^T\omega_{1,t} \right\rangle \mid\mathcal{F}^t\right]\nonumber\\
   =&\eta\mathop{{}\mathbb{E}}\left[\left\langle \omega_{2,t}^T\omega_{1,t}-\omega_{2,*}^T\omega_{1,*}, \left(\beta_{12t}m_{2,t-1}+\left(1-\beta_{12t}\right)g_{2,t} \right)^T\omega_{1,t} \right\rangle \mid\mathcal{F}^t\right]\nonumber\\
   =&\eta \left[ \beta_{12t}\left\langle \omega_{2,t}^T\omega_{1,t}-\omega_{2,*}^T\omega_{1,*}, m_{2,t-1}^T\omega_{1t}\right\rangle+\left(1-\beta_{12t}\right)\left\langle \omega_{2,t}^T\omega_{1,t}-\omega_{2,*}^T\omega_{1,*}, \mathop{{}\mathbb{E}}\left[g_{2,t}^T\mid\mathcal{F}^t\right]\omega_{1,t} \right\rangle  \right]\nonumber\\
   =&\eta \beta_{12t}\left\langle \omega_{2,t}^T\omega_{1,t}-\omega_{2,*}^T\omega_{1,*}, m_{2,t-1}^T\omega_{1t}\right\rangle+\eta\left(1-\beta_{12t}\right)\nonumber\\
   &\quad \cdot\left\langle \omega_{2,t}^T\omega_{1,t}-\omega_{2,*}^T\omega_{1,*}, \left(\mathop{{}\mathbb{E}}\left[\left(\omega_{1,t}^T\sigma_1\left(\omega_{2,t}z^t \right)-y^t\right)\omega_{1,t}\left(\sigma_2\left(z^t\right)\right)^T\mid\mathcal{F}^t\right]\right)^T\omega_{1,t} \right\rangle \nonumber\\
   =&\eta \beta_{12t}\left( \omega_{1,t}^T\omega_{2,t}-\omega_{1,*}^T\omega_{2,*}\right) m_{2,t-1}^T\omega_{1,t}\label{thm4:item2.1}\\
   &\quad+
   \eta\rho^2\left(1-\beta_{12t}\right)\left\langle \omega_{2,t}^T\omega_{1,t}-\omega_{2,*}^T\omega_{1,*}, 
   \left(\omega_{1,t}^T\omega_{2,t}z^t -\omega_{1,*}^T\omega_{2,*}z^t\right)z^t
   \omega_{1,t}^T\omega_{1,t} \right\rangle\label{thm4:item2.2}.
\end{align}
The last equality holds true due to (\ref{lem3:grad2}) in Lemma \ref{lem3}. By using the fact that $\omega_{1,t},\omega_{2,t},\omega_{1,*},\omega_{2,*}$ and $m_{2,t-1}$ are all bounded, for the term in (\ref{thm4:item2.1}), there exists a constant $M_3$ such that
\begin{align}
    \left|\eta \beta_{12t}\left( \omega_{1,t}^T\omega_{2,t}-\omega_{1,*}^T\omega_{2,*}\right) m_{2,t-1}^T\omega_{1,t}\right|\leq \eta\beta_{12t}M_3\label{thm4:itme2.1result}.
\end{align}
At the same time, by inserting (\ref{obj:non}) from Lemma \ref{lem:obj} into (\ref{thm4:item2.2}) we get
\begin{align}
    &\eta\rho^2\left(1-\beta_{12t}\right)\left\langle \omega_{2,t}^T\omega_{1,t}-\omega_{2,*}^T\omega_{1,*}, 
   \left(\omega_{1,t}^T\omega_{2,t}z^t -\omega_{1,*}^T\omega_{2,*}z^t\right)z^t
   \omega_{1,t}^T\omega_{1,t} \right\rangle\nonumber\\
   =&\eta\rho^2(1-\beta_{12t})\left(\omega_{1,t}^T\omega_{2,t}z^t -\omega_{1,*}^T\omega_{2,*}z^t\right)\left(\omega_{1,t}^T\omega_{2,t}-\omega_{1,*}^T\omega_{2,*}\right)z^t\omega_{1,t}^T\omega_{1,t}\nonumber\\
   =&2(1-\beta_{12t})\eta\left\|\omega_{1,t}\right\|^2\mathop{{}\mathbb{E}}\left[l_t \mid\mathcal{F}^t\right]\nonumber\\
   \geq& 2(1-\beta_{121})\eta\left\|\omega_{1,t}\right\|^2\mathop{{}\mathbb{E}}\left[l_t \mid\mathcal{F}^t\right]
   \label{thm4:item2.2result} .
\end{align}
By inserting (\ref{thm4:item3+4}),(\ref{thm4:item1.1result}),(\ref{thm4:item1.2result}), (\ref{thm4:itme2.1result}),  and (\ref{thm4:item2.2result}),  into (\ref{thm4:1}) we obtain
\begin{align*}
    &2\mathop{{}\mathbb{E}}\left[l_t \mid\mathcal{F}^t\right]\left((1-\beta_{121})\left\|\omega_{1,t}\right\|^2 -  \frac{\alpha G_{2,\infty}(1-\beta_{11t})}{\mu\cos{\epsilon}}
    \right)\\
    \leq&\frac{1}{\eta}\left\{\mathop{{}\mathbb{E}}\left[\left\|\left(\sqrt[4]{\hat{v}_{2,t}}\odot\omega_{2,t+1}\right)^T\omega_{1,t+1}-\left(\sqrt[4]{\hat{v}_{2,t}}\odot\omega_{2,*}\right)^T\omega_{1,*} \right\|^2\mid\mathcal{F}^t \right]\right.\\
    &\quad\quad\left.-\mathop{{}\mathbb{E}}\left[\left\|\left(\sqrt[4]{\hat{v}_{2,t}}\odot\omega_{2,t}\right)^T\omega_{1,t}-\left(\sqrt[4]{\hat{v}_{2,t}}\odot\omega_{2,*}\right)^T\omega_{1,*} \right\|^2\mid\mathcal{F}^t \right]\right\}\\
    &\quad\quad+2\left(\beta_{11t}M_2+\beta_{12t}M_3\right)+\eta M_1.
\end{align*}
Since $\left\|\omega_{1,t} \right\| =\sqrt{\left[\frac{1}{2}+\xi_2\right]\mathbin{/}\left(1-\beta_{121}\right)}= \sqrt{\left[\frac{1}{2}+\frac{\alpha G_{2,\infty}}{\mu\cos{(\epsilon)}}\right]\mathbin{/}\left(1-\beta_{121}\right)}$, which in turn yields 
\begin{align*}
    2\left((1-\beta_{121})\left\|\omega_{1,t}\right\|^2 - \frac{\alpha G_{2,\infty}(1-\beta_{11t})}{\mu\cos{\epsilon}} \right)\geq 1.
\end{align*}
Therefore, by recalling the law of iterated expectations and summing up all loss functions for $t=p,p+1,\cdots,p+T$, we get
\begin{align}
    \sum_{t=p}^{T+p}\mathop{{}\mathbb{E}}\left[l_t \right]&\leq \frac{1}{\eta}\sum_{t=p}^{p+T}\left\{\mathop{{}\mathbb{E}}\left[\left\|\left(\sqrt[4]{\hat{v}_{2,t}}\odot\omega_{2,t+1}\right)^T\omega_{1,t+1}-\left(\sqrt[4]{\hat{v}_{2,t}}\odot\omega_{2,*}\right)^T\omega_{1,*} \right\|^2 \right]\right.\nonumber\\
    &\quad\quad\left.-\mathop{{}\mathbb{E}}\left[\left\|\left(\sqrt[4]{\hat{v}_{2,t}}\odot\omega_{2,t}\right)^T\omega_{1,t}-\left(\sqrt[4]{\hat{v}_{2,t}}\odot\omega_{2,*}\right)^T\omega_{1,*} \right\|^2 \right]\right\}\nonumber\\
    &\quad\quad+2\sum_{t=p}^{p+T}\left(\beta_{11t}M_2+\beta_{12t}M_3\right)+T\eta M_1.
    \label{thm4:2}
\end{align}
Applying the definition of $\beta_{11t}$ and $\beta_{12t}$ implies
\begin{align}
    &\sum_{t=p}^{p+T}\left(\beta_{11t}M_2+\beta_{12t}M_3\right) = \sum_{t=p}^{p+T}\left(\beta_{111}\gamma_1^tM_2+\beta_{121}\gamma_2^tM_3\right)\nonumber\\
    =&\beta_{111}M_2\sum_{t=p}^{p+T}\gamma_1^t+\beta_{121}M_3\sum_{t=p}^{p+T}\gamma_2^t
    \leq \frac{\beta_{111}M_2}{1-\gamma_1}+\frac{\beta_{121}M_3}{1-\gamma_2}.\label{thm4:3}
\end{align}
Since $z\in\mathbb{R}^d$, we notice that $\tilde{v}_{2,t}\in\mathbb{R}^{d\times d}$. Applying Lemma \ref{lem4} yields
\begin{align}
    &\sum_{t=p}^{p+T}\left\{\mathop{{}\mathbb{E}}\left[\left\|\left(\sqrt[4]{\hat{v}_{2,t}}\odot\omega_{2,t+1}\right)^T\omega_{1,t+1}-\left(\sqrt[4]{\hat{v}_{2,t}}\odot\omega_{2,*}\right)^T\omega_{1,*} \right\|^2 \right]\right.\nonumber\\
    &\quad\left.-\mathop{{}\mathbb{E}}\left[\left\|\left(\sqrt[4]{\hat{v}_{2,t}}\odot\omega_{2,t}\right)^T\omega_{1,t}-\left(\sqrt[4]{\hat{v}_{2,t}}\odot\omega_{2,*}\right)^T\omega_{1,*} \right\|^2 \right]\right\}\nonumber\\
    =&\sum_{t=p}^{p+T}\left\{\mathop{{}\mathbb{E}}\left[\left\|\left(\sqrt[4]{\tilde{v}_{2,t}}\right)^T\left(\omega_{2,t+1}^T\omega_{1,t+1}-\omega_{2,*}^T\omega_{1,*}\right) \right\|^2 \right]\right.\nonumber\\
    &\quad\left.-\mathop{{}\mathbb{E}}\left[\left\|\left(\sqrt[4]{\tilde{v}_{2,t}}\right)^T\left(\omega_{2,t}^T\omega_{1,t}-\omega_{2,*}^T\omega_{1,*}\right) \right\|^2 \right]\right\}\nonumber\\
    =&\sum_{t=p}^{p+T}\left\{\mathop{{}\mathbb{E}}\left[\sum_{i=1}^d\left[\sqrt{\tilde{v}_{2,t}}\right]_i\left[\omega_{2,t+1}^T\omega_{1,t+1}-\omega_{2,*}^T\omega_{1,*}\right]_i^2  \right]\right.\nonumber\\
    &\quad\left.-\mathop{{}\mathbb{E}}\left[\sum_{i=1}^d\left[\sqrt{\tilde{v}_{2,t}}\right]_i\left[\omega_{2,t}^T\omega_{1,t}-\omega_{2,*}^T\omega_{1,*}\right]_i^2  \right]\right\}\nonumber\\
    =&\mathop{{}\mathbb{E}}\left[\sum_{i=1}^d\left[\sqrt{\tilde{v}_{2,p}}\right]_i\left[\omega_{2,p}^T\omega_{1,p}-\omega_{2,*}^T\omega_{1,*}\right]_i^2\right]+\sum_{t=p+1}^{T+p}\left\{
    \mathop{{}\mathbb{E}}\left[\sum_{i=1}^d\left[\sqrt{\tilde{v}_{2,t}}\right]_i\left[\omega_{2,t}^T\omega_{1,t}-\omega_{2,*}^T\omega_{1,*}\right]_i^2  \right]\right.\nonumber\\
    &\quad\left.-\mathop{{}\mathbb{E}}\left[\sum_{i=1}^d\left[\sqrt{\tilde{v}_{2,t-1}}\right]_i\left[\omega_{2,t}^T\omega_{1,t}-\omega_{2,*}^T\omega_{1,*}\right]_i^2  \right]
    \right\}\nonumber\\
    =&\mathop{{}\mathbb{E}}\left[\sum_{i=1}^d\left[\sqrt{\tilde{v}_{2,p}}\right]_i\left[\omega_{2,p}^T\omega_{1,p}-\omega_{2,*}^T\omega_{1,*}\right]_i^2\right]\nonumber\\
    &\quad+\sum_{t=p+1}^{T+p}\sum_{i=1}^d\left\{
    \mathop{{}\mathbb{E}}\left[\left[\sqrt{\tilde{v}_{2,t}}\right]_i\left[\omega_{2,t}^T\omega_{1,t}-\omega_{2,*}^T\omega_{1,*}\right]_i^2 -\left[\sqrt{\tilde{v}_{2,t-1}}\right]_i\left[\omega_{2,t}^T\omega_{1,t}-\omega_{2,*}^T\omega_{1,*}\right]_i^2  \right]
    \right\}\nonumber\\
    =&\mathop{{}\mathbb{E}}\left[\sum_{i=1}^d\left[\sqrt{\tilde{v}_{2,p}}\right]_i\left[\omega_{2,p}^T\omega_{1,p}-\omega_{2,*}^T\omega_{1,*}\right]_i^2\right]\nonumber\\
    &\quad+
    \sum_{t=p+1}^{T+p}\sum_{i=1}^d
    \mathop{{}\mathbb{E}}\left[\left[\left(\sqrt{\tilde{v}_{2,t}}-\sqrt{\tilde{v}_{2,t-1}}\right)\right]_i\left[\omega_{2,t}^T\omega_{1,t}-\omega_{2,*}^T\omega_{1,*}\right]_i^2\right]\label{thm4:4},
\end{align}
where $\left[\sqrt{\tilde{v}_{2,t}}\right]_i$ represents the $i_{\mathrm{th}}$ element on diagonal in matrix $\tilde{v}_{2,t}$ and $\left[\omega_{2,t}^T\omega_{1,t}-\omega_{2,*}^T\omega_{1,*}\right]_i$ represents the $i_{\mathrm{th}}$ coordinate in vector $\omega_{2,t}^T\omega_{1,t}-\omega_{2,*}^T\omega_{1,*}$. Since $\omega_{1,t},\omega_{2,t},\omega_{1,*}$ and $\omega_{2,*}$ are all bounded for any $t$, e.g. $\left|\left[\omega_{2,t}^T\omega_{1,t}-\omega_{2,*}^T\omega_{1,*}\right]_i^2 \right|\leq W_{\infty}$ and $\tilde{v}_{2,t}\geq \tilde{v}_{2,t-1}$ due to the fact that $\hat{v}_{2,t}\geq \hat{v}_{2,t-1}$, (\ref{thm4:4}) can be further simplified as 
\begin{align}
    &\sum_{t=p}^{p+T}\left\{\mathop{{}\mathbb{E}}\left[\left\|\left(\sqrt[4]{\hat{v}_{2,t}}\odot\omega_{2,t+1}\right)^T\omega_{1,t+1}-\left(\sqrt[4]{\hat{v}_{2,t}}\odot\omega_{2,*}\right)^T\omega_{1,*} \right\|^2 \right]\right.\nonumber\\
    &\quad\left.-\mathop{{}\mathbb{E}}\left[\left\|\left(\sqrt[4]{\hat{v}_{2,t}}\odot\omega_{2,t}\right)^T\omega_{1,t}-\left(\sqrt[4]{\hat{v}_{2,t}}\odot\omega_{2,*}\right)^T\omega_{1,*} \right\|^2 \right]\right\}\nonumber\\
    \leq& W_{\infty}\sum_{i=1}^d\mathop{{}\mathbb{E}}\left[\left[\sqrt{\tilde{v}_{2,p}}\right]_i\right]+W_{\infty}\sum_{t=p+1}^{T+p}\sum_{i=1}^d
    \mathop{{}\mathbb{E}}\left[\left[\left(\sqrt{\tilde{v}_{2,t}}-\sqrt{\tilde{v}_{2,t-1}}\right)\right]_i\right]\nonumber\\
    =&W_{\infty}\sum_{i=1}^d\mathop{{}\mathbb{E}}\left[\left[\sqrt{\tilde{v}_{2,p+T}}\right]_i\right]\label{thm4:5}.
\end{align}
Substituting (\ref{thm4:3}) and (\ref{thm4:5}) in (\ref{thm4:2}) gives
\begin{align}
     \sum_{t=p}^{T+p}\mathop{{}\mathbb{E}}\left[l_t \right]\leq \frac{1}{\eta}W_{\infty}\sum_{i=1}^d\mathop{{}\mathbb{E}}\left[\left[\sqrt{\tilde{v}_{2,p+T}}\right]_i\right] +2\left(\frac{\beta_{111}M_2}{1-\gamma_1}+\frac{\beta_{121}M_3}{1-\gamma_2}\right) +T\eta M_1=\mathcal{O}(\sqrt{T}).\label{thm4:6}
\end{align}
The last equality uses the definition of $\eta=\frac{\eta_1}{\sqrt{T}}$. The desired result in Theorem \ref{thm3} follows directly from (\ref{thm4:6}) since it holds for any $p$.
\end{proof}

\end{document}